\def\VAR{\mathrm{Var}}
\def\Regret{\mathrm{Reg}}
\theoremstyle{plain}
\newtheorem{lemma}{Lemma}
\newtheorem{claim}{Claim}[section]
\newtheorem{corollary}[lemma]{Corollary}
\theoremstyle{definition}
\newtheorem{assumption}{Assumption}
\newtheorem{remark}{Remark}
\newtheorem{example}{Example}
\theoremstyle{plain}
\newtheorem{theorem-rst}[theorem]{Theorem}
\newtheorem{lemma-rst}[lemma]{Lemma}
\newtheorem{proposition-rst}[theorem]{Proposition}
\newtheorem{assumption-rst}[assumption]{Assumption}
\newtheorem{claim-rst}[claim]{Claim}
\newtheorem{corollary-rst}[lemma]{Corollary}
\newenvironment{proofsketch}{%
  \proof}{\endproof}
\DeclarePairedDelimiter\br{(}{)}
\DeclarePairedDelimiter\brs{[}{]}
\DeclarePairedDelimiter\brc{\{}{\}}
\DeclarePairedDelimiter\abs{\lvert}{\rvert}
\DeclarePairedDelimiter\norm{\lVert}{\rVert}
\DeclarePairedDelimiter\ceil{\lceil}{\rceil}
\DeclareMathOperator*{\argmax}{arg\,max}
\newcommand{\E}{\mathbb{E}}
\newcommand{\R}{\mathbb{R}}
\newcommand{\G}{\mathbb{G}}
\newcommand{\N}{\mathbb{N}}
\newcommand{\D}{\mathcal{D}}
\newcommand{\F}{\mathcal{F}}
\newcommand{\Kcal}{\mathcal{K}}
\newcommand{\X}{\mathcal{X}}
\newcommand{\Mcal}{\mathcal{M}}
\newcommand{\Acal}{\mathcal{A}}
\newcommand{\Ocal}{\mathcal{O}}
\newcommand{\Pcal}{\mathcal{P}}
\newcommand{\Scal}{\mathcal{S}}
\newcommand{\Rcal}{\mathcal{R}}
\newcommand{\Ical}{\mathcal{I}}
\newcommand{\Olog}{\tilde{\Ocal}}
\newcommand{\bR}{\boldsymbol{R}}
\newcommand{\bs}{\boldsymbol{s}}
\newcommand{\Brm}{\mathrm{B}}
\newcommand{\MB}{\Mcal^{\mathrm{B}}}
\newcommand{\Rla}{\mathfrak{R}}
\newcommand{\sla}{\mathfrak{s}}
\newcommand{\Jla}{\mathfrak{J}}
\newcommand{\Qla}{\mathfrak{Q}}
\newcommand{\Vla}{\mathfrak{V}}
\newcommand{\Sla}{\mathfrak{S}}
\newcommand{\ts}{\mathfrak{t}}
\newcommand{\Ind}[1]{\mathds{1}\brc*{#1}}
\newenvironment{sizeddisplay}[1]
 {
 \setlength{\parskip}{0pt}
 \par\nopagebreak#1\noindent\ignorespaces}
 {\nopagebreak\ignorespacesafterend
 }
\def\showComments{} 
    \newcommand{\comN}[1]{\textcolor{blue}{\{Nadav: #1\}}}
    \newcommand{\comN}[1]{}
\icmltitlerunning{Reinforcement Learning with Multi-Step Lookahead Information Via Adaptive Batching}
\begin{document}

\twocolumn[
  \icmltitle{Reinforcement Learning with Multi-Step Lookahead Information Via Adaptive Batching}



  \icmlsetsymbol{equal}{*}

  \begin{icmlauthorlist}
    \icmlauthor{Nadav Merlis}{tech}
  \end{icmlauthorlist}

  \icmlaffiliation{tech}{The Faculty of Data and Decision Sciences, Technion -- Israel Institute of Technology, Israel}

  \icmlcorrespondingauthor{Nadav Merlis}{nmerlis@technion.ac.il}


  \vskip 0.3in
]



\printAffiliationsAndNotice{}  

\begin{abstract}
  We study tabular reinforcement learning problems with multiple steps of lookahead information. Before acting, the learner observes $\ell$ steps of future transition and reward realizations: the exact state the agent would reach and the rewards it would collect under any possible course of action.  While it has been shown that such information can drastically boost the value, finding the optimal policy is NP-hard, and it is common to apply one of two tractable heuristics: processing the lookahead in chunks of predefined sizes ('fixed batching policies'), and model predictive control. We first illustrate the problems with these two approaches and propose utilizing the lookahead in adaptive (state-dependent) batches; we refer to such policies as adaptive batching policies (ABPs). We derive the optimal Bellman equations for these strategies and design an optimistic regret-minimizing algorithm that enables learning the optimal ABP when interacting with unknown environments. Our regret bounds are order-optimal up to a potential factor of the lookahead horizon $\ell$, which can usually be considered a small constant.
\end{abstract}

\section{Introduction}
In reinforcement learning (RL), an agent sequentially interacts with potentially unknown environments, gaining rewards or penalties for its interaction \citep{sutton2018reinforcement}. The instantaneous rewards are influenced by the agent's action, as well as by the state of the environment, which in turn changes due to the agent's behavior. As such, to maximize the cumulative rewards, agents should not only maximize their immediate gain, but also take into consideration how their actions affect the environment. In the standard model, the agent relies on the current state of the environment to pick an action, and then observes its outcome: the reward it earns and the new state of the environment.

We study problems in which agents obtain additional valuable information before acting, in the form of \emph{lookahead} information. That is, agents get to observe the \emph{realized outcomes} for all their potential actions $\ell$ steps into the future: both where it would transition to and what rewards it would collect. For example, in navigation, traffic information predicts with high certainty which roads are available and how long it would take to traverse them; in packing problems, the agent often observes the next few items to be packed; in resource management, nearby supply and demand can be accurately estimated; and so on. This type of information has been shown to greatly boost the potential reward that the agent may collect: by a factor of at most $\approx A^{\ell}$ for reward information \citep{merlis2024value}, and of $\approx A^{\Omega(H)}$ (where $H$ is the interaction length) with even one step of transition information \citep{merlis2024reinforcement}. However, correctly utilizing this information is tricky due to its exponential nature. Specifically, while one can efficiently plan and learn with $1$-step lookahead information \citep{merlis2024reinforcement}, it has been proven that optimal planning with $2$-step lookahead is NP-hard \citep{pla2025hardness}.

In this work, we propose a family of tractable policies for RL problems with $\ell$-step lookahead information that we term adaptive batching policies (ABPs). Specifically, at each step, these policies decide how many steps to look into the future, depending also on the current state of the environment, and utilize this information to its end before asking for new information. Such policies clearly improve agents that process information in predefined batches -- which we show can catastrophically fail even in extremely simple environments -- but are still tractable to optimize via dynamic programming; this is in contrast to model predictive controllers, for which we prove that dynamic programming schemes can be exponentially suboptimal. Specifically, we derive the optimal Bellman equations for ABPs and explain how to efficiently find the optimal policy. We then show that the optimal ABP can also be learned while interacting with an unknown environment in episodes and prove near-tight regret bounds (up to factors of $\ell$, which is usually a small constant).

\section{Related Work}
We first emphasize the important distinction between obtaining lookahead information and performing lookahead planning. In control, lookahead often refers to obtaining additional knowledge on the future behavior of the system in the near future: the exact/approximate dynamics, the noise realization in the system, and more. This is the type of lookahead we study in this paper, and we refer to it as \emph{lookahead information}. This information can also be viewed as a \emph{prediction}. For example, in linear control with quadratic costs, and when lookahead reveals zero-mean additive i.i.d. noise, it is well known that model predictive control \citep[MPC,][]{camacho2007model} strategies are optimal \citep[e.g., ][]{yu2020power}. In particular, previous works have studied the optimality of MPC-like strategies in the presence of lookahead information in various settings, including linear and nonlinear control systems, with stochastic or adversarial state perturbations, inexact information, and more \citep{yu2020power, li2019online,zhang2021regret, lin2021perturbation,lin2022bounded}. 

In contrast, in the context of RL and/or Markov decision processes (MDPs), the term lookahead usually refers to propagating the model for multiple steps and/or sampling trajectories of a certain length as part of a planning or value-learning procedure \citep{tamar2017learning,efroni2020online,moerland2020think,rosenberg2023planning,chung2023thinker}. Specifically, this procedure does not incorporate any new information about the future behavior of the system in the current interaction, but aims to provide more efficient means to calculate the optimal Markov policy. When working with lookahead information, it may be useful to adapt such approaches and perform multi-step planning as a subroutine, but this line of work studies settings that are fundamentally different from ours.

In tabular RL with lookahead information, \citep{merlis2024value} analyzed how the optimal value increases due to multiple steps of future reward information, as a function of the lookahead range and properties of the environment. However, they did not propose any algorithm for interaction. 
\citet{merlis2024reinforcement} then studied problems in which agents observe either the immediate rewards or transitions before acting (one-step reward/transition lookahead). They derived the optimal Bellman equations and proposed learning algorithms with order-optimal regret bounds. We show even better rates with combined one-step reward and transition lookahead information and extend both the planning and learning beyond one-step lookahead. 

For multi-step lookahead, \citet{pla2025hardness} proved that even with two steps of transition lookahead, planning is NP-hard in discounted/average MDPs (and they also provided a polynomial-time algorithm for one-step transition lookahead). Consequently, we do not aim to find the optimal lookahead policy, but rather limit ourselves to adaptive batching policies, which can be efficiently calculated. In \citep{zhang2025predictive}, the authors study the performance of MPC strategies in nonstationary environments where the agent obtains noisy multi-step predictions on the expected rewards and transition kernels. They show that given a sufficient lookahead horizon and mixing, MPC will converge to the optimal agent. We study a different setting (stochastic environments where we observe future realizations) in different regimes (no mixing assumptions), so the works are not directly comparable. In particular, we demonstrate how MPC can critically fail in our setting. In addition, a recent work \citep{lu2025reinforcement} studies an offline learning problem in which, given access to a generative model, the goal is to estimate the optimal policy with multi-step noisy transition lookahead information. While there are similarities in the underlying model, we study orthogonal aspects of the problem. For instance, they allow noisy transition lookahead observations, while we study perfect predictions, but also include reward lookahead. Moreover, they assume that information arrives at predefined timesteps $1,\ell+1,2\ell+1\dots$; we explain why this can cause extremely suboptimal behavior and instead let the agent adaptively decide its lookahead range. Finally, both the interaction protocol and objectives are very different. We focus on learning during the interaction and regret minimization, balancing exploration and exploitation. In contrast, \citep{lu2025reinforcement} assumes access to samples of an equal (predefined) size from all state-action pairs and lookahead observations, and attempts to output a near-optimal policy using this data.

We end this section by mentioning a small selection from the numerous works on regret analysis in tabular MDPs, including (but not limited to) \citep{jaksch2010near,azar2017minimax,jin2018q,dann2019policy,zanette2019tighter,efroni2019tight,efroni2021confidence,simchowitz2019non,zhang2021reinforcement,zhang2024settling,merlis2024reinforcement}. Specifically, we utilize the monotone bonus properties from \citep{zhang2021reinforcement,zhang2024settling,merlis2024reinforcement} and some of the techniques in \citep{efroni2021confidence,merlis2024reinforcement}.

\section{Setting}
An episodic tabular Markov decision process (MDP) is defined by the tuple $\Mcal=\br*{\Scal,\Acal,\Rcal,P,H}$, where $\Scal$ and $\Acal$ are the state and action spaces (of respective sizes $S,A$), $\Rcal$ is the reward distribution, $P$ is the transition kernel and $H$ is the interaction horizon. At step $h$, given that the agent is in state $s_h=s$ and plays an action $a_h=a$, the agent obtains a reward $R_h\sim \Rcal_h(s,a)$, bounded in $[0,1]$, and transitions to the state $s_{h+1}=s'$ with probability $P_h(s'\vert s,a)$. The rewards and transitions are assumed to be mutually independent between timesteps, but can be arbitrarily correlated between states and actions at a given step $h$. A policy $\pi$ prescribes which action the agent takes during the interaction. A deterministic Markov policy $\pi\in\Pi_{det}$ maps each state $s$ and step $h$ to an action $\pi_h(s)$. Policies are measured by their value -- their expected cumulative rewards
$V^\pi_1(s) = \E\brs*{\sum_{t=1}^H R_t\vert s_1=s, \pi},$ where the expectation is over the randomness of the environment and the policy.

\paragraph{Lookahead information.} $\ell$-step lookahead information consists of all realized rewards and transition information accessible in $\ell$ steps of interaction. Formally, given a deterministic policy $\phi\in\Pi_{det}$,\footnote{We emphasize the distinction between a global policy for the entire interaction and a local policy for the duration of the lookahead by denoting the former with $\pi$ and the latter with $\phi$.} we define a trajectory realization starting at $s_h=s$ and ending at $t$ when following $a_h=\phi_h(s_h)$ as 
\begin{align*}
    \tau_{h:t}^\phi(s)=\brc*{s_h,a_h,R_h,\dots s_{t-1},a_{t-1},R_{t-1},s_t \vert s_h=s,\phi}.
\end{align*}
The $\ell$-step lookahead information contains the \emph{realized} trajectories from step $h$ to step $h+\ell$ for all possible policies, that is, $I_{h,\ell}(s) = \brc*{\tau_{h:h+\ell}^\phi(s)}_{\phi\in\Pi_{det}}$. We remark that we implicitly assume consistency between realized trajectories: if, in two different trajectories, an agent visits a state $s_t$ and plays $a_t$, then it will necessarily observe the same reward  $R_t$ and next state $s_{t+1}$ in both trajectories. We use $\Ical_{h,\ell}(s)$ to denote the distribution of lookahead information sampled starting at step $h$ and state $s$, independently from all history up to step $h$. When clear for the context, we omit the lookahead range $\ell$ and/or the state $s$ and, e.g., write $I_h$ or $\Ical_h(s)$. Near the end of the episode, when we have fewer than $\ell$ remaining steps, the effective lookahead is smaller; we denote the effective lookahead range by $\ell_h=\min\brc*{\ell,H-h+1}$.

We also denote the reward and state at step $t$ given lookahead information at $s_h=s$ and a policy $\phi\in\Pi_{det}$ as $\Rla_{t\vert h}(s,\phi,I)$ and $\sla_{t\vert h}(s,\phi,I)$. Specifically for one-step lookahead, the policy is a fixed action, so we can also write $\Rla_{h\vert h}(s,a,I)$ and $\sla_{h+1\vert h}(s,a,I)$. Finally, for brevity, when the lookahead information $I$ is clear from the context, we sometimes omit it and write, for example, $\Rla_{t\vert h}(s,\phi)$. We describe in depth one potential model for lookahead information generation in \Cref{appendix: data generation} (though we recommend reading it only after the formal introduction of ABPs in \Cref{section:ABPs}). 

\subsection{Lookahead Policies}

An $\ell$-step lookahead policy observes lookahead information at every timestep and decides which action to take based on both the current state $s$ and all future trajectories of length $\ell$, that is, $a_h=\pi_h(s_h, I_{h,\ell}(s_h))$. Importantly, $I$ contains an exponential amount of information, and so even with $2$ steps of lookahead, finding the optimal policy is NP-hard \citep{pla2025hardness}. 

One core reason for this hardness is the temporal correlations that lookahead information induces. Specifically, the arrival to state $s_h=s$ may be affected by previous lookahead information that overlaps with $I_{h,\ell}(s)$. 
\begin{example}
    Assume that starting from $s_{init}$, we can move to either $s_1$ or $s_2$. In $s_1$, there is a Bernoulli reward $R(s_1)\sim Ber(p)$, but assume that a $2$-step lookahead agent $\pi^{L}$ decides to go to $s_1$ only if $R(s_1)=1$ (and otherwise goes to $s_2$). Therefore, even though for any non-lookahead policy $\pi\in\Pi_{det}$, it holds that $E[R(s_1)\vert s_1,\pi]=p$, with the lookahead policy $\pi^L$, we know that $E[R(s_1)\vert s_1,\pi]=1$. Therefore, the distribution of the lookahead information from state $s_1$, $\Ical_{2,2}(s_1)$ is `distorted' by the policy $\pi^L$.
\end{example}
Consequently, given $s_h$, lookahead information is not sampled `independently' from the distributions $\Rcal,P$, but also has a complex dependence on $s_h$. One common way to decorrelate the lookahead from the state is to process the information in \emph{batches}. For example, in \citep{lu2025reinforcement}, $\ell$-step transition information is observed only at discrete time points $1, \ell+1, 2\ell+1,\dots$, and until each information chunk is exhausted, no new lookahead information is gained. Then, the state in which new lookahead information is gained $s_{n\ell+1}$ only depends on information ending at step $n\ell+1$, and is not affected in any way by the information in  $I_{n\ell+1,\ell}(s_{n\ell+1})$. 
Therefore, when following such policies, we can assume that the lookahead information is sampled independently from the environment model. More generally, one might decide to process information at predefined intervals of size $B\le \ell$, so that at time steps $h\in\brc*{nB+1,\dots,(n+1)B}$, the policy only depends on information at step $nB+1$: $a_h=\pi_h(s,I_{nB+1,B}(s_{nB+1}))$. We refer to such policies as \emph{fixed batching policies} and call $B$ the batching horizon or the lookahead range. Unfortunately, there are environments in which all such policies are exponentially suboptimal.
\begin{restatable}{claim-rst}{failureFixedBatch}
\label{claim: failure fixed batching}
    For any $\ell\ge2, A\ge2,H\ge \ell+1$, there exists an environment with $S\le 1+A^{\ell}$ and deterministic transitions such that for some fixed initial state, any fixed batching policy collects a value of at most $A^{-\ell/2+1}$ in expectation. In contrast, the optimal $\ell$-step lookahead value is larger than $1/2$.
\end{restatable}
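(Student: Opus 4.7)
The plan is to construct a concrete MDP on $1+A^\ell$ states that places the only nonzero reward at step $\ell+1$, in a form that an adaptive $\ell$-step lookahead policy discovers at step $2$ but every fixed batching policy must commit against essentially blind. Take $\Scal=\{s_{\mathrm{init}}\}\cup\{0,1,\dots,A^\ell-1\}$, $\Acal=\{0,\dots,A-1\}$, and deterministic transitions that encode the action history: from $s_{\mathrm{init}}$, action $a_1$ moves to $a_1 A^{\ell-1}$; from a state $s$ at any step $h\in\{2,\dots,\ell\}$, action $a_h$ moves to $s+a_h A^{\ell-h}$; transitions at step $\ell+1$ and beyond are arbitrary. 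Then each action sequence $(a_1,\dots,a_\ell)$ lands at a unique $s_{\ell+1}=\sum_k a_k A^{\ell-k}\in\{0,\dots,A^\ell-1\}$, so $|\Scal|=1+A^\ell$. All rewards are $0$ except at step $\ell+1$, where I set $R_{\ell+1}(s,a)=b(s)$ for independent bits $b(s)\sim\mathrm{Ber}(p)$ drawn once per episode with $p=A^{1-\ell}$.

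For the lookahead lower bound, I would analyze the specific adaptive policy that plays $a_1=0$ and then, at each step $h\ge 2$, uses its $\ell$-step lookahead to pick $a_h$ toward any lucky $s^\star$ (i.e., $b(s^\star)=1$) in the subtree of $s_h$ at step $\ell+1$. Starting from $s_2=0$, the reachable subtree is $\{0,\dots,A^{\ell-1}-1\}$, on which the $A^{\ell-1}$ bits are i.i.d.\ $\mathrm{Ber}(p)$, so the probability of ending at a lucky state is $1-(1-A^{-(\ell-1)})^{A^{\ell-1}}\ge 1-1/e>1/2$ by the standard estimate $(1-1/n)^n\le 1/e$ applied to $n=A^{\ell-1}\ge 2$.

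For the batching upper bound, fix any $B\in[1,\ell]$ and any fixed batching policy. Since only $R_{\ell+1}$ is nonzero, the expected return equals $\E[b(s_{\ell+1})]$. Let $k^\star=\lceil(\ell+1)/B\rceil$ and $r=\ell-(k^\star-1)B\ge 0$. The lookahead windows of batches $1,\dots,k^\star-1$ end by step $(k^\star-1)B\le\ell$, so they are statistically independent of $b$; hence so is the state $s_{(k^\star-1)B+1}$ at which batch $k^\star$ begins. Within batch $k^\star$, only $r$ of its actions precede step $\ell+1$, so the deterministic transitions make at most $A^r$ distinct $s_{\ell+1}$ reachable from $s_{(k^\star-1)B+1}$. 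Since $\{b(s)\}$ on any fixed set of size $A^r$ are i.i.d.\ $\mathrm{Ber}(p)$, Bernoulli's inequality gives
\begin{equation*}
\E[b(s_{\ell+1})]\le 1-(1-p)^{A^r}\le p A^r = A^{r-\ell+1}.
\end{equation*}
A two-case maximization ($B\le\ell/2$ forces $r\le B-1\le\ell/2-1$; $B>\ell/2$ forces $k^\star=2$ and $r=\ell-B$, maximized at the smallest admissible $B=\lfloor\ell/2\rfloor+1$) yields $\max_{B\in[1,\ell]} r=\lceil\ell/2\rceil-1$, so the value is at most $A^{\lceil\ell/2\rceil-\ell}\le A^{-\ell/2+1}$.

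The only slightly delicate step will be the combinatorial maximization $\max_{B\in[1,\ell]}(\ell-(\lceil(\ell+1)/B\rceil-1)B)=\lceil\ell/2\rceil-1$ together with the observation that the lookahead windows of the first $k^\star-1$ batches reveal nothing about $b$; once these are in hand, the rest follows immediately from the independence of the Bernoulli prizes across states and the Bernoulli inequality.
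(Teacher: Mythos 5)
Your proposal is correct and follows essentially the same route as the paper's proof: an $A$-ary tree of depth $\ell$ with sparse i.i.d.\ Bernoulli prizes revealed only at step $\ell+1$, which a lookahead policy observes from the root at step $2$ and navigates to (value $\ge 1-1/e$), while any fixed batching schedule either starts its reward-revealing batch too deep in the tree or sees too few reachable prizes. The only differences are cosmetic — you encode the tree arithmetically into $\{0,\dots,A^\ell-1\}$ with state-dependent rather than action-dependent rewards, and you unify the paper's two cases via the offset $r=\ell \bmod B$ before splitting on $B\lessgtr \ell/2$ anyway.
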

The proof can be found in \Cref{appendix: failure fixed batching}. The environment that causes this behavior is extremely simple: an initial state that deterministically leads to a tree of depth $\ell-1$ with Bernoulli rewards at its leaves. An optimal lookahead policy will go to the root of the tree and navigate to one of the few leaves whose reward is realized to be $R=1$ (if any exist). Yet, we show that for any choice of batching horizon $B$, we observe the rewards when already being deep inside the tree -- we lose most of the navigation power of the agent.

Another prominent candidate designed to handle lookahead information is model predictive control (MPC). The key principle in such controllers is to replace the expected rewards/transition kernel inside the lookahead range by the lookahead realization and plan until the end of the lookahead (with some appropriate value after the lookahead ends). Then, the MPC only plays the first action in the solution, obtains the updated lookahead information (with one additional step), and repeats the entire planning process to incorporate the new information. The most critical issue in these controllers is the value computation; we show that any backwards-induced value calculation necessarily fails exponentially in some environments; i.e., letting the value only depend on future events can be extremely suboptimal:
\begin{claim}[informal]
\label{claim: failure MPC informal}
    For any $\ell\ge2,A\ge2,H\ge\ell+1$ and any MPC agent whose values are calculated via backward induction, there exists an environment with $S\leq A^\ell+\ell+1$ states and deterministic transitions in which the ratio between the value of the MPC agent and the optimal value is less than $A^{1-\ell/2}$. On the other hand, there exist optimal MPC agents in all constructed environments.
\end{claim}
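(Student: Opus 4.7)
The plan is to exploit the core weakness of backward-induced MPC: any terminal value at the window boundary computed by backward induction on the base MDP cannot anticipate the benefit of lookahead that will be observed \emph{after} re-planning. Concretely, I construct an MDP with an initial state $s_1$ offering two actions: a \emph{steady} action leading to a single-action chain of length $H=\ell+1$ with constant reward $r$ per step; and a \emph{treasure} action that deterministically transitions to the root of a tree of depth $\ell$ (root at step $2$) with branching factor $A$, whose leaves carry $\mathrm{Ber}(p)$ rewards realized only at the transition of step $\ell+1$. Setting $p=A^{-\ell}$ and $r=2p/H$, any backward-induced MPC at $s_1$ evaluates \emph{treasure} as zero visible window rewards plus a uniform base-MDP terminal value $p$ (since base-MDP policies gain no lookahead advantage and every reached depth-$(\ell-1)$ tree node has expected reward $p$), and \emph{steady} as $r\ell$ visible rewards plus terminal $r(H-\ell)$, totalling $rH=2p>p$. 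Thus MPC commits to \emph{steady} and collects exactly $rH=2p$.

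The optimal lookahead policy instead picks \emph{treasure}: once it reaches the tree root at step $2$, its fresh $\ell$-step lookahead exposes all $A^\ell$ leaf rewards, and it navigates to a rewarding leaf with probability $1-(1-p)^{A^\ell}\approx 1-e^{-1}>1/2$. The ratio between MPC and optimal values is therefore $O(A^{-\ell})$, strictly less than $A^{1-\ell/2}$ for every $\ell\ge 2$. The state count is at most $1+(\ell+1)+O(A^\ell)$, which can be squeezed into $A^\ell+\ell+1$ by carefully collapsing intermediate tree nodes at each depth that share both their base-MDP value and their reachable-leaf structure.

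For the second part of the claim, consider an MPC whose window-boundary terminal values equal the optimal ABP value (tractable via the paper's main contribution) rather than the base-MDP value. Such an MPC correctly evaluates \emph{treasure} at its ABP worth $\Theta(1)\gg rH$, selects it at $s_1$, and then at step $2$ performs standard backward induction within the window to locate the best rewarding leaf now that leaves are visible; thus it matches the optimal lookahead policy on the constructed environment. The main obstacles are (i) counting states tightly enough to fit $A^\ell+\ell+1$, which requires identifying the depth-wise tree nodes whose backward-induced value coincides, and (ii) formalizing ``backward-induced values'' precisely enough to separate naive MPCs (which fail) from ABP-informed ones (which succeed); the natural convention is that backward-induced terminal values are a function only of the base MDP structure and not of any post-window lookahead distribution.
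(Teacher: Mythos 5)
There is a genuine gap in the quantifier handling. The claim is: \emph{for every} MPC agent whose boundary values are computed by backward induction, \emph{there exists} an environment on which it fails. The environment must therefore be constructed adversarially \emph{as a function of the given value scheme}. Your proof instead fixes a single environment and asserts that the terminal value at the window boundary equals the base-MDP (no-lookahead) value $p$ at every tree node. That is one particular backward-induced scheme, not all of them. A backward-induced scheme is free to assign to a leaf with $A$ independent $\mathrm{Ber}(p)$ action-rewards the value $\E[\max_a R(a)]=1-(1-p)^A\approx Ap$ (the one-step-lookahead value of that state), or indeed any function of the local reward distribution; with such an assignment your MPC compares $2p$ against roughly $Ap$, chooses \emph{treasure}, and is near-optimal on your environment. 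Your proposed convention at the end ("terminal values are a function only of the base MDP structure") does not exclude this, since $1-(1-p)^A$ is such a function. The paper closes this gap by exploiting that a backward-induced value of a semi-terminal state can depend only on its \emph{local} reward distribution and the timestep: it builds two reward gadgets $s^B$ (i.i.d. $\mathrm{Ber}(A^{-\ell})$ actions) and $s^D$ (deterministic reward $A^{-\ell/2}$), notes that the scheme must commit to two fixed numbers $V^B,V^D$ \emph{before} the adversary decides where to place the gadgets, and then swaps which gadget sits at the tree leaves versus the end of the line according to the sign of $V^B-V^D$ (with a small initial reward to break the tie when $V^B=V^D$). The crux your argument misses is that the \emph{same} Bernoulli gadget is worth $\approx 1-1/e$ when replicated at all navigable tree leaves but only $\approx A^{1-\ell}$ at the end of a line, while backward induction is forced to price both placements identically.

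Two smaller points. First, the state count needs no "collapsing": a complete $A$-ary tree of depth $\ell$ has $\frac{A^\ell-1}{A-1}\le A^\ell-1$ internal-plus-leaf nodes for $A\ge2$, so the budget $A^\ell+\ell+1$ is met directly. Second, your argument for the existence of an optimal MPC (plugging optimal ABP values at the boundary) only works on your specific environment; since the ABP value of a semi-terminal state is again a function of its local reward distribution, that scheme is itself backward-induced and is defeated by the paper's adversarial placement. The correct reading of the second part of the claim is the weaker one the paper proves: for each \emph{constructed} environment there is \emph{some} (environment-tailored, not backward-induced) value assignment, e.g.\ unit value at the tree leaves and zero at the end of the line, under which the MPC is optimal.
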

Specifically, most standard procedures for calculating MPC agents are based on backward induction, so we cannot hope for them to converge to a near-optimal MPC agent even in planning scenarios (where reward and transition distributions are known). 
Due to space limitations, we delay the exact formulation of the controller, the assumptions and the proof to \Cref{appendix: failure MPC}, and only provide intuition.
\begin{proofsketch}
The environment we consider starts at a fixed initial state and allows the agent to choose one of two paths: the first leads to a chain of states (`a line') and the second to a full tree, both of depth $\ell-1$. Rewards are located both at the last state of the line and in all the leaves of the tree, and all these states lead to a terminal non-rewarding state. The agent gets to see the rewards only after deciding whether to traverse the tree or follow the line.

One important observation is that if values are calculated via backward induction, all states that deterministically lead to a terminal non-rewarding state should get a value assignment that only depends on the reward distribution of their actions (and potentially the timestep). We call such states semi-terminal states; notice that both the leaves of the tree and the state at the end of the line are such states. We then propose two candidates for the reward distribution in the semi-terminal states. The first candidate $s^D$ deterministically assigns a reward of $A^{-\ell/2}$ to all actions, while the second $s^B$ independently assigns a reward $Ber(A^{-\ell})$ to all actions. Notably, if $s^B$-like states are located at all leaf states, the agent can actively navigate to realized unit rewards to collect a constant value in expectation, but at the end of the line, $s^B$ can yield a value of at most $A^{-\ell+1}$. However, the value calculation procedure will assign the same value for $s^B$ in both cases: we use this to trick the algorithm to always choose the line, regardless of how it allocates the values, while ensuring that the value of the tree is always higher by a factor of $\approx A^{-\ell/2}$.   
\end{proofsketch}

\begin{algorithm}[tbp]
\caption{Adaptive Batching Policies (ABPs)}\label{alg:adaptive batching}
\begin{algorithmic}[1]
\STATE  \textbf{Input}: Horizon $H$, Lookahead range $\ell$, Initial state $s_1$
\STATE \textbf{Initialize}: $h=1$ (or $\ts^\pi_1=1, i=1$)
\WHILE{$h\leq H$}
    \STATE Pick $B_h=B_h^{\pi}(s_h)\in \brc*{1,\dots,\ell_h}$ 
    \STATE Observe $I=I_{h,B_h}(s_h)$ and set $\phi^h=\phi^h(s_h,I;\pi)$
    \FOR{$t=h,\dots,h+B_h-1$} 
    \STATE Play $a_t = \phi_{t}^h(s_t)$
    \STATE Collect reward $R_t=\Rla_{t\vert h}(s_h,\phi,I)$ and transition to $s_{t+1}=\sla_{t+1\vert h}(s_h,\phi,I)$ in accordance with $I$
    \ENDFOR
    \STATE $h \leftarrow h+B_h$ (or $\ts^\pi_{i+1} \leftarrow \ts^\pi_i+B_{\ts^\pi_i},i \leftarrow i+1$)
\ENDWHILE
\end{algorithmic}
\end{algorithm}

\section{Adaptive batching policies (ABPs)}  
\label{section:ABPs}

\Cref{claim: failure fixed batching} demonstrates an important failure point in fixed batching policies: there might be critical times or states in which the lookahead information is critical, while others in which it has little benefit. In the example, the critical state is the root of the tree, while lookahead in the initial state has no significance. Fixed batching cannot adapt to such timing or states, even in simple environments, and this limitation will be even more severe in complex stochastic environments (e.g., stochastic delays). To overcome this, we propose working with a wider class of \emph{adaptive batching policies} (ABPs). Specifically, we allow the lookahead range to depend on the state and timestep. Starting at $s_1$, we pick a batching horizon of $B_1=B_1(s_1)\leq \ell$ and gain $B_1$-step lookahead information for this state. Then, we play for $B_1$ steps and reach state $s_{B_1+1}$. From there, we pick a new batching horizon $B_{B_1+1}=B_{B_1+1}(s_{B_1+1})$, act for $B_{B_1+1}$ steps, and so forth.

More formally, an ABP $\pi$ is specified by two mappings:
\begin{enumerate}[topsep=0pt,itemsep=-.5ex,partopsep=1ex,parsep=1ex]
    \item  Batching horizons $B_h=B^\pi_h(s)\in [\ell_h]$ that specify the utilized lookahead range when starting a new batch in step $h$ and state $s$.
    \item Deterministic policies $\phi^h=\phi^h(s,I;\pi)\in\Pi_{det}$ that determine how the agent will act from step $h$ to step $h+B_h-1$ given the lookahead information. 
\end{enumerate}
Notably, when the transitions are stochastic, the starting times of new batches may become random; we denote by $\ts_i^\pi$ the (random) step in which a policy $\pi$ started its $i^{th}$ batch, and by $E^n_h$, the event in which some batch started at step $h$. In particular, we always start a new batch at the first step, and so $\ts^{\pi}_1=1$ and $E^n_1$ always holds. Then, in step $h=\ts^{\pi}_i$, the ABP picks $B_h=B_h^{\pi}(s_h)$ and observes $I_h=I_{h,B_h}(s_h)$ (and \emph{not} $I_{h,\ell}(s_h)$\footnote{We discuss the implications of this feedback in \Cref{section: learning}.}). The ABP plays $\phi^h$ for $B_h$ steps, ending in $s_{h+B_h}=\sla_{h+B_h\vert h}(s_h,\phi^h,I_h)$, and the next batch then starts at $\ts^{\pi}_{i+1}=\ts^{\pi}_i+B_h$ in state $s_{\ts^{\pi}_{i+1}}=s_{h+B_h}$. The process continues until $H$ steps have been played. For clarity, we also describe the interaction protocol in \Cref{alg:adaptive batching}. We denote the set of policies that abide by this protocol as $\Pi_B$.
\begin{remark}
    Since the lookahead contains perfect information on the transition realization, a deterministic policy is completely equivalent to pre-choosing a sequence of $B_h$ actions (and this is how transition lookahead is defined in \citealt{pla2025hardness}). Nonetheless, we still define the problem with Markov policies to facilitate future extensions to imperfect lookahead information. 
\end{remark}
Going back to the example of \Cref{claim: failure fixed batching}, even if the delay from $s_0$ to the root of the tree $s_r$ is stochastic, an ABP could look one step ahead at a time by setting $B_h^\pi(s_0)=1$, waiting to reach $s_r$, and only then utilize the full lookahead ($B_h^\pi(s_r)=\ell$) to optimally collect the rewards at the leaves. Similarly, it is also easy to verify that an ABP can optimally solve the tree-and-line example in \Cref{claim: failure MPC informal}; the optimal solution will choose $B_1=1$ and then continue until the end with $B_2=\ell$.

\paragraph{Values and performance criterion.} We define the value of an ABP $\pi$ upon starting a new batch at step $h$ and state $s$ to be 
$$V^\pi_h(s) = \E\brs*{\sum_{t=h}^H R_t\vert s_h=s,E^n_h, \pi},$$ 
and the optimal value of ABPs $V_1^{*}(s) = \max_{\pi\in\Pi_B}V_1^\pi(s)$. Importantly, since a new batch starts at step $h$, there is no overlap between lookahead information obtained before step $h$ and any event after step $h$ (except for $s_h$, on which we condition). Therefore, the value will be the same regardless of the path of the agent on its way to a new batch at $s_h=s$.

In learning scenarios, an agent interacts with the environment for $K$ episodes, aiming to compete with the best ABP. Denoting by $\pi^k$ the policy of the agent in episode $k$, we measure agents by their regret, defined as the cumulative difference between the value of an optimal ABP and their expected cumulative reward
\begin{align*}
    \Regret(K) = \sum_{k=1}^K\br*{V_1^{*}(s_1^k)-V_1^{\pi^k}(s_1^k)}.
\end{align*}
$s_1^k$ is the initial state in episode $k$, and we allow it to be arbitrarily chosen by an adversary. 

\subsection{Optimal Planning}
We start with the case where the model of the environment is perfectly known, with the goal of determining the optimal ABP and its value. 
In \Cref{prop: opt values}, we derive the dynamic programming equations that characterize the optimal value of ABPs and yield the optimal policies. Let $\Qla^*_h(s,B;I,V)$ be the optimal reward gained starting from step $h$ and state $s$, given $B$-step lookahead $I$ and assuming a value $V\in\R^{\Scal}$ at the end of the lookahead range
\begin{sizeddisplay}{\small}
\begin{align}
    &\Qla^*_h(s,B;I,V)\nonumber \\
    &= \max_{\phi\in\Pi_{det}}\brc*{\sum_{t=h}^{h+B-1}\Rla_{t\vert h}(s,\phi,I) +V(\sla_{h+B\vert h}(s,\phi,I))}. \label{eq: Q-val def}
\end{align}
\end{sizeddisplay}
Then, the following holds:
\begin{restatable}{proposition-rst}{optimalValue}
    \label{prop: opt values}
    There exists an optimal adaptive batching policy $\pi^{*}$ that maximizes the value $V^{\pi}_1(s)$ simultaneously for all $s\in\Scal$. For any $h\in[H]$ and $s\in\Scal$, the optimal values are given by the dynamic programming equation
    \begin{align}
        & V^{*}_h(s) = \max_{B\in[\ell_h]}\E_{I\sim\Ical_{h,B}(s)}\brs*{\Qla^*_h(s,B;I,V^{*}_{h+B})}, \label{eq: DP val}
    \end{align}
    where $V^{*}_{H+1}(s) = 0$ for all $s\in\Scal$ and the expectation samples fresh lookahead information independently between timesteps. Moreover, an optimal policy chooses lookahead ranges according to
    \begin{align}
        B_h^*(s) \in \argmax_{B\in[\ell_h]}\E_{I\sim\Ical_{h,B}(s)}\brs*{\Qla^*_h(s,B;I,V^{*}_{h+B})}. \label{eq: DP range}
    \end{align}
    Finally, when starting a new batch at step $h$ and state $s$ of length $B_h^*=B_h^*(s)$ and with lookahead information $I=I_{h,B_h^*}(s)$, it is optimal to play the deterministic policy 
    \begin{align}
         \phi^{*,h} \in &\argmax_{\phi\in\Pi_{det}}\biggl\{
         \sum_{b=0}^{B_h^*-1}\Rla_{h+b\vert h}(s,\phi,I) \biggr. \nonumber\\ 
         &\hspace{4.5em} \bigr.+ V^{*}_{h+B_h^*}\br*{\sla_{h+B_h^*\vert h}(s,\phi,I)}\biggl\}.  \label{eq: DP policy}
    \end{align}
\end{restatable}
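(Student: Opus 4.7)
The plan is to proceed by backward induction on the step $h$ at which a new batch starts, simultaneously establishing \eqref{eq: DP val}, the optimality of the greedy choices in \eqref{eq: DP range}--\eqref{eq: DP policy}, and the existence of a single $\pi^{*} \in \Pi_B$ attaining $V^{*}_h(s)$ for every $s$. The base case at $h = H+1$ is trivial: no steps remain, hence $V^\pi_{H+1}(s) = 0$ for every $\pi$ and $s$.

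The crucial ingredient making the induction work is the following independence fact: whenever an ABP $\pi \in \Pi_B$ starts a new batch at step $h$ in state $s$ (so $\ts^\pi_i = h$ and $s_h = s$ for some $i$), the freshly observed lookahead $I_{h,B}(s)$ has distribution $\Ical_{h,B}(s)$ and is independent of the entire history prior to step $h$ as well as of every previously gathered lookahead window. This follows because the preceding batch concluded exactly at step $h$, so no earlier lookahead window overlaps with steps $h, \dots, h+B-1$; hence all rewards and transitions making up $I_{h,B}(s)$ are drawn freshly from the per-step distributions $\Rcal_t$ and $P_t$ using only $s_h = s$, which by the Markov property is independent of the past given the current state. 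This is precisely the property that fails in the motivating example where $s_1$ is reached only on realizations with $R(s_1) = 1$; the ABP protocol is designed so that conditioning on $E^n_h \cap \{s_h = s\}$ restores the unconditional distribution $\Ical_{h,B}(s)$.

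Given this independence, the inductive step proceeds as follows. Fix $h,s$ and any $\pi \in \Pi_B$ that starts a batch at $(h,s)$ with $B = B^\pi_h(s)$ and local policy $\phi = \phi^h(s, I; \pi)$. By the tower property,
\begin{align*}
V^\pi_h(s) &= \E_{I \sim \Ical_{h,B}(s)}\biggl[\sum_{t=h}^{h+B-1} \Rla_{t\vert h}(s, \phi, I) \\
&\qquad + V^\pi_{h+B}\bigl(\sla_{h+B\vert h}(s, \phi, I)\bigr)\biggr],
\end{align*}
since the next batch begins at step $h+B$ in the Markov state $\sla_{h+B\vert h}(s,\phi,I)$. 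The induction hypothesis yields $V^\pi_{h+B} \leq V^{*}_{h+B}$ pointwise, and because the local policy $\phi$ is permitted to depend on the realized $I$, I may push the maximization over $\phi \in \Pi_{det}$ \emph{inside} the expectation, recovering $\Qla^*_h(s, B; I, V^{*}_{h+B})$ as in \eqref{eq: Q-val def}. Taking the maximum over $B \in [\ell_h]$ gives the upper bound $V^\pi_h(s) \leq \max_B \E_I[\Qla^*_h(s, B; I, V^{*}_{h+B})]$. The matching lower bound is attained by constructing $\pi^{*}$ recursively: use the greedy $B^*_h(s)$ and $\phi^{*,h}$ from \eqref{eq: DP range}--\eqref{eq: DP policy} at step $h$, then continue with the optimal policy guaranteed by the induction hypothesis from step $h+B^*_h(s)$ onward. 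Since these choices depend only on $(h,s,I)$ and not on how $s$ was reached, the same $\pi^{*}$ is simultaneously optimal for every initial state.

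The main obstacle I anticipate is rigorously formalizing the independence fact above, since ABPs can induce intricate correlations between earlier lookahead windows and the state distribution at later times. A clean route is to observe that the pair $(\ts^\pi_i, s_{\ts^\pi_i})$ is measurable with respect to the sigma-algebra generated by everything occurring strictly before the $i$-th batch begins, and then invoke the mutual independence of rewards and transitions across timesteps (stated in the setting) to conclude that this sigma-algebra is independent of $(R_t, s_{t+1})_{t \geq h}$ conditional on $s_h$ -- which is exactly what the independence claim requires.
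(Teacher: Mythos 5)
Your proof is correct, but it takes a genuinely different route from the paper. You run a direct backward induction on the original process: at each batch start you write the one-batch Bellman decomposition via the tower property, use the inductive bound $V^\pi_{h+B}\le V^*_{h+B}$, push the maximization over $\phi$ inside the expectation, maximize over $B$, and realize the bound with the greedy ABP. The paper instead builds an augmented MDP $\MB$ whose state carries the lookahead vector and the remaining batch length, proves the Bellman equations there (\Cref{lemma: value in augmented MDP}), shows by a separate induction that the optimal value does not depend on the augmented time index $t$ (i.e., on the number of past batches), and then obtains the proposition by substitution through \Cref{corollary: ABP value}. The paper's detour buys three things your argument does not: (i) the independence fact you correctly flag as ``the main obstacle'' is discharged once and for all by an explicit data-generation coupling (\Cref{appendix: data generation}) rather than left as a sketch --- this is exactly the point where the motivating distortion example shows naive conditioning fails, so it deserves the full treatment; (ii) it establishes the slightly stronger fact that even \emph{extended} ABPs, which may condition on the number of previous batches, gain nothing, whereas your induction only certifies optimality within $\Pi_B$ (which is all the literal statement requires); and (iii) \Cref{corollary: ABP value} is reused in the regret analysis, so the machinery is not wasted. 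Conversely, your route is more elementary and avoids the bookkeeping of the $t$-versus-$h$ indices. One small point to tighten: your tower-property step implicitly uses that the post-batch continuation value at $(h+B, s')$ is a function of $(h+B,s')$ alone, independent of the within-batch lookahead realization and of the batch index; this is true for ABPs precisely because their choices depend only on $(h,s,I)$, but it is the same well-definedness issue as the independence fact and should be folded into that formalization rather than asserted separately.
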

The proof can be found in \Cref{appendix: augmented MDP}. Inspired by \citep{merlis2024reinforcement} in the case of one-step lookahead, we design an augmented Markov environment $\MB$ in which the state includes both the lookahead information and the number of steps until the end of the current batch. The augmented action space is adapted to also include the choice of the batching horizon, in steps where it is relevant. We explain why this construction is almost equivalent to ABPs in the original environment, with one delicate change: Markov policies in $\MB$ may also depend on the number of past batches (while ABPs do not use this as input). We show that the optimal policy does not utilize this extra information and results in being an ABP. We also use this construction to derive the value of arbitrary ABPs (see \Cref{corollary: ABP value} in the appendix).

\paragraph{Efficient Value Computation.} We now briefly discuss how to efficiently solve the dynamic programming equations of ABPs. Starting from a step $h$ and a state $s$, let $\Jla_{h,B}^I(s,s')$ be the maximal cumulative reward after playing $B$ steps, given lookahead information $I$ and under the constraint that the environment arrives at state $s'$ on step $h+B$:
\begin{sizeddisplay}{\small}
\begin{align*}
    \Jla_{h,B}^I(s,s') = \max_{\phi\in\Pi_{det}: \sla_{h+B\vert h}(s,\phi,I)=s'}\brc*{\sum_{t=h}^{h+B-1}\Rla_{t\vert h}(s,\phi,I)}.
\end{align*}
\end{sizeddisplay}
In addition, let $\Sla_{h,B}^I(s)$ be the set of reachable states after $B$ steps 
\begin{sizeddisplay}{\small}
\begin{align*}
    \Sla_{h,B}^I(s) = \brc*{s'\in\Scal: \exists\phi\in\Pi_{det},\ s.t.\ \sla_{h+B\vert h}(s,\phi,I)=s'}.
\end{align*}
\end{sizeddisplay}
Using standard dynamic programming arguments, we have 
\begin{sizeddisplay}{\small}
\begin{align}
    &\Qla^*_h(s,B;I,V)
    = \max_{s'\in \Sla_{h,B}^I(s)}\brc*{\Jla_{h,B}^I(s,s') + V(s')}, \label{eq: J and S to Q}
\end{align}
\end{sizeddisplay}
and given lookahead information, both $\Jla$ and $\Sla$ can be updated recursively by\footnote{For brevity, we slightly abuse the use of notations $\sla,\Rla$ and input the lookahead information in step $h$; notice that it directly determines the relevant state/rewards in step $h+B$.}
\begin{align*}
    &\Jla_{h,B+1}^I(s,s') \\
    &=\!\! \max_{\substack{\tilde{s}\in\Sla_{h,B}^I(s), a\in\Acal: \\\sla_{h+B+1\vert h+B}(\tilde{s},a,I)=s'}}\!\!\brc*{\Jla_{h,B}^I(s,\tilde{s}) + \Rla_{h+B\vert h+B}(\tilde{s},a,I)},
\end{align*}
and
\begin{sizeddisplay}{\small}
\begin{align*}
    &\Sla_{h,B+1}^I(s) = \left\{s'\in\Scal: \exists \tilde{s}\in\Sla_{h,B}^I(s), a\in\Acal,\right.\\
    &\hspace{10em} \left.  s.t.\enspace\sla_{h+B+1\vert h+B}(\tilde{s},a,I)=s'\right\},
\end{align*}
\end{sizeddisplay}
with the initialization $\Jla_{h,0}^I(s,s)=0$ and $\Sla_{h,0}^I(s)=\brc*{s}$.
For any step $h$ state $s$ and lookahead information $I$, the computation complexity of these calculations is $\Ocal\br*{SA\ell }$, and the memory complexity is $\Ocal\br*{S\ell}$. Therefore, we can efficiently compute $\Qla^*_h(s,B;I,V^{*}_{h+B})$ in \Cref{eq: DP val,eq: DP range}, and only need to approximate the expectation w.r.t. the lookahead information. Luckily, since rewards are bounded, all quantities concentrate well, and one could sample $poly\br*{1/\epsilon,S,H,\log\frac{1}{\delta}}$ lookahead realizations to estimate the values up to an $\epsilon$-accuracy w.h.p. using this procedure. Finally, we remark that given $B_h$ and $I_h$, after calculating $\Jla$ and $\Sla$ and solving for the state $s_{h+B_h}$, the optimal policy $\phi^{*,h}$ can be calculated by backtracking from the final state to the initial state $s_h=s$.

\section{Learning the Optimal ABP}
\label{section: learning}
Finally, we address the problem of learning to interact with unknown environments. We study episodic settings and assume that the interaction lasts for $K$  episodes. Following the ABP protocol, each episode is divided into batches determined by the agent. Starting from an arbitrary state $s_1^k$, the agent picks the first batching horizon $B_1^k$ and observes $B_1^k$-step lookahead information $I_h^k$ containing both realized rewards and transitions. Then, it plays for $B_1^k$ steps according to a policy $\phi^{k,h}$ until reaching the starting point of the second batch $\ts^k_2=\ts^k_1+B_1^k$ (with $\ts^k_1=1$), and so forth until $H$ steps were played. We emphasize that we assume much weaker feedback than the original $\ell$-step lookahead: first, the lookahead is only observed at the beginning of batches (and not every step), and second, the range of the observed lookahead is $B_h^k\leq \ell$. This can be very beneficial when the lookahead information is costly -- whether the cost is computational or the agent must pay to acquire it.

We continue and define several useful empirical quantities. Let $\Kcal_h^{k}(s,B)\subseteq[k-1]$ be all episodes up to $k-1$ (inclusive) in which the agent started a new batch in state $s$ at step $h$ and asked to observe a lookahead range of $B$. The number of such episodes is denoted by $n_h^{k}(s,B) = \abs*{\Kcal_h^{k}(s,B)}$. Then, we denote the empirical expectation w.r.t. the collected lookahead data of an arbitrary function $f$ by
\begin{align*}
    \widehat{\E}_{h,s,B}^k[f(I)] = \frac{1}{n_h^{k}(s,B)\vee 1} \sum_{m\in\Kcal_h^{k}(s,B)}f(I_h^m(s)),
\end{align*}
where $x\vee y = \max\brc*{x,y}$. 
We remark that as long as calculating $f(I)$ is tractable, calculating this expected value requires at most $k$ computations. Using this notation, we define the empirical variance of the Q-values to be 
\begin{sizeddisplay}{\small}
\begin{align*}
    &\widehat{\VAR}_h^k(s,B;V) \\
    &\quad= \widehat{\E}_{h,s,B}^k\brs*{\Qla^*_h(s,B;I,V)^2} - \br*{\widehat{\E}_{h,s,B}^k\brs*{\Qla^*_h(s,B;I,V)}}^2,
\end{align*}
\end{sizeddisplay}
where $\Qla^*$ is  defined in \cref{eq: Q-val def}. 

Using these notations, we are finally ready to present our algorithm, which we call Adaptive Lookahead Upper Confidence Bound (AL-UCB). We adapt the optimistic algorithms for RL, and, in particular, the one-step lookahead algorithms in \citep{merlis2024reinforcement}. In each episode, the algorithm recursively calculates optimistic values $\bar{V}^k_h(s)$ and utilizes them to determine the batching horizons $B_h^k$ and the policies inside batches $\phi^{k,h}$. The optimism is achieved via variance-based bonuses of the form
\begin{align*}
     b^k_{h}(s,B) = \tilde{\Theta}\br*{\sqrt{\frac{\widehat{\VAR}_h^{k}(s,B;\bar{V}^k_{h+B}) }{n^{k}_{h}(s,B)}} + \frac{1}{n^{k}_{h}(s,B)}} 
\end{align*}
for any $h\in[H],s\in\Scal$ and $B\in[\ell_h]$ (see \cref{eq: bonus} in the appendix for the precise expression). Then, given the bonuses, the optimistic Q-values are defined as
\begin{sizeddisplay}{\small}
\begin{align}
    &\bar{Q}^k_h(s,B) = \min\left\{\widehat{\E}_{h,s,B}^k\brs*{\Qla^*_h(s,B;I,\bar{V}^k_{h+B})} + b^k_{h}(s,B),\right.\nonumber\\
                                    &\hspace{9em}\biggl.H-h+1\biggr\}. \label{eq: optimistic Q}
\end{align}
\end{sizeddisplay}
Notice that we truncate the values to ensure they never leave their natural domain of $[0,H-h+1]$.  
The optimistic value is then defined as the maximum over all potential batching horizons $\bar{V}^k_h(s) = \max_{B\in[\ell_h} \brc*{\bar{Q}^k_h(s,B)}$.

\begin{algorithm}[t]
\caption{Adaptive Lookahead Upper Confidence Bound (AL-UCB)} \label{alg: AL-UCB}
\begin{algorithmic}[1]
\STATE {\bf Require:} $\delta\in(0,1)$, bonuses $b^k_{h}(s,B)$
\FOR{$k=1,2,...$}
    \STATE \textcolor{gray}{\# Planning }
    \STATE  Initialize $\bar{V}^k_{H+1}(s)=0$ 
    \FOR{$h=H,H-1,..,1$}   
         \FOR{$s\in\Scal$}
            \FOR{$B\in[\ell_h]$}
                \STATE Calculate $\bar{Q}^k_h(s,B)$ via \cref{eq: optimistic Q}
            \ENDFOR 
            \STATE Set values $\bar{V}^k_h(s) = \max_{B\in[\ell_h]} \brc*{\bar{Q}^k_h(s,B)}$   
        \ENDFOR
    \ENDFOR
    \STATE \textcolor{gray}{\# Acting }
    \STATE Start from $h=1$ in state $s_1^k$
    \WHILE{$h\leq H$}
        \STATE Choose range $B_h^k\in\argmax_{B\in[\ell_h]} \brc*{\bar{Q}^k_h(s_h^k,B)}$ 
        \STATE Observe $B_h^k$-step lookahead starting from $s_h^k$, $I_h^k$
        \STATE Play for $B_h^k$ steps a policy $\phi^{k,h}$ as specified in \cref{eq: phi^k}
        \STATE $h\gets h+B_h^k$
    \ENDWHILE
    \STATE Update the empirical estimators and counts for all states in which a new batch started and the respective batching horizons
\ENDFOR
\end{algorithmic}
\end{algorithm}

Having calculated the values, the algorithm repeatedly selects batching horizons that maximizes the optimistic Q-values $B_h^k\in\argmax_{B\in[\ell_h]} \brc*{\bar{Q}^k_h(s,B)}$, and then, upon observing $I_h^k$, executes the greedy Markov policy that maximizes the reward given the lookahead and the optimistic value at the end of the lookahead 
\begin{align}
    &\phi^{k,h}\in\argmax_{\phi\in\Pi_{det}}\biggl\{\sum_{t=h}^{h+B_h^k-1}\Rla_{t\vert h}^{k}(s_h^k,\phi)\biggr.\nonumber\\
    &\hspace{7.5em}\biggr. +\bar{V}^k_{h+B_h^k}(\sla_{h+B_h^k\vert h}^{k}(s_h^k,\phi))\biggl\}. \label{eq: phi^k}
\end{align}
We remark that our discussion on efficient calculations in \Cref{section:ABPs} also applies here: one could use quantities similar to $\Jla,\Sla$ to efficiently find $\phi^{k,h}$. The full description of AL-UCB can be found in \Cref{alg: AL-UCB}. Our algorithm enjoys the following regret guarantees:
\begin{restatable}{theorem-rst}{regretALUCB}
\label{theorem: regret AL-UCB}
    For any $\delta\in(0,1)$, w.p. at least $1-\delta$ the regret of AL-UCB is upper bounded by 
    \begin{align*}
            \Regret(K) \!=\!\Ocal\br*{\!\!\sqrt{\!H^3SK\ell \ln\frac{SH\ell K}{\delta}} \!+\! H^3S^2\ell \ln^2\frac{SH\ell K}{\delta}\!}\!.
    \end{align*}
\end{restatable}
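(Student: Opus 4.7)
My plan follows the standard optimistic-RL template, using a reduction to the batch-level decision process of Proposition \ref{prop: opt values} combined with Bernstein-type bonuses. The key simplification afforded by the ABP structure is that lookahead is observed anew at each batch-start, so the samples $\brc*{I_h^m}_{m \in \Kcal_h^k(s, B)}$ are conditionally i.i.d.; this reduces the estimation of $\E_{I \sim \Ical_{h, B}(s)}[\Qla_h^*(s, B; I, V)]$ to an i.i.d.\ mean-estimation problem and explains why the bound scales with $\ell$ rather than $A$: the ``effective action'' at each batch-start is only the choice of $B \in [\ell_h]$, since the in-batch policy $\phi$ is determined by the lookahead via the greedy rule in \cref{eq: phi^k}.

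I would first establish optimism, $\bar{V}_h^k(s) \geq V_h^*(s)$, by backward induction on $h$. The inductive step uses (i) monotonicity of $\Qla_h^*(s, B; I, \cdot)$ in $V$, pointwise in $I$, to propagate the hypothesis, and (ii) an empirical Bernstein bound on the i.i.d.\ samples to show $\widehat{\E}_{h,s,B}^k[\Qla_h^*(s, B; I, V)] + b_h^k(s, B) \geq \E_I[\Qla_h^*(s, B; I, V)]$ for a fixed $V$; the data dependence of $V = \bar{V}_{h+B}^k$ is absorbed by the monotone-bonus technique of \citet{zhang2021reinforcement,zhang2024settling,merlis2024reinforcement}, avoiding the logarithmic cost of a covering argument. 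Under optimism, $\Regret(K) \leq \sum_k \brs*{\bar{V}_1^k(s_1^k) - V_1^{\pi^k}(s_1^k)}$. Unrolling the optimistic Bellman recursion along the actual batch-starts $\ts_1^k, \ts_2^k, \ldots$ of each episode yields, up to an Azuma-Hoeffding martingale of order $\Olog(H\sqrt{K})$, a sum of per-batch bonuses $b_{\ts_i^k}^k(s_{\ts_i^k}^k, B_{\ts_i^k}^k)$. The $1/n$ part of the bonus sums via a log-visit argument to $\Olog(HS\ell)$, producing the lower-order $H^3 S^2 \ell$ term after multiplying by the $H$ value range. For the variance part, Cauchy-Schwarz and a pigeonhole over the $SH\ell$ triples $(h, s, B)$ give
\begin{align*}
    \sum_{k, i} \sqrt{\frac{\widehat{\VAR}_{\ts_i^k}^k}{n_{\ts_i^k}^k}} \leq \sqrt{SH\ell \ln(K)} \cdot \sqrt{\textstyle\sum_{k, i} \widehat{\VAR}_{\ts_i^k}^k},
\end{align*}
after which concentration replaces $\widehat{\VAR}$ by the population variance. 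A law-of-total-variance argument then bounds $\sum_i \VAR\brs*{\cdot \mid s_{\ts_i^k}^k} \leq H^2$ per episode, giving the leading $\sqrt{H^3 S K \ell}$ term.

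The hard part will be making the law-of-total-variance step rigorous, because the batch boundaries $\ts_i^k$ are policy-dependent stopping times, so the per-batch variances are not indexed by a deterministic partition of $[H]$. I plan to handle this by defining the batch-start filtration $\F_i^k = \sigma(\text{history through } \ts_i^k)$ and treating the sequence of cumulative-batch-returns as a bounded martingale with total range $H$ on this filtration; a Doob-type total-variance identity then bounds the per-episode sum. A secondary subtlety is that $\bar{V}^k$ rather than $V^*$ appears inside the empirical variance; as in \citet{merlis2024reinforcement}, the monotone-bonus property ensures that the resulting slack is absorbed into the $1/n$ part of the bonus without affecting the leading term.
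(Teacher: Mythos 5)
Your outline matches the paper's architecture in most respects: optimism via backward induction with the monotone-bonus lemma, unrolling the recursion along the (random) batch starts $\ts_i^k$, Cauchy--Schwarz plus a pigeonhole over the $SH\ell$ triples, and a law-of-total-variance identity on the batch-start filtration (your Doob-martingale plan for the LTV is essentially the paper's Lemma~\ref{lemma: ltv}, where the cross terms vanish because the future is independent of the past given the batch start time and state). However, there is one genuine gap, and it sits exactly at the point the paper identifies as the central difficulty. In the regret decomposition, after subtracting the true Bellman backup you are left with the deviation
\begin{align*}
\widehat{\E}_{h,s,B}^k\brs*{\Qla^*_h(s,B;I,\bar{V}^k_{h+B})} - \E_{I\sim\Ical_{h,B}(s)}\brs*{\Qla^*_h(s,B;I,\bar{V}^k_{h+B})},
\end{align*}
in which $\bar{V}^k_{h+B}$ is computed from the same samples that define $\widehat{\E}_{h,s,B}^k$. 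You assert that ``the data dependence of $V=\bar{V}^k_{h+B}$ is absorbed by the monotone-bonus technique, avoiding the logarithmic cost of a covering argument.'' The monotone-bonus lemma only gives you \emph{optimism} (a one-sided lower bound on $\bar{Q}^k_h$); it does not bound this deviation from above, which is what the regret decomposition requires. In standard tabular RL the analogous term $(\hat{P}-P)(\bar{V}-V^*)$ is controlled by per-next-state concentration of $\hat{P}$, but here $\Qla^*_h(s,B;I,\cdot)$ is a \emph{nonlinear} (max-type) functional of $V$ and the underlying distribution is over lookahead realizations $I$ with exponential support, so that route is closed. The paper resolves this with a uniform concentration bound over the value class $[V^*_{h+B},H]^{\Scal}$ via a covering argument plus Freedman's inequality (the event $E^{Err}(k)$ and Lemma~\ref{lemma: uniform max concentration}), paying an additive $\Ocal(H^2 S L^k_\delta / n)$ per batch.

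This omission is visible in your bookkeeping: summing only the $\tilde{\Ocal}(H/n)$ part of the bonus over the $SH\ell$ triples gives $\tilde{\Ocal}(H^2 S\ell)$, not the claimed $H^3 S^2\ell$ lower-order term; the extra factor of $HS$ is precisely the price of the covering-based uniform concentration that you have skipped. A secondary (fixable) imprecision is your ``Azuma-Hoeffding martingale'' for relating $\E_I\brs*{\bar{V}^k_{\ts_{i+1}^k}-V^{\pi^k}_{\ts_{i+1}^k}}$ to its realization when unrolling across batch levels; the paper uses a multiplicative Freedman-type bound (event $E^{\mathrm{diff}}$, with slack $1+\tfrac{1}{2H}$ per level) so that the error compounds to a constant factor over $H$ levels, but an additive Azuma bound per level would also land in the lower-order terms, so this is a stylistic rather than substantive difference.
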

One major challenge that also appeared in the one-step lookahead \citep{merlis2024reinforcement} is that lookahead exponentially increases the effective state-space. More specifically, standard analyses often bound w.h.p. deviations of the form $\abs*{\hat{P}_h^k(s'\vert s,a) - P_h(s'\vert s,a)}$, for some empirical estimate $\hat{P}_h^k$, and the deviation depends on $S$. When generalizing this to incorporate lookahead by augmenting it to the state space, the relevant distribution is the lookahead distribution $\Ical_{h,B}(s)$, and instead of $S$, one would get the support of this distribution. This support has an exponential nature due to its transition elements and continuous components due to reward lookahead. With one-step reward lookahead, \citet{merlis2024reinforcement} tackled this issue by using uniform concentration on the value, suffering a factor of $\sqrt{A}$ (which did not affect the final rate due to the transition estimation terms). In our case, the `effective action' would be to choose the state at the end of the lookahead, which would lead to a regret degradation of $\sqrt{S}$. On the other hand, with transition lookahead, \citet{merlis2024reinforcement} reformulated the probability space using a special list structure of the optimal policy, so that it becomes polynomial. This structure does not hold in our case (even with one-step reward and transition lookahead).

Our way of solving this is to prove uniform concentration on the problematic error term using Freedman's inequality, alongside covering arguments that only lead to a polynomial degradation in lower-order terms. Specifically, we show that w.h.p.,
\begin{sizeddisplay}{\small}
\begin{align*}
    &\widehat{\E}_{h,s,B}^k\brs*{\Qla^*_h(s,B;I,\bar{V}^k_{h+B})-\Qla^*_h(s,B;I,V^*_{h+B})}\\
    &\leq \!\br*{\!1+\frac{1}{2H}\!}\E_{I\sim\Ical}\brs*{\Qla^*_h(s,B;I,\bar{V}^k_{h+B})-\Qla^*_h(s,B;I,V^*_{h+B})} \\
    &\quad+ \Ocal\br*{\frac{H^2SL^k_{\delta}}{n^{k}_h(s,B)}}.
\end{align*}
\end{sizeddisplay}
The exact good event $E^{Err}(k)$ can be found in \Cref{appendix: concentration event}, and the general concentration claim in \Cref{lemma: uniform max concentration}. This result allows us to propagate errors from the beginning of a batch at step $h$ to the beginning of the next batch at step $h+B$, and the multiplicative degradation due to this error propagation is at most $\br*{1+\frac{1}{2H}}^H=\Ocal(1)$. The full proof includes other technical challenges related to adapting to the batch structure and can be found in \Cref{appendix: regret}. We remark that our approach can be applied to the one-step transition lookahead to greatly simplify the proof in \citep{merlis2024reinforcement}, obviating the need for the list-based manipulations.

\paragraph{Comparison to one-step lookahead.} The minimax optimal regret rates with either one-step reward or transition lookahead information are $\Olog\br*{\sqrt{H^3SAK}}$ and $\Olog\br*{\sqrt{H^2SK}(\sqrt{H}+\sqrt{A})}$, respectively \citep{merlis2024reinforcement}. Limiting our results to $\ell=1$ (where ABPs are equivalent to a general lookahead policy), we show an improved regret rate of $\Olog\br*{\sqrt{H^3SK}}$, removing all dependence on the size of the action space. This is not surprising: in any visited state, we observe all possible actions before acting. Nonetheless, as previously explained, it is not clear whether the proof in \citep{merlis2024reinforcement} can be adapted to achieve this rate. As for lower bounds, following a similar reduction to a bandit-like setting via trees \citep{domingues2021episodic}, this rate is tight. Specifically, taking the example in \citep{domingues2021episodic} and removing all actions but one in all the leaves would remove all benefits of one-step lookahead information, so that lookahead algorithms get the same information as vanilla RL algorithms. Removing these actions reduces the standard RL lower bound by a factor of $\sqrt{A}$: to a value of $\Omega\br*{\sqrt{H^3SK}}$.

\paragraph{Tightness for $\ell>1$.} For multi-step lookahead, it is unclear whether the bounds are tight in the maximal lookahead $\ell$. One simple way to reduce the feedback from $\ell$-step lookahead to standard RL is to add delays: force the environment to progress only in steps where  $h=n\ell$ and freeze it (with no reward) otherwise. This can be achieved by duplicating each state $\ell$ times. Equivalently, we can transform an $\ell$-step lookahead environment with $\ell$ delays to a one-step lookahead environment with $S/\ell$ states. Via this reduction, the immediate candidate for a lower bound is $\Omega\br*{\sqrt{{H^3SK}/{\ell}}}$: different by a factor of $\ell$ from our upper bound. For most applications, it is reasonable to treat $\ell$ as an absolute constant (and, in particular, $\ell\ll H$), and so this gap is not major. Moreover, part of this gap is very likely due to our feedback model: the fact that at batch starts, we observe $B_h$-step lookahead and not $\ell_h$-step lookahead. In fact, were we to observe $\ell_h$-step lookahead at batch starting points, we could update the observations simultaneously for any $B\in[\ell_h]$, and all dependencies in $n^{k}_h(s,B)$ would be replaced by $n^{k}_h(s)$. In particular, this would lead to a regret upper bound of $\Olog\br*{\sqrt{H^3SK}}$ for any $\ell$. Similarly, looking at fixed batching policies and forcing $B_h$ to be independent of $s$ (i.e., to be any fixed function of $h$), we would also get a predefined batching horizon for any timestep. Therefore, we could maintain a single count $n^{k}_h(s)$. Then, following the same analysis would immediately lead to a regret of $\Olog\br*{\sqrt{H^3SK}}$ compared to the best policy with the same fixed batching profile.

We hypothesize that for large enough $\ell$, the regret should diminish, since the feedback becomes increasingly similar to full information, but leave this for future work.

\section{Conclusions and Future Work}
In this work, we tackled the problem of planning and learning with multiple steps of lookahead information. We demonstrated the failure of fixed batching policies and the exponential suboptimality of common value calculations for MPC schemes, and proposed focusing on adaptive batching policies: both improving fixed batching policies while retaining the ability to compute the optimal strategy. We derived the optimal Bellman equations for ABPs and showed how to learn the optimal ABP when interacting with unknown tabular episodic environments. Our regret bounds are near optimal up to factors that depend on the lookahead horizon $\ell$, which is typically treated as a small constant.

More broadly, the aim of the work is to expand the family of tractable strategies with multi-step lookahead and show how to conduct learning in this scenario. However, there is still much to be done to fully characterize the tractable regimes in this setting. In particular, though it has been proven that computing the optimal lookahead policy is hard \citep{pla2025hardness}, it remains open whether there exist tractable planning schemes with provable approximation guarantees. Establishing such results falls beyond the scope of our work.  
Alternatively, after showing the failure of backwards-induced MPC schemes, it would be interesting to study whether there are tractable global approaches for computing better values. We conjecture that calculating the optimal values is NP-hard, since they may depend delicately on the entire structure of the environment.

Other directions involve closing the regret gap or studying algorithmic alternatives that may be better suited for practical deployment. For example, instead of solving a planning problem inside each batch, one could use rollouts; the model-based optimistic value calculations could be replaced by model-free approaches; and so forth. After pursuing these directions, one could adapt the resulting algorithms to deep RL and apply them to some of the many applications in which lookahead information is available. 
Finally, our work assumes complete and perfect $\ell$-step information. In practice, information may be distorted (especially further into the future), and some future information might be missing. We believe our analysis can be naturally combined with that of \citep{lu2025reinforcement} to incorporate imperfect future predictions, but leave this for future study.

\section*{Acknowledgments}
This research was supported by Israel Science Foundation research grant (ISF’s No. 4118/25) and the
Maimonides Fund’s Future Scientists Center.
\bibliography{references}
\bibliographystyle{icml2026}

\newpage
\appendix
\onecolumn

\section{Data Generation Process}
\label{appendix: data generation}
We first specify in detail how lookahead information is generated. While there are other equivalent ways to generate it, this generation process will simplify the analysis, and in particular, will trivially imply equivalence between obtaining lookahead information and interacting with an augmented environment described in \Cref{appendix: augmented MDP}.

The key observation on the adaptive batching policy space is that the lookahead information $I_{h,B_h}(s_h)$ is only observed at the beginning of a batch, after choosing a new batch length $B_h$, and even if obtained, additional lookahead information is not used in the middle of the batch. Therefore, for ABPs, we can assume w.l.o.g. that the lookahead information $I_{h,B_h}(s_h)$  is independently generated only \emph{after reaching state $s_h$ and picking a lookahead range $B_h$}.

Formally, for any $h\in[H]$ and $B\in\brc*{1,\dots,\ell_h}$, denote by $\bs'_{h,B}\in \Scal^{SA\ell }$, the $B$-step transition lookahead information starting from step $h$. Specifically, for any $h\leq t\leq h+B-1$, any reachable $s\in\Scal$ and any $a\in\Acal$, upon playing $a$ at step $t$ and state $s$, the agent would transition to $s_{t+1}=\bs_{h,B}'(s,a,t)$. To keep the dimension requirements, for $t\in\brc*{h+B,\dots,h+\ell-1}$, we pad with $\bs_{h,B}'(s,a,t)=\emptyset$ (for some arbitrary unreachable state $\emptyset$), and also define $\bs'_{h,0}=\emptyset^{\otimes SA\ell }$. We similarly denote reward lookahead by $\bR_{h,B}\in \R^{SA\ell }$, so that upon playing action $a$ at step $t\in\brc*{h,\dots,h+B-1}$ and state $s$, the agent would collect a reward $R_{t}=\bR_{h,B}(s,a,t)$, and set $\bR_{h,B}(s,a,t)=0$ for $t>h+B-1$. We again denote $\bR_{h,0}=0^{\otimes SA\ell }$. 
    
    \paragraph{Lookahead Sampling Procedure.} We further specify the lookahead sampling procedure, to ensure equivalence between  $I_{h,B_h}(s_h)$ and the observations $\bR_{h,B}, \bs'_{h,B}$ sampled from $s_h=s$. The procedure consists of two stages.
    \begin{enumerate}
        \item \emph{Sampling}. Given $h,B$ for all $t\in\brc*{h,\dots,h+B-1}$, sample  $\bR_{h,B}(s,a,t)$ and $\bs_{h,B}'(s,a,t)$ for all $s\in\Scal,a\in\Acal$. The samples can be correlated between states and actions (if the reward and/or transitions are correlated), but are sampled independently between different values of $t$. This sampling procedure gives lookahead information $\bR_{h,B},\bs_{h,B}'$. also on unreachable states.
        \item \emph{Pruning}. To make the information equivalent to the lookahead information $I_{h,B_h}(s_h)$, we need to remove any information on unreachable states; we do it recursively. First, for $t=h$, the only reachable state is $s_h$. Therefore, for any $s\ne s_h$, we set $\bR_{h,B}(s,a,h)=0$ and $\bs_{h,B}'(s,a,h)=\emptyset$. The set of reachable states in step $h+1$ is 
        $$\Sla_{h,1}(s_h,\bs_{h,B}') = \brc*{s'\in\Scal: \exists a\in\Acal\enspace s.t.\enspace \bs_{h,B}'(s_h,a,h)=s'}.$$
        Then, for any step $h+b$ (with $b\in[B-1]$), if $s\notin \Sla_{h,b}(s_h,\bs_{h,B}')$, we set $\bR_{h,B}(s,a,h+b)=0$ and $\bs_{h,B}'(s,a,h+b)=\emptyset$ for all $a\in\Acal$. The reachable states in step $h+b+1$ will now be         
        $$\Sla_{h,b+1}(s_h,\bs_{h,B}') = \brc*{s'\in\Scal: \exists s\in\Sla_{h,b}(s_h,\bs_{h,B}'),a\in\Acal\enspace s.t.\enspace \bs_{h,B}'(s,a,h+b)=s'}.$$
    \end{enumerate}
    We write $\bs'_{h,B}\sim \Pcal_{h}^B(s)$ and  $\bR_{h,B}\sim \Rcal_{h}^B(s)$ to represent the independent sampling from the aforementioned distribution starting at $s_h=s$. By construction, the information left after pruning contains exactly all rewards and transitions along all possible trajectories/policies: the same information as in $I_{h,B_h}(s_h)$. With slight abuse of notation, given a deterministic policy $\phi$, we again use the notations $\Rla_{h+b\vert h}(s,\phi,\bR,\bs')$ and $\sla_{h+b\vert h}(s,\phi,\bR,\bs')$ to denote the rewards and states reached from state $s$ by playing policy $\phi$ on the lookahead $\bR,\bs$. When the batching horizon $B$ is clear from the context, we sometimes write $\Ical_h(s)$.
    \begin{remark}
        The pruning step is not strictly needed for planning or when analyzing the value of policies: the pruned states can never be visited, so they cannot affect the optimal policy. Moreover, every policy that uses their information can be converted to a policy of the same value that does not use this information by sampling the information on the pruned states directly from the MDP distribution and acting as if this is the realization. Therefore, we prune the rewards mainly to make the equivalence between the lookahead information and the augmented MDP clear. On the other hand, in learning situations, observing information in unreachable states would effectively convert the bandit feedback to full information, thus making the problem much easier; we do not analyze this feedback in our paper.
    \end{remark}
\clearpage

\section{Description of an Equivalent Augmented MDP}
\label{appendix: augmented MDP}
    Having reformulated the lookahead information in a vectorized form in \Cref{appendix: data generation}, we now present an equivalent environment $\Mcal^{\Brm}$, in which this information is augmented into the state. This augmentation renders the environment Markov and enables us to analyze it using standard planning tools. We call $\Mcal$ the original MDP and $\MB$ the batched MDP. The different components of the batched MDP $\Mcal^{\Brm}$ are defined as follows:
    \begin{itemize}
        \item \textbf{State Space:} the state is represented by a tuple $(s,\bR,\bs',h, B)$, where $s\in\Scal$ is the current state of the agent, $\bR\in\R^{SA\ell }$ and $\bs'\in\Scal^{SA\ell }$ encode the lookahead information, $h\in[H]$ is the time step at the original MDP, and $B\in\brc*{0,\dots, \ell_h}$ is the remaining steps until the end of a batch. 
        \item \textbf{Action Space:} the agent picks an action $(a,B')\in \Acal\times [\ell_h]$, where the first component represents a real interaction and the second represents a batching horizon. 
        \item \textbf{Initial state:} Assuming the original MDP starts at state $s$, the batched environment starts at state 
            \begin{align*}
                (s_{1},\bR_{1},\bs'_{1},h_{1}, B_{1}) \gets \br*{s,0^{\otimes SA\ell },\emptyset^{\otimes SA\ell },1, 0}.
            \end{align*}
        \item \textbf{Interaction Protocol:} We denote the time index in $\Mcal^{\Brm}$ by $t$, which will advance at a slower pace than the time index of the original MDP $h$. The interaction protocol is divided into the following scenarios:
        \begin{itemize}
            \item If $h_t\leq H$ and $B_t=0$, we are about to start a new batch at timestep $h_t$. The agent picks the batch size using $B'_t$ (and the action $a_t$ has no effect). The reward is zero, and the agent transitions to the state 
            \begin{align*}
                (s_{t+1},\bR_{t+1},\bs'_{t+1},h_{t+1}, B_{t+1}) \gets (s_t,\bR_{h_t,B'_t},\bs'_{h_t,B'_t},h_t, B'_t),
            \end{align*}
            where both $\bR_{h_t,B'_t}\sim \Rcal_{h_t}^{B_t'}(s_t)$ and $\Pcal_{h_t}^{B_t'}(s_t)$ are lookahead information sampled according to the aforementioned sampling procedure. In other words, the state does not change, but the lookahead information is revealed, and the lookahead range is registered in the state.
            \item If $h_t\leq H$ and $B_t>1$, we are at the middle of a batch. Then, upon playing $a_t$, the agent will collect a reward $R_t=\bR_t(s_t,a_t,h_t)$ and transition to $s_{t+1}=\bs'_t(s_t,a_t,h_t)$ with one less step remaining in the batch:
            \begin{align*}
                (s_{t+1},\bR_{t+1},\bs'_{t+1},h_{t+1}, B_{t+1}) \gets (\bs'_t(s_t,a_t,h_t),\bR_t,\bs'_t,h_t+1, B_t-1).
            \end{align*}
            We remark that the structure of $\bs'_{t}$ implicitly encodes the initial lookahead range (by checking what is the largest time index for which it has a nonempty element). Therefore, alongside $h_t$ and $B_t$ (that encodes how many steps are left in a batch), the state encodes when the batch has started and at which step of the batch the agent is found.
            \item If $h_t\leq H$ and $B_t=1$, we are at the last step of a batch. Then, upon playing $a_t$, the agent will collect a reward $R_t=\bR_t(s_t,a_t,h_t)$, transition to $s_{t+1}=\bs'_t(s_t,a_t,h_t)$ and reset the lookahead information:
            \begin{align*}
            (s_{t+1},\bR_{t+1},\bs'_{t+1},h_{t+1}, B_{t+1}) \gets (\bs'_t(s_t,a_t,h_t),0^{\otimes SA\ell },\emptyset^{\otimes SA\ell},h_t+1, 0),
            \end{align*}
            In other words, at the end of every batch, the state will always be of the form $(s,0^{\otimes SA\ell },\emptyset^{\otimes SA\ell },h, 0)$.
            \item If $h_t>H$, we completed $H$ steps at the original MDP, and so w.l.o.g., we assume that the batched MDP transitions to a non-rewarding absorbing state.
            \item Notice that an action $B'_t$ only has impact when starting a new batch, and it is directly registered to the state. Therefore, with a slight abuse of notation, we use $B_t$ to indicate both the action and the registered state component.
        \end{itemize}
    \end{itemize}
    The horizon of the batched MDP is set to $2H$, to ensure that it passes the $H^{th}$ step in the original MDP. This is since for every step in which $B_t=0$ (and $h_t$ does not change), there is at least one following state with $B_t>0$ where $h_t$ progresses: it holds that $h_t\ge\ceil*{t/2}$. In particular, $h_{2H+1}>H$ for all policies, and if $B_t>0$ at some step in which $h_t\leq H$, then $t+B_t\leq 2H+1$. Similarly, at the beginning of batches (before the lookahead range is picked), necessarily $t\leq 2h-1$.
    
    At any point, a policy in the batched MDP either decides on a lookahead range based on the timestep $h$, state $s_t$, and the augmented time index $t$, or decides on an action based on $s,h,t$ and the lookahead information at the beginning of a batch. Thus, the only difference between such policies and ABPs in $\Mcal$ is the index $t$. On a closer look, the only case where $t$ advances but $h$ does not is at the beginning of a batch: the difference $t-h$ is the number of batches started up to step $h$ (up to $1$ if we are about to start a new batch). In the case where the policy in $\MB$ is Markov and deterministic, the state $s_t$ can be fully deduced from the state at the beginning of the batch and the lookahead information, similarly to ABPs. 

    In other words, a deterministic Markov policy in $\Mcal^{\Brm}$ is fully equivalent to policies we term \emph{extended adaptive batching policies}: policies that can select lookahead ranges and/or policies also depending on the number of past batches. To better illustrate the equivalence, we describe the interaction protocol of extended ABPs in $\Mcal$ and deterministic Markov policies in $\Mcal^{\Brm}$ in \Cref{alg:extended adaptive batching} and \Cref{alg:augmented policy}, respectively. Consequently, the values of the two MDPs coincide. 

    \begin{figure}[t]
    \begin{minipage}{0.4975\linewidth}
    \begin{algorithm}[H]
    \caption{Extended Adaptive Batching Policies \vphantom{$\MB$}}\label{alg:extended adaptive batching}
    \begin{algorithmic}[1]
    \STATE  \textbf{Input}: Horizon $H$, Lookahead range $\ell$, Initial state $s_1$
    \STATE \textbf{Initialize}: $h=1, n=0$ \textcolor{gray}{\# $n=$ number of past batches }
    \WHILE{$h\leq H$}
        \STATE Pick {\small$B_h=B_h(s_h,n)\in \brc*{1,\dots,\ell_h}$} 
        \STATE Update $n \leftarrow n+1$
        \STATE Observe $B_h$-step lookahead $I$
        \FOR{$b=0,\dots,B_h-1$} 
        \STATE Play $a_{h+b} = \phi_{h+b}^{h,n}(s_h,I)$
        \STATE Collect reward $R_{h+b}$ and transition to $s_{h+b+1}$ in accordance with the lookahead $I$
        \ENDFOR
        \STATE $h \leftarrow h+B_h$
    \ENDWHILE
    \end{algorithmic}
    \end{algorithm}
    \end{minipage}
    \hfill
    \begin{minipage}{0.4975\linewidth}
    \begin{algorithm}[H]
    \caption{Deterministic Markov Policy in $\Mcal^{\Brm}$}\label{alg:augmented policy}
    \begin{algorithmic}[1]
    \STATE  \textbf{Input}: Horizon $H$, Lookahead range $\ell$, Initial state $s_1$
    \STATE \textbf{Initialize}: $h=t=1$
    \WHILE{$h\leq H$}
        \STATE Pick {\small$B_t=B_t(s_h,h)\in \brc*{1,\dots,\ell_h}$} 
        \STATE Update $t \leftarrow t+1$
        \STATE Observe $B_t$-step lookahead $\bR,\bs'$
        \FOR{$b=0,\dots,B_t-1$} 
        \STATE Play $a_{t+b} = \phi_{h+b}^{h,t}(s_h,\bR,\bs')$
        \STATE Collect reward $R_{h+b}$ and transition to $s_{h+b+1}$ in accordance with the lookahead $\bR,\bs'$
        \ENDFOR
        \STATE $h \leftarrow h+B_t$, $t \leftarrow t+B_t$
    \ENDWHILE
    \end{algorithmic}
    \end{algorithm}
    \end{minipage}
    \end{figure}

    \paragraph{Relation to previous augmented constructions. } We remark that augmenting the state space with the lookahead observations in tabular MDPs was presented in similar contexts first in one-step lookahead \citep{merlis2024reinforcement}, and then also in multi-step lookahead \citep{lu2025reinforcement,pla2025hardness}. Yet, our construction directly embeds the structure of the ABPs into the environment and requires additional care due to the extra control over the lookahead range. In particular, we must account for the fact that the timesteps in which batch start times can be random, and for the implicit dependence on the number of previous batches.
    
    \paragraph{Values.} With some abuse of notation, we use $\pi$ to denote an extended batching policy in $\Mcal$ and its corresponding policy in $\Mcal^{\Brm}$. We let $V^\pi_t(s,\bR,\bs',h,B\vert \MB)$ be the value of the policy in $\MB$. Specifically, when a batch ends, we denote the value by 
    $$V^\pi_t(s,0^{\otimes SA\ell },\emptyset^{\otimes SA\ell},h, 0\vert \MB) \triangleq V^\pi_t(s,h\vert \MB) = \E_{\MB}\brs*{\sum_{i=t}^{2H}R_i\vert s_t=s,h_t=h,B_t=0,\pi}
    $$
    For an ABP, where the policy does not depend on the number of previous batches, we keep the notation from the main paper and denote the equivalent value in $\Mcal$ upon starting a new batch at step $h$ and state $s$ by
    \begin{align*}
        V^\pi_h(s) = \E_{\Mcal}\brs*{\sum_{i=h}^{H}R_i\vert s_h=s, E^n_h, \pi}.
    \end{align*}
    Recall that $E^n_h$ is the event that the ABP $\pi$ starts a new batch at timestep $h$, and that, by the data generation process, the expectation freshly samples all future information from the model independently between timesteps. 
    In particular, for the described data generation process, both environments coincide at the beginning of a batch under any ABP $\pi$, and it holds that 
    \begin{align}
        & V^\pi_h(s) = V^\pi_t(s,h\vert \MB) , \qquad \forall s\in\Scal,h\in[H], t\in[2h-1]
    \end{align}
    where the constraint to $t\in[2h-1]$ guarantees that there are enough remaining steps for the episode to end.
    
    Finally, we use $\pi^*$ to denote an optimal policy in $\MB$. Importantly, since any ABP in $\Mcal$ corresponds to a policy in $\MB$, the optimal value in $\MB$ upper bounds the value of the optimal ABP; thus, by proving that the optimal policy in $\MB$ corresponds to an ABP, we will be able to derive the optimal ABP and its value. We will denote this value by $V^*_h(s)$ (and will show that it indeed does not depend on $t-h$).

    \begin{lemma}
    \label{lemma: value in augmented MDP}
        \begin{enumerate}
            \item Let $\pi$ be a deterministic Markov Policy interacting in $\Mcal^{\Brm}$, as specified in \Cref{alg:augmented policy}. Then, the value of the policy follows the Bellman equations
        \begin{align*}
            V^\pi_t(s,h\vert \MB) = \E_{\substack{\bR\sim \Rcal_{h}^{B_t}(s),\\ \bs'\sim \Pcal_{h}^{B_t}(s)}}\brs*{\sum_{b=0}^{B_t-1}\Rla_{h+b\vert h}(s,\phi^{h,t},\bR,\bs') + V^{\pi}_{t+B_t+1}\br*{\sla_{h+B_t\vert h}(s,\phi^{h,t},\bR,\bs'),h+B_t}\vert \pi,s_t=s}
        \end{align*}
        for any $h\leq H$, $t\leq 2h-1$ and $s\in\Scal$, with the initialization $V^\pi_t(s,h\vert \MB) =0$ when $h>H$ or $t>2H$. 
        \item For any $h\leq H$, $t\leq 2h-1$ and $s\in\Scal$, the optimal value is independent of $t$; we denote it by $V^{*}_h(s\vert \MB)$, and it can be calculated via the optimal Bellman equations
        \begin{align*}
            V^{*}_h(s\vert \MB) = \max_{B\in[\ell_h]}\E_{\substack{\bR\sim \Rcal_{h}^{B}(s),\\ \bs'\sim \Pcal_{h}^{B}(s)}}\brs*{\max_{\phi\in\Pi_{det}}\brc*{\sum_{b=0}^{B-1}\Rla_{h+b\vert h}(s,\phi,\bR,\bs') + V^{*}_{h+B}\br*{\sla_{h+B\vert h}(s,\phi,\bR,\bs')}}\vert s_t=s}
        \end{align*}
        \end{enumerate}
    \end{lemma}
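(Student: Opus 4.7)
The plan is to apply the finite-horizon Bellman equations of the augmented MDP $\MB$ and unpack them across a full batch.

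For part 1, I would unroll the standard per-step Bellman equation of $\MB$ over the $B_t + 1$ internal steps that make up one batch. Starting at a beginning-of-batch state $(s, 0^{\otimes SA\ell}, \emptyset^{\otimes SA\ell}, h, 0)$, the first internal step is the lookahead-range selection: it carries zero reward, the $a$-component is irrelevant, and it moves us to $(s, \bR, \bs', h, B_t)$ with $\bR \sim \Rcal_h^{B_t}(s)$ and $\bs' \sim \Pcal_h^{B_t}(s)$. The subsequent $B_t$ steps play the Markov policy $\phi^{h,t}$ prescribed inside the batch, decrementing the remaining-steps coordinate each time; the final in-batch step also resets the lookahead, producing the next beginning-of-batch state $(s_{h+B_t}, 0, \emptyset, h+B_t, 0)$. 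Summing rewards over this window and using the data-generation identities $R_{h+b} = \Rla_{h+b\vert h}(s, \phi^{h,t}, \bR, \bs')$ and $s_{h+B_t} = \sla_{h+B_t\vert h}(s, \phi^{h,t}, \bR, \bs')$ yields the stated formula, where $V^\pi_{t+B_t+1}(\cdot, h+B_t \vert \MB)$ is evaluated at the end-of-batch state.

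For part 2, I would perform a backward induction on $h$ (not on the native time index $t$). The base case $h = H + 1$ gives value $0$ for any $t$, since the interaction has ended. For the inductive step, I would invoke the Bellman optimality principle at the beginning-of-batch state. The $a_t$ component has no effect, so the optimization reduces to picking $B \in [\ell_h]$. The sampling distribution $\Rcal_h^B(s) \otimes \Pcal_h^B(s)$ depends only on $(h, s, B)$; conditionally on an observed lookahead realization, the optimal within-batch play is chosen to maximize the in-batch reward plus the value at the next beginning-of-batch state, which by the inductive hypothesis equals $V^*_{h+B}(\sla_{h+B\vert h}(s,\phi,\bR,\bs') \vert \MB)$ and is independent of the successor time index. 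Since no quantity on the right-hand side depends on $t$, we conclude that $V^*_t(s, h \vert \MB)$ is independent of $t$ for all valid $t \leq 2h - 1$ and that it satisfies the stated optimality equation.

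The step I expect to require the most care is the interchange between the outer expectation over the lookahead and the inner maximization over $\phi \in \Pi_{det}$. This is justified because, once $(\bR, \bs')$ is revealed, the remaining in-batch trajectory is deterministic, and $\phi$ may be chosen as a function of the realization to attain the pointwise maximum; finiteness of $\Pi_{det}$ makes measurability immediate, and the resulting map corresponds to an extended-ABP policy as in \Cref{alg:augmented policy}. A secondary subtlety is to verify that the induction on $h$ is compatible with the native $2H$-step horizon of $\MB$: the construction guarantees $h_t \ge \ceil*{t/2}$, so at any beginning-of-batch state one has $t \le 2h - 1$, leaving at least $\ell_h$ steps of $\MB$ available to complete the batch and invoke the inductive hypothesis.
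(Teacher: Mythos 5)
Your proposal is correct and follows essentially the same route as the paper: unrolling the $\MB$ Bellman recursion over the $B_t+1$ internal steps of a batch (range-selection step with zero reward, then $B_t$ deterministic in-batch steps) for part 1, and a backward induction on $h$ using the arithmetic $t+B+1\le 2(h+B)-1$ to establish $t$-independence of the optimal value for part 2. The points you flag as delicate — the realization-wise choice of $\phi$ once the lookahead is revealed, and the compatibility of the $h$-induction with the $2H$ horizon — are exactly the ones the paper handles.
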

\begin{proof}
    Before starting the proof, we again remark that the only case in which $t$ advances in $\MB$ and $h$ does not is a step in which a lookahead range is chosen. After each such step, a policy will be executed for at least $1$ step, advancing both $t$ and $h$. In particular, starting from $h=t=1$, all consecutive states must have that $t\leq 2h$, and at a point before picking the lookahead range of a new batch, it holds that $t\leq 2h-1$. Thus, without loss of generality, we assume throughout the proof that this always holds, which also implies that the policy necessarily plays an action at $h=H$ before the interaction ends, regardless of how it chooses the following lookahead ranges. 

    Recall that we denote the values at the batched MDPs of a policy $\pi$ by $V^{\pi}_h(s,\bR,\bs',t, B\vert\MB)$. To ease notations, we also implicitly assume w.l.o.g. that the lookahead information $\bR,\bs'$ is always of a valid form (can be realized in the interaction with $\MB$). We now use the dynamic programming equations for finite-horizon MDPs \citep{puterman2014markov} to compute the values of $\pi$ and the optimal values. 

    \paragraph{Initialization.} We initialize the value to be zero for any $t>2H$ (after the interaction ends). Moreover, by the dynamics model of $\MB$, the reward is zero starting from any state with $h>H$. Therefore, it holds that 
    \begin{align*}
        & V^\pi_{t}(s,\bR,\bs',h, B\vert\MB) = 0,  & \forall h>H,\; \mathrm{ or, }\; t>2H
    \end{align*}

    \paragraph{Bellman equations: arbitrary policies. } Let $h\leq H$. We follow the standard dynamic programming procedure \citep{puterman2014markov}, relying on the dynamics of $\MB$. In the middle of a batch ($B_t\ge1$), the dynamics are deterministic (due to the already-observed lookahead information), so the equations take the form
    \begin{align*}
        &V^\pi_{t}(s_t,\bR_t,\bs_t',h, B_t\vert\MB) = \bR_t(s_t,a_t,t) + V^\pi_{t+1}(\bs_t(s_t,a_t,t),\bR_{t},\bs_{t}',h+1, B_t-1\vert\MB), & B_t>1, \\
        &V^\pi_{t}(s_t,\bR_t,\bs_t',h, B_t\vert\MB) = \bR_t(s_t,a_t,t) + V^\pi_{t+1}(\bs_t(s_t,a_t,t),0^{\otimes SA\ell },\emptyset^{\otimes SA\ell},h+1, 0\vert\MB), & B_t=1.
    \end{align*}
    Notice that since the policy is deterministic, so is the sequence of state-actions inside the batch. Next, we apply this recursion $B_t$ times until the end of the batch. Using $\phi^{h,t}$ to describe the deterministic policy induced by $\pi$ in the middle of a batch, we get 
    \begin{align}
        V^\pi_{t}(s_t,\bR_t,\bs_t',h, B_t\vert\MB) 
        &= \sum_{b=0}^{B_t-1}\bR_{t+b}(s_{t+b},a_{t+b},t+b) + V^\pi_{t+B_t}(s_{t+B_t},0^{\otimes SA\ell },\emptyset^{\otimes SA\ell},h+B_t, 0\vert\MB) \nonumber\\
        & =  \sum_{b=0}^{B_t-1}\Rla_{h+b\vert h}(s_t,\phi^{h,t},\bR_t,\bs'_t) + V^\pi_{t+B_t}(\sla_{h+B_t\vert h}(s_t,\phi^{h,t},\bR_t,\bs'_t),h+B_t\vert\MB). \label{eq: dp pi inside batch}
    \end{align}
    We now turn our focus to states where a batch is about to begin. In this case, the dynamics dictate that the state $s_t$ remains the same without gaining a reward, $B_t$-step lookahead information is sampled, and the lookahead range $B_t$ is registered in the state. Thus, the Bellman equation is
    \begin{align*}
        V^\pi_{t}(s,h\vert\MB) 
        &= V^\pi_{t}(s,0^{\otimes SA\ell },\emptyset^{\otimes SA\ell},h, 0\vert\MB)
        = \E_{\substack{\bR\sim \Rcal_{h}^{B_t}(s),\\ \bs'\sim \Pcal_{h}^{B_t}(s)}}\brs*{V^\pi_{t+1}(s,\bR,\bs',h, B_t\vert\MB)}\\
        & =  \E_{\substack{\bR\sim \Rcal_{h}^{B_t}(s),\\ \bs'\sim \Pcal_{h}^{B_t}(s)}}\brs*{\sum_{b=0}^{B_t-1}\Rla_{h+b\vert h}(s,\phi^{h,t},\bR,\bs') + V^\pi_{t+B_t+1}(\sla_{h+B_t\vert h}(s,\phi^{h,t},\bR,\bs'),h+B_t\vert\MB)},
    \end{align*}
    where the last equality is by \cref{eq: dp pi inside batch}. This concludes the first part of the claim.

\paragraph{Bellman equations: optimal value. } We repeat the same process, but this time apply the optimal Bellman operator. Let $h\leq H$. In the middle of a batch ($B\ge1$), the optimal Bellman equations are
    \begin{align*}
        &V^{*}_{t}(s_t,\bR_t,\bs_t',h, B_t\vert\MB) = \max_{a_t\in\Acal}\brc*{\bR_t(s_t,a_t,t) + V^*_{t+1}(\bs_t'(s_t,a_t,t),\bR_{t},\bs_{t}',h+1, B_t-1\vert\MB)}, & B_t>1,\\
        &V^{*}_{t}(s_t,\bR_t,\bs_t',h, B_t\vert\MB) = \max_{a_t\in\Acal}\brc*{\bR_t(s_t,a_t,t) + V^*_{t+1}(\bs_t'(s_t,a_t,t),0^{\otimes SA\ell },\emptyset^{\otimes SA\ell},h+1, 0\vert\MB)},& B_t=1.
    \end{align*}
    Next, we again apply this recursion $B_t$ times until the end of the batch, which yields
    \begin{align*}
        V^{*}_{t}(s_t,\bR_t,\bs_t',h, B_t\vert\MB) 
        &= \max_{a_t,\dots,a_{t+B-1}\in\Acal}\brc*{
        \sum_{b=0}^{B_t-1}\bR_{t+b}(s_{t+b},a_{t+b},t+b) + V^*_{t+B}(s_{t+B_t},h+B_t\vert\MB) },
    \end{align*}
    where the states implicitly depend on the sequence of actions and the lookahead information. Since the dynamics are deterministic given lookahead information, there is a full equivalence between an arbitrary sequence of actions and a deterministic Markov policy that effectively executes this sequence. Therefore, we can write the previous relation using $\phi,\Rla$ and $\sla$ as
    \begin{align}
        V^{*}_{t}(s_t,\bR_t,\bs_t',h, B_t\vert\MB) 
        &= \max_{\phi\in\Pi_{det}}\brc*{\sum_{b=0}^{B_t-1}\Rla_{h+b\vert h}(s_t,\phi,\bR_t,\bs'_t) + V^*_{t+B_t}(\sla_{h+B_t\vert h}(s_t,\phi,\bR_t,\bs'_t),h+B_t\vert\MB)}.
        \label{eq: dp opt inside batch}
    \end{align}
    Finally, as in the evaluation of fixed policies, we turn to states in which a new batch starts. Since the action $a$ in these states is irrelevant and the valid lookahead range is $B\in\brs*{\ell_h}$, the optimal Bellman equation can be written as 
    \begin{align*}
        V^*_{t}(s,h\vert\MB) 
        &= V^*_{t}(s,0^{\otimes SA\ell },\emptyset^{\otimes SA\ell},h, 0\vert\MB)
        = \max_{B\in\brs*{\ell_h}}\E_{\substack{\bR\sim \Rcal_{h}^{B}(s),\\ \bs'\sim \Pcal_{h}^{B}(s)}}\brs*{V^*_{t+1}(s,\bR,\bs',h, B\vert\MB)}\\
        & =  \max_{B\in\brs*{\ell_h}}\E_{\substack{\bR\sim \Rcal_{h}^{B}(s),\\ \bs'\sim \Pcal_{h}^{B}(s)}}\brs*{\max_{\phi\in\Pi_{det}}\brc*{\sum_{b=0}^{B-1}\Rla_{h+b\vert h}(s,\phi,\bR,\bs') + V^*_{t+B+1}(\sla_{h+B\vert h}(s,\phi,\bR,\bs'),h+B\vert\MB)}},
    \end{align*}
    where the last equality substitutes \cref{eq: dp opt inside batch}. All that is left is to prove that the optimal value only depends on $h$ and not on $t$; we denote this value by $V^*_{h}(s\vert\MB)$. This is the result of a simple induction. First, the claim holds trivially for $h=H+1$ and all $t$, where all values are zero. Now assume that the claim holds for any $h'\in\brc*{h+1,\dots H+1}$,  $s\in\Scal$ and $t\leq 2h'-1$. In particular, for any $t\leq 2h-1$ and $B\in \brs*{\ell_h}$, we have that $h+B \in\brc*{h+1,\dots H+1}$ and 
    $$t+B+1 \leq 2h+B = 2(h+B)-B \leq 2(h+B)-1,$$
    so by the induction assumption, $V^*_{t+B+1}(s,h+B\vert\MB) = V^*_{h+B}(s\vert\MB)$. Thus, by the dynamic programming equation, we have that 
    \begin{align*}
        V^*_{t}(s,h\vert\MB) 
        & =  \max_{B\in\brs*{\ell_h}}\E_{\substack{\bR\sim \Rcal_{h}^{B}(s),\\ \bs'\sim \Pcal_{h}^{B}(s)}}\brs*{\max_{\phi\in\Pi_{det}}\brc*{\sum_{b=0}^{B-1}\Rla_{h+b\vert h}(s,\phi,\bR,\bs') + V^*_{t+B+1}(\sla_{h+B\vert h}(s,\phi,\bR,\bs'),h+B\vert\MB)}}\\
        & = \max_{B\in\brs*{\ell_h}}\E_{\substack{\bR\sim \Rcal_{h}^{B}(s),\\ \bs'\sim \Pcal_{h}^{B}(s)}}\brs*{\max_{\phi\in\Pi_{det}}\brc*{\sum_{b=0}^{B-1}\Rla_{h+b\vert h}(s,\phi,\bR,\bs') + V^*_{h+B}(\sla_{h+B\vert h}(s,\phi,\bR,\bs')\vert\MB)}}.
    \end{align*}
    Notably, the formula has not dependence on $t$ and so $V^*_{t}(s,h\vert\MB) \triangleq V^*_{h}(s\vert\MB)$ is independent of $t$.
\end{proof}

\clearpage
A direct implication of the lemma is a formula to evaluate the values of an ABP at the beginning of batches.
\begin{corollary}
    \label{corollary: ABP value}
    Let $\pi$ be an ABP specified as in \Cref{alg:adaptive batching}. Then, when starting a new batch at step $h$ and state $s$, the value of the policy can be calculated via the Bellman equations
    \begin{align*}
        V^\pi_{h}(s) 
        & =  \E_{I\sim\Ical_{h,B_h}(s)}\brs*{\sum_{b=0}^{B_h-1}\Rla_{h+b\vert h}(s,\phi^{h},I) + V^\pi_{h+B_h}(\sla_{h+B_h\vert h}(s,\phi^{h},I))\vert \pi},
    \end{align*}
    where $V^\pi_{H+1}(s)=0$. Specifically, this value can be written as 
    \begin{align*}
        V^\pi_{h}(s) 
        & =  \E\brs*{\sum_{b=0}^{B_h-1}R_{h+b} + V^\pi_{h+B_h}(s_{h+B_h})\vert s_h=s, E_h^n, \pi},
    \end{align*}
    where the expectation is over the entire episode in $\Mcal$.
\end{corollary}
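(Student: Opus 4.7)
The plan is to derive the corollary directly from \Cref{lemma: value in augmented MDP} by specializing from extended ABPs (which may depend on the batch count) to ordinary ABPs (which do not). First, I would observe that the policy described in \Cref{alg:adaptive batching} is, by construction, the restriction of a deterministic Markov policy in $\MB$ (as in \Cref{alg:augmented policy}) to the subclass where both the batching-horizon mapping $B_h^\pi(s)$ and the inner policy $\phi^h(s,I;\pi)$ depend only on the original-MDP timestep $h$ and state $s$, and not on the augmented index $t$ that tracks the batch count. By the equivalence of the two interaction protocols already established in the augmented-MDP construction, the value $V_h^\pi(s)$ of the ABP in $\Mcal$ coincides with $V_t^\pi(s,h\vert \MB)$ for any $t\le 2h-1$.

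Next, I would apply Part 1 of \Cref{lemma: value in augmented MDP} to obtain
\begin{align*}
V^\pi_t(s,h\vert \MB) = \E_{\substack{\bR\sim \Rcal_h^{B_t}(s)\\ \bs'\sim \Pcal_h^{B_t}(s)}}\!\brs*{\sum_{b=0}^{B_t-1}\Rla_{h+b\vert h}(s,\phi^{h,t},\bR,\bs') + V^\pi_{t+B_t+1}\!\br*{\sla_{h+B_t\vert h}(s,\phi^{h,t},\bR,\bs'),h+B_t}}.
\end{align*}
Because the ABP does not use $t$, we have $B_t=B_h^\pi(s)$ and $\phi^{h,t}=\phi^h(s,\cdot;\pi)$, so the integrand depends on $t$ only through the successor value. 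A short backward induction on $h$ (from $H+1$ down to $1$), mirroring the argument used for the optimal values in the second part of the lemma, then shows that $V_t^\pi(s,h\vert\MB)$ is independent of $t$ in the admissible range; the base case $h=H+1$ is trivial, and the inductive step uses that $t+B_h+1 \le 2(h+B_h)-1$ whenever $t\le 2h-1$ and $B_h\in[\ell_h]$. This independence lets us identify both sides with $V_h^\pi(s)$ and $V_{h+B_h}^\pi(\cdot)$, respectively.

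Finally, I would translate the sampling description back to the lookahead-information form. By the data generation process in \Cref{appendix: data generation}, the joint law of $(\bR,\bs')$ under $\Rcal_h^{B_h}(s)\otimes \Pcal_h^{B_h}(s)$ coincides, after pruning, with the law $\Ical_{h,B_h}(s)$ of $I_{h,B_h}(s)$, and $\Rla,\sla$ are defined to agree under the two representations. Substituting $I$ for $(\bR,\bs')$ yields the first Bellman identity in the statement. For the second identity, I would apply the tower property: conditioning on $s_h=s$ and $E_h^n$ (a new batch starts at $h$), the lookahead $I_{h,B_h}(s)$ is sampled independently of the past (this is the whole point of starting a new batch), the inner rewards $R_{h+b}$ and successor state $s_{h+B_h}$ are deterministic functions of $(s,\phi^h,I)$, and the continuation from $s_{h+B_h}$ is itself the expectation defining $V^\pi_{h+B_h}(s_{h+B_h})$ since a new batch starts there as well. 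The main subtlety—and the only nonroutine step—is verifying this independence cleanly, i.e., that no lookahead drawn before step $h$ leaks into the distribution of $I_{h,B_h}(s_h)$; this is guaranteed by the fact that a new batch starts at $h$, so any previously observed lookahead is fully exhausted and the fresh lookahead at $h$ is sampled independently of history given $s_h$.
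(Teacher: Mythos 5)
Your proposal is correct and follows essentially the same route as the paper: apply Part 1 of \Cref{lemma: value in augmented MDP}, run the same backward induction on $h$ (with the bound $t+B_h+1\le 2(h+B_h)-1$) to remove the dependence on the augmented index $t$, and then use the coupling with the data generation process plus the independence of freshly sampled lookahead at batch starts to obtain both identities. The only quibble is notational: writing the law of $(\bR,\bs')$ as a product $\Rcal_h^{B_h}(s)\otimes\Pcal_h^{B_h}(s)$ suggests independence of rewards and transitions, which the model does not assume, but this does not affect the argument.
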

\begin{proof}
    For ABPs, the choice of the lookahead range only depends on the state and timestep, and the policy inside a batch does not depend on the number of past batches (in contrast to extended ABPs); we denote the policy inside a batch by $\phi^h$. Therefore, by \Cref{lemma: value in augmented MDP}, the value of the policy in the augmented MDP is 
    \begin{align*}
        V^\pi_t(s,h\vert \MB) = \E_{\substack{\bR\sim \Rcal_{h}^{B_h}(s),\\ \bs'\sim \Pcal_{h}^{B_h}(s)}}\brs*{\sum_{b=0}^{B_h-1}\Rla_{h+b\vert h}(s,\phi^{h},\bR,\bs') + V^{\pi}_{t+B_h+1}\br*{\sla_{h+B_h\vert h}(s,\phi^{h},\bR,\bs'),h+B_h}\vert s_t=s,\pi,\MB } ,
    \end{align*}
    initialized to zero. By the coupling between $\Mcal$ and $\MB$ (that is, the data generation process in \Cref{appendix: data generation} and the related construction of $\MB$), this value coincides with the one at $\Mcal$ at the beginning of any new batch
    \begin{align*}
        V^\pi_t(s,h\vert \Mcal) &= \E\brs*{\sum_{b=0}^{B_h-1}\Rla_{h+b\vert h}(s,\phi^{h},I) + V^{\pi}_{t+B_h+1}\br*{\sla_{h+B_h\vert h}(s,\phi^{h},I),h+B_h}\vert s_t=s, E^n_h,\pi,\Mcal } \\
        & = \E_{I\sim\Ical_{h,B_h}(s)}\brs*{\sum_{b=0}^{B_h-1}\Rla_{h+b\vert h}(s,\phi^{h},I) + V^{\pi}_{t+B_h+1}\br*{\sla_{h+B_h\vert h}(s,\phi^{h},I),h+B_h}\vert \pi} ,
    \end{align*}
    where the last equality is since the sampling model generates fresh lookahead information at the beginning of each batch. Thus, to prove the corollary, we only need to prove that the values are independent of the index $t$. The proof follows the exact same induction as in the proof of \Cref{lemma: value in augmented MDP} for the optimal value, which we briefly remind. Clearly, when $h=H+1$, the initial $0$-value is the same for all $t$. Then, assuming the claim holds by induction for all $h'>h$ and $t\le 2h'-1$, the claim trivially holds due to the dynamic programming equation at step $h$ for all $t\le 2h-1$. Thus, by induction, for any $h$, if a batch starts at $t\le 2h-1$, the future expected cumulative reward in $\Mcal$ is 
    \begin{align*}
        V^\pi_h(s) = V^\pi_t(s,h\vert \MB)  = \E_{I\sim\Ical_{h,B_h}(s)}\brs*{\sum_{b=0}^{B_h-1}\Rla_{h+b\vert h}(s,\phi^{h},I) + V^{\pi}_{h+B_h}\br*{\sla_{h+B_h\vert h}(s,\phi^{h},I)}\vert \pi}.
    \end{align*}
    Noticing that the batching structure enforces that $t\le 2h-1$ under all ABPs removes the restriction on $t$ and concludes the proof of the first statement.

    For the second statement, we again rely on the independent sampling of the lookahead at batch starts and write
    \begin{align*}
        V^\pi_{h}(s) 
        & =  \E_{I\sim\Ical_{h,B_h}(s)}\brs*{\sum_{b=0}^{B_h-1}\Rla_{h+b\vert h}(s,\phi^{h},I) + V^\pi_{h+B_h}(\sla_{h+B_h\vert h}(s,\phi^{h},I))\vert \pi} \\
        & = \E\brs*{\sum_{b=0}^{B_h-1}\Rla_{h+b\vert h}(s,\phi^{h},I) + V^\pi_{h+B_h}(\sla_{h+B_h\vert h}(s,\phi^{h},I))\vert  s_h=s,E^n_h,\pi} \\
        & = \E\brs*{\sum_{b=0}^{B_h-1}R_{h+b} + V^\pi_{h+B_h}(s_{h+B_h})\vert s_h=s,E^n_h,\pi}.
    \end{align*}
    The last equality is due to the definition of the lookahead information -- starting from $s_h=s$, the policy $\phi^h$ and lookahead information $I$ fully dictate the future collected rewards $R_{h+b}$ and visited states $s_{h+b}$ throughout the lookahead range.
\end{proof}

This corollary allows us to directly prove \Cref{prop: opt values} by substitution.
\optimalValue*
\begin{proof}  
    Following the coupling between the environments $\Mcal$ and $\MB$, by substituting $\Qla^*_h(s,B;I,V_{h+B}^*)$ (see \Cref{eq: Q-val def}), $V^*_h(s)$ is equal to the optimal value in $\MB$, as stated in \Cref{lemma: value in augmented MDP}. Since $\MB$ is Markov, there exists a policy that simultaneously maximizes the value for all states and steps in the interaction. Moreover, by the construction of $\MB$, every ABP corresponds to a policy in $\MB$, so the value of ABPs is upper bounded by the optimal value in $\MB$. Thus, if we prove that an ABP achieves this value, it is necessarily an optimal ABP simultaneously for all initial states; doing so will conclude the proof. We now show that the policy $\pi^*$ described in the statement indeed has a value $V^*_h(s)$. 
    
    By \Cref{corollary: ABP value}, the value of $\pi^*$ follows the dynamic programming equations
    \begin{align}
        \label{eq: dp of opt policy substitution}
        V^{\pi^*}_{h}(s) 
        & =  \E_{I\sim\Ical_{h,B_h^*}(s)}\brs*{\sum_{b=0}^{B_h^*-1}\Rla_{h+b\vert h}(s,\phi^{*,h},I) + V^{\pi^*}_{h+B_h^*}(\sla_{h+B_h^*\vert h}(s,\phi^{*,h},I))}.
    \end{align}
    We now prove by induction that this value is equal to the value $V^{*}_h(s)$. The base of the induction trivially holds when $h=H+1$. Next, assume that the claim holds for any $h'\ge h+1$ and $s\in\Scal$. Then, substituting the induction hypothesis in \cref{eq: dp of opt policy substitution}, we get 
    \begin{align*}
        V^{\pi^*}_{h}(s) 
        & =  \E_{I\sim\Ical_{h,B_h^*}(s)}\brs*{\sum_{b=0}^{B_h^*-1}\Rla_{h+b\vert h}(s,\phi^{*,h},I) + V^{\pi^*}_{h+B_h^*}(\sla_{h+B_h^*\vert h}(s,\phi^{*,h},I))} \\
        & =  \E_{I\sim\Ical_{h,B_h^*}(s)}\brs*{\sum_{b=0}^{B_h^*-1}\Rla_{h+b\vert h}(s,\phi^{*,h},I) + V^{*}_{h+B_h^*}(\sla_{h+B_h^*\vert h}(s,\phi^{*,h},I))} \tag{by induction hypothesis}\\
        & = \E_{I\sim\Ical_{h,B_h^*}(s)}\brs*{\max_{\phi\in\Pi_{det}}\brc*{\sum_{b=0}^{B_h^*-1}\Rla_{h+b\vert h}(s,\phi,I) + V^{*}_{h+B_h^*}(\sla_{h+B_h^*\vert h}(s,\phi,I))}} \tag{by def. of $\phi^{*,h}$}\\
        & = \max_{B\in[\ell_h]}\E_{I\sim\Ical_{h,B}(s)}\brs*{\max_{\phi\in\Pi_{det}}\brc*{\sum_{b=0}^{B-1}\Rla_{h+b\vert h}(s,\phi,I) + V^{*}_{h+B}(\sla_{h+B\vert h}(s,\phi,I))}} \tag{by def. of $B_h^*$} \\
        & = V^{*}_{h}(s) .
    \end{align*}
    This proves the induction, showing that the value of $\pi^*$ at the beginning of new batches is indeed $V^*$, and also concludes the proof.
\end{proof}

\clearpage

\section{Additional Notations}
\label{appendix: additional notations}
In this section, we present additional notations that we require for the algorithm and proofs.

\textbf{Bonus.} We first explicitly present the bonus used in our algorithm and fix it to be
\begin{align}
    \label{eq: bonus}
    b^k_{h}(s,B) = \sqrt{\frac{8\widehat{\VAR}_h^{k}(s,B;\bar{V}^k_{h+B}) L^k_{\delta}}{n^{k}_{h}(s,B) \vee 1}} + \frac{11H L^k_{\delta}}{n^{k}_{h}(s,B) \vee 1},  
\end{align}
where $L^k_{\delta}=\ln\frac{18SH\ell k^3(k+1)}{\delta}\ge1$.

\textbf{Optimal $Q$-values and their variance.} We let $Q^*_h(s,B)$ be the value of the optimal batching policy that starts a new batch of length $B$ at time step $h$ and state $s$ and continues optimally. This value can be explicitly written as
\begin{align*}
    Q^*_h(s,B) = \E_{I\sim\Ical_{h,B}(s)}\brs*{\Qla^*_h(s,B;I,V^*_{h+B})}.
\end{align*}
In particular, notice that $V^*_h(s) = \max_{B\in[\ell_h]}\brc*{ Q^*_h(s,B)}$.

We denote by $\VAR^*_h(s,B)$, the variance of the optimal return starting from state $s$ at step $h$ with lookahead range $B$. The variance is calculated w.r.t. fresh samples of lookahead information
\begin{align*}
     \VAR^*_h(s,B)
    &= \E_{I\sim\Ical_{h,B}(s)}\brs*{\Qla^*_h(s,B;I,V^*_{h+B})^2} - \br*{\E_{I\sim\Ical_{h,B}(s)}\brs*{\Qla^*_h(s,B;I,V^*_{h+B})}}^2.
\end{align*}
\textbf{$Q$-values of ABPs and their variance.} For any ABP $\pi$, recall that we use the notation $\phi^h$ to denote the Markov policy played given a batch starting at step $h$ and the lookahead range as $B_h=B_h^{\pi}(s)$. We now denote the value of $\pi$ given a batch starting at step $h$, state $s$ and lookahead information $I$ to be 
    \begin{align}
        \label{eq: Q-value ABP}
        \Vla^\pi_h(s,I)=\sum_{t=h}^{h+B_h-1}\Rla_{t\vert h}^{k}(s,\phi^h,I) +V^{\pi}_{h+B_h}(\sla_{h+B_h\vert h}(s,\phi^h,I)).
    \end{align}
    In particular, by \Cref{corollary: ABP value}, it holds that $\E_{I\sim\Ical_{h,B_h}(s)}\brs*{\Vla^\pi_h(s,I)}=V^\pi_h(s)$, and we denote the variance of this value by 
    \begin{align}
        \label{eq: Q-value variance ABP}
            \VAR^\pi_h(s) = \E_{I\sim\Ical_{h,B_h}(s)}\brs*{\Vla^\pi_h(s,I)^2} - \br*{V^\pi_h(s)}^2
            = \E_{I\sim\Ical_{h,B_h}(s)}\brs*{\br*{\Vla^\pi_h(s,I)-V^\pi_h(s)}^2}.
    \end{align}
    Notice that by the boundedness of the rewards (and consequently, of the values), it holds that $\Vla^\pi_h(s,I)\in[0,H]$ and $\VAR^\pi_h(s)\in[0,H^2]$. Moreover, since we push through the lookahead with the policy we play, assuming we start from a new batch in step $h$, we have that $\Rla_{t\vert h}^{k}(s,\phi^h,I)=R_t$ and $\Rla_{t\vert h}^{k}(s,\phi^h,I)=s_t$, so we can write
    \begin{align*}
        & \Vla^\pi_h(s,I) = \sum_{t=h}^{h+B_h-1}R_t + V_{h+B_h}^{\pi}(s_{h+B_h}),\textrm{ and}\\
        &\VAR^\pi_h(s) = \E\brs*{\br*{\sum_{t=h}^{h+B_h-1}R_t + V_{h+B_h}^{\pi}(s_{h+B_h})-V_{h}^{\pi}(s)}^2\vert \pi,s_h=s,E^n_h} .
    \end{align*}
    where in both expressions, the dependence in the lookahead is implicit in the rewards and states.

Finally, for ease of notation, for any $x\in\R^d$, define the set 
\begin{align*}
    [x,C] = \brc*{y\in\R^d: y(i)\in[x(i),C], \forall i\in[d]}.
\end{align*}

\clearpage

\section{Good Events}

\subsection{The First Good Event -- Concentration}
\label{appendix: concentration event}
We define the following good events:
{\small
\begin{align*}
    &E^{opt}(k)=\brc*{\forall s,h,B:\ \abs*{Q^*_h(s,B) - \widehat{\E}_{h,s,B}^k\brs*{\Qla^*_h(s,B;I,V^*_{h+B})}} \leq \sqrt{\frac{2\VAR^*_h(s,B) L^k_{\delta}}{n^{k}_h(s,B) \vee 1}} + \frac{HL^k_{\delta}}{n^{k}_h(s,B) \vee 1}}\\
    &E^{\widehat{\VAR}}(k)=\brc*{\forall s,h,B:\ \abs*{ \sqrt{\widehat{\VAR}_h^{k}(s,B;V^*_{h+B})} -  \sqrt{\VAR^*_h(s,B)} } \leq 4H\sqrt{\frac{L^k_{\delta}}{n^{k}_h(s,B)\vee 1}} }\\
    &E^{Err}(k)=\Biggl\{\forall s,h,B\in[\ell_h], \forall V\in[V^*_{h+B},H]: \Biggr.\\
    & \qquad\qquad \qquad \left.\widehat{\E}_{h,s,B}^k\brs*{\Qla^*_h(s,B;I,V)-\Qla^*_h(s,B;I,V^*_{h+B})}\leq \br*{1+\frac{1}{2H}}\E_{I\sim\Ical_{h,B}(s)}\brs*{\Qla^*_h(s,B;I,V)-\Qla^*_h(s,B;I,V^*_{h+B})} + \frac{4H^2SL^k_{\delta}}{n^{k}_h(s,B)\vee 1}\right\}
\end{align*}
}
where we again use $L^k_{\delta}=\ln\frac{18SH\ell k^3(k+1)}{\delta}$. The first good event is the intersection of all these events, i.e.,
$$\G_1 = \bigcap_{k\geq 1} E^{opt}(k) \bigcap_{k\geq 1} E^{\widehat{\VAR}}(k)\bigcap_{k\geq 1} E^{Err}(k),$$
and for which, the following holds: 
\begin{lemma}[The first good event]\label{lemma: first good event}
It holds that $\Pr(\G_1)\geq 1-\delta/2$.
\end{lemma}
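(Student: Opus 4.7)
The plan is to split the $\delta/2$ budget into three pieces of $\delta/6$ and bound $\Pr(\exists k: \neg E^{opt}(k))$, $\Pr(\exists k: \neg E^{\widehat{\VAR}}(k))$, and $\Pr(\exists k: \neg E^{Err}(k))$ separately. The uniformity setup is identical in all three cases: by the data generation process in \Cref{appendix: data generation}, conditional on the event ``episode $m$ starts a new batch in state $s$ at step $h$ with lookahead range $B$,'' the lookahead $I_h^m(s)$ is drawn independently from $\Ical_{h,B}(s)$. So I can couple $\brc*{I_h^m(s)}_{m \in \Kcal_h^k(s,B)}$ to the first $n_h^k(s,B)$ elements of an i.i.d.\ $\Ical_{h,B}(s)$-sequence and apply an anytime Freedman/Bernstein inequality. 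The $k^3(k+1)$ factor inside $L^k_\delta$ performs the peeling over both the episode index $k$ and the random stopping time $n_h^k(s,B) \in \brc*{0,\dots,k}$, while the extra $SH\ell$ inside the log absorbs the union bound over triples $(s,h,B)$.

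For $E^{opt}$, the anytime Bernstein inequality applied to $\Qla^*_h(s,B;I,V^*_{h+B}) \in [0,H]$, which has mean $Q^*_h(s,B)$ and variance $\VAR^*_h(s,B)$, directly gives the stated deviation. For $E^{\widehat{\VAR}}$, I would apply Bernstein/Hoeffding separately to the empirical first and second moments of $\Qla^*_h(s,B;I,V^*_{h+B})$ and combine them through
\begin{align*}
\abs*{\widehat{\VAR}_h^k(s,B;V^*_{h+B}) - \VAR^*_h(s,B)} \leq \abs*{\widehat{\E}[(\Qla^*)^2] - \E[(\Qla^*)^2]} + 2H \abs*{\widehat{\E}[\Qla^*] - \E[\Qla^*]},
\end{align*}
and finally translate the additive bound on $\widehat{\VAR}-\VAR^*$ into one on $\sqrt{\widehat{\VAR}}-\sqrt{\VAR^*}$ via $\abs*{\sqrt{a}-\sqrt{b}} \leq \sqrt{\abs*{a-b}}$. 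Union bounds over $(s,h,B,k)$ close both families.

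The hard part is $E^{Err}$, which demands uniform concentration over the continuous set $V \in [V^*_{h+B},H]^{\Scal}$. Set $f_V(I) \eqdef \Qla^*_h(s,B;I,V)-\Qla^*_h(s,B;I,V^*_{h+B})$. The key structural facts are: (i) $V \mapsto \Qla^*_h(s,B;I,V)$ is monotone nondecreasing and $1$-Lipschitz in $\norm*{\cdot}_\infty$, so $f_V \in [0,H]$; and (ii) since $f_V \in [0,H]$ is nonnegative, $\VAR(f_V) \leq H \cdot \E[f_V]$. Combined with AM-GM, $\sqrt{2H\E[f_V] L/n} \leq \E[f_V]/(2H) + H^2 L/n$, these facts convert a single-$V$ Bernstein statement into the multiplicative form $\widehat{\E}[f_V] \leq \br*{1 + \tfrac{1}{2H}}\E[f_V] + \Ocal(H^2 L/n)$. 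To make it uniform, I would build an $\epsilon$-net $\mathcal{N}_\epsilon \subseteq [V^*_{h+B},H]^{\Scal}$ in $\norm*{\cdot}_\infty$ of cardinality $(H/\epsilon)^S$, apply the single-$V$ bound at each $V' \in \mathcal{N}_\epsilon$ with the union bound inflating $L \leftarrow L + S\ln(H/\epsilon)$, and for an arbitrary $V$ pick the closest $V' \in \mathcal{N}_\epsilon$ with $V' \geq V^*_{h+B}$, paying only $\Ocal(\epsilon)$ extra slack by $1$-Lipschitzness of $f_{(\cdot)}(I)$. Choosing $\epsilon = 1/n$ (peeled over $n$) makes the covering contribution $S\ln(Hn)$ absorbable into $H^2 S L^k_\delta / n$ thanks to the $k^3$ factor in $L^k_\delta$, yielding the claimed bound.

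The main obstacle is exactly this uniform step: the $S$ appearing in $4H^2 S L^k_\delta/n$ reflects the unavoidable dimension of the value-function class being covered, and the tight multiplicative factor $\br*{1+\tfrac{1}{2H}}$ --- which is critical downstream so that error amplification across a full episode is at most $\br*{1+\tfrac{1}{2H}}^H = \Ocal(1)$, as used in \Cref{section: learning} --- relies on extracting $\VAR(f_V) \leq H\E[f_V]$ rather than the trivial $\VAR(f_V) \leq H^2$. The adaptive/stopping-time nature of $n_h^k(s,B)$ is then handled by the anytime Freedman/peeling machinery already baked into $L^k_\delta$.
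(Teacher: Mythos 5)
Your overall architecture matches the paper's: pre-generate the lookahead samples so that the empirical means are averages of i.i.d.\ draws, union-bound over the random count $n^k_h(s,B)\in\{0,\dots,k\}$ and over $(s,h,B,k)$ via the $SH\ell k^3(k+1)$ factor in $L^k_\delta$, prove $E^{opt}$ by Bernstein on $\Qla^*_h(s,B;I,V^*_{h+B})\in[0,H]$, and prove $E^{Err}$ by a single-$V$ Freedman bound exploiting $\E[f_V^2]\le H\,\E[f_V]$ together with an $\epsilon$-net over $[V^*_{h+B},H]^{\Scal}$ using monotonicity and $1$-Lipschitzness of the max; this last piece is exactly the paper's \Cref{lemma: uniform max concentration}, which you have essentially re-derived.

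There is, however, a genuine gap in your treatment of $E^{\widehat{\VAR}}$. Bounding the moments additively gives $\abs*{\widehat{\VAR}-\VAR^*}\lesssim H^2\sqrt{L^k_\delta/n}$, and passing through $\abs*{\sqrt{a}-\sqrt{b}}\le\sqrt{\abs*{a-b}}$ then yields only $\abs*{\sqrt{\widehat{\VAR}}-\sqrt{\VAR^*}}\lesssim H\,(L^k_\delta/n)^{1/4}$. The event $E^{\widehat{\VAR}}(k)$ demands the rate $4H\sqrt{L^k_\delta/n}$, which is strictly smaller than $H(L^k_\delta/n)^{1/4}$ whenever $n\gg L^k_\delta$, so your chain of inequalities does not establish the event. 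The paper avoids this by invoking a Maurer--Pontil--type self-bounding concentration for the standard deviation itself (\Cref{lemma: variance concentration}, imported from \citealt{merlis2024reinforcement}), which gives the $n^{-1/2}$ rate directly. Your route is repairable --- e.g., use a Bernstein bound on the second moment whose variance proxy is itself controlled by $\VAR^*$, and then divide $\abs*{\widehat{\VAR}-\VAR^*}$ by $\sqrt{\widehat{\VAR}}+\sqrt{\VAR^*}$ instead of taking $\sqrt{\abs*{a-b}}$ --- but as written the step fails, and the $n^{-1/2}$ rate matters downstream: it is what lets the optimism argument absorb the variance-estimation error into an $H L^k_\delta/n$ additive term rather than a slower-decaying one.
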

\begin{proof}
    We first remind the reader that when limiting ourselves to adaptive batching policies, upon ending a batch, the agent first decides on the next batching horizon and only then observes the lookahead information. Consequently, for such policies, the distribution of the lookahead information starting at state $s_h=s$ with lookahead range of $B$, $\Ical_{h,B}(s)$, is independent of any event up to timestep $h$ (including whether $s_h=s$). Moreover, no new lookahead information is observed until the start of a new batch. As such, we can assume w.l.o.g. that the lookahead information is generated only after the agent chooses the lookahead range $B$, and is generated simultaneously for the whole batch of $B$ steps.

    Specifically, we assume w.l.o.g. that before the interaction starts, we generate $I_{h,B}^n(s)$ for all $h\in[H],n\in[K]$, $s\in\Scal$ and $B\in[\ell_h]$. Then, upon the $i^{th}$ time the agent is starting a new batch of lookahead range $B$ at $s_h=s$, the agent observes the lookahead information $I_{h,B}^i(s)$. Under this assumption, if $n_h^{k}(s,B)=n\ge1$, then $\widehat{\E}_{h,s,B}^k[f(I)] = \frac{1}{n} \sum_{i=1}^nf(I_{h,B}^i(s)).$ 

    As a preliminary step, we analyze the domain of the values and Q-values. First, since all rewards are non-negative, and due to the clipping of $\bar{Q}^k_h(s,B)$, it holds that $\bar{V}^k_h(s)\in\brs*{0,H-h+1}$ for all $k,h$ and $s$.

    Next, we recursively prove that $V^*_h(s)$, $Q^*_h(s,B)$ and $\Qla^*_h(s,B;I,V^*_{h+B})$ are all bounded (a.s.) in $[0,H-h+1]$. The recursion initializes at $V^*_{H+1}(s)=0$, obeying this requirement. We continue and assume that the domain holds by induction for $V^*_{h'}(s)$ for all $h'>h$ and $s\in\Scal$. Since the reward is bounded a.s. in $[0,1]$, then for any $s\in\Scal,B\in[\ell_h]$ and valid lookahead information $I$,
    \begin{align*}
        \Qla^*_h(s,B;I,V^*_{h+B}) = \max_{\phi\in\Pi_{det}}\brc*{\sum_{t=h}^{h+B-1}\Rla_{t\vert h}^{k}(s,\phi,I) +V^*_{h+B}(\sla_{h+B\vert h}(s,\phi,I))}
        \leq B+(H-B-h+1) \leq H-h+1,
    \end{align*}
    and, similarly, $\Qla^*_h(s,B;I,V^*_{h+B})\geq 0$. 
    Consequently, it holds that 
    \begin{align*}
        Q^*_h(s,B) = \E\brs*{\Qla^*_h(s,B;I,V^*_{h+B})}\in[0,H-h+1], \quad \textrm{and,}\quad V^*_h(s) = \max_{B\in[\ell_h]}\brc*{ Q^*_h(s,B)}\in[0,H-h+1].
    \end{align*}
    Notably, the support of all values directly implies that all the events trivially hold when $n_h^{k}(s,B)=0$: the upper bound for this case is larger than the range of the random quantities. Thus, in the following, we analyze w.l.o.g. only the case of $n_h^{k}(s,B)\ge 1$.
    
    \textbf{Event $\bigcap_{k\geq 1} E^{opt}(k)$.} By the assumption that the lookahead was generated independently before the interaction, it holds that $Q^*_h(s,B) = \E\brs*{\Qla^*_h(s,B;I_{h,B}^i(s),V^*_{h+B})}$, and that different values of $n$ are independent. Moreover, the variance of $\Qla^*_h(s,B;I_{h,B}^i(s),V^*_{h+B})$ is $\VAR^*_h(s,B)$. Thus, by Bernstein's inequality, for any  $\delta'>0$, it holds w.p. $1-\delta'$ that 
        \begin{align}
         \abs*{Q^*_h(s,B) - \frac{1}{n}\sum_{i=1}^n\Qla^*_h(s,B;I_{h,B}^i(s),V^*_{h+B})} &\le \sqrt{\frac{2\VAR^*_h(s,B)\ln\frac{2}{\delta'} }{n}} +\frac{2(H-h+1)\ln\frac{2}{\delta'} }{3n}\nonumber\\
         &\leq \sqrt{\frac{2\VAR^*_h(s,B)\ln\frac{3}{\delta'} }{n}} +\frac{2H\ln\frac{3}{\delta'} }{3n} \label{eq: Bernstein}
     \end{align}
    Specifically, we get
    \begin{align*}
        \Pr&\br*{\bar{E}^{opt}(k)}
         = \Pr\brc*{\exists s,h,B: \abs*{Q^*_h(s,B) - \widehat{\E}_{h,s,B}^k\brs*{\Qla^*_h(s,B;I,V^*_{h+B})}} > \sqrt{\frac{2\VAR^*_h(s,B) L^k_{\delta}}{n^{k}_h(s,B) \vee 1}} + \frac{HL^k_{\delta}}{n^{k}_h(s,B) \vee 1} } \\
        & \leq  \Pr\brc*{\exists s,h,B, \exists n\in[k]: \abs*{Q^*_h(s,B) - \widehat{\E}_{h,s,B}^k\brs*{\Qla^*_h(s,B;I,V^*_{h+B})}} > \sqrt{\frac{2\VAR^*_h(s,B) L^k_{\delta}}{n^{k}_h(s,B) \vee 1}} + \frac{HL^k_{\delta}}{n^{k}_h(s,B) \vee 1}, n^{k}_h(s)=n} \\
        & = \Pr\brc*{\exists s,h,B, \exists n\in[k]: \abs*{Q^*_h(s,B) - \frac{1}{n}\sum_{i=1}^n\Qla^*_h(s,B;I_{h,B}^i(s),V^*_{h+B})}  > \sqrt{\frac{2\VAR^*_h(s,B) L^k_{\delta}}{n}} + \frac{HL^k_{\delta}}{n}, n^{k}_h(s)=n} \\
        & \leq \sum_{s,h,B,n} \Pr\brc*{\abs*{Q^*_h(s,B) - \frac{1}{n}\sum_{i=1}^n\Qla^*_h(s,B;I_{h,B}^i(s),V^*_{h+B})}  > \sqrt{\frac{2\VAR^*_h(s,B) L^k_{\delta}}{n}} + \frac{HL^k_{\delta}}{n}} \\
        & \overset{(*)}\leq SH\ell k\frac{\delta}{6SH\ell k^3(k+1)} 
        \leq \frac{\delta}{6k(k+1)},
    \end{align*}
    where $(*)$ is by \cref{eq: Bernstein}, since $L^k_{\delta}$ corresponds to the value $\delta'=\frac{\delta}{6SH\ell k^3(k+1)}$. Using the union bound to sum over all $k\ge1$ yields
    \begin{align*}
        \Pr\brc*{\bigcap_{k\geq 1} E^{opt}(k)} = 1- \Pr\brc*{\bigcup_{k\geq 1} \bar{E}^{opt}(k)} \geq 1-\sum_{k\ge1 }\Pr\brc*{\bar{E}^{opt}(k)}
        \geq 1- \sum_{k\ge1}\frac{\delta}{6k(k+1)} \geq 1-\frac{\delta}{6}.
    \end{align*}

    \clearpage

    \textbf{Event $\bigcap_{k\geq 1} E^{\widehat{\VAR}}(k)$.} Again, by the independent initial lookahead generation, when $n_h^{k}(s,B)=n$, the variance $\widehat{\VAR}_h^{k}(s,B;V^*_{h+B})$ is calculated using the $n$ i.i.d. samples $\brc*{I_{h,B}^i(s)}_{i\in[n]}$. We now apply \Cref{lemma: variance concentration} on the variables $X_i=\Qla^*_h(s,B;I_{h,B}^i(s),V_{h+B}^*)$ (which are bounded in $[0,H]$), and as with $\bar{E}^{opt}(k)$, we take the union bound on $n\in[k]$ to account for all possible values of $n_h^{k}(s,B)$. Also taking the union bounds for all potential $s,h,B$, and noting that $L^k_{\delta}$ corresponds to setting $\delta'=\frac{\delta}{9SH\ell k^3(k+1)}$, we get
    \begin{align*}
        \Pr\br*{\bar{E}^{\widehat{\VAR}}(k)}
         &= \Pr\brc*{\forall s,h,B:\ \abs*{ \sqrt{\widehat{\VAR}_h^{k}(s,B;V^*_{h+B})} -  \sqrt{\VAR^*_h(s,B)} } > 4H\sqrt{\frac{L^k_{\delta}}{n^{k}_h(s,B)\vee 1}}} \\
         & \leq SH\ell k\frac{\delta}{9SH\ell k^3(k+1)} \leq \frac{\delta}{6k(k+1)}.
    \end{align*}
    Therefore, 
    $\Pr\br*{\bigcap_{k\geq 1}E^{\widehat{\VAR}}(k)}\geq 1-\sum_{k\ge1}\Pr\br*{\bar{E}^{\widehat{\VAR}}(k)}\geq  1-\frac{\delta}{6}.$
    
    \textbf{Event $\bigcap_{k\geq 1} E^{Err}(k)$.} For this event, we apply \Cref{lemma: uniform max concentration} for any $s,h,B$ w.r.t. the random vectors 
    $X_i(s')=\Jla_{h,B}^{I_{h,B}^i(s)}(s,s')$ and $Y_i=\Sla_{h,B}^{I_{h,B}^i(s)}(s)$; by the boundedness of the rewards $X_i\in[0,H]^S$, and clearly $Y_i\subseteq\brs*{S}$. Moreover, for this choice, we get by \cref{eq: J and S to Q} that 
    \begin{align*}
        f(X_i,Y_i,V) =\max_{s'\in Y_i}\brc*{X_i(s')+V(s')}
        =\Qla^*_h(s,B;I_{h,B}^i(s),V).
    \end{align*}
    In particular, fixing $V_0=V_{h+B}^*$, $\delta'=\frac{\delta}{6SH\ell k^3(k+1)}, \alpha=1/2H$ and using the union bound similarly to the events $E^{opt}(k),E^{\widehat{\VAR}}(k)$, \Cref{lemma: uniform max concentration} implies that $\Pr\brc*{\bar{E}^{Err}(k)}\leq \frac{\delta}{6k(k+1)}$. Finally, another union bound over all $k\ge1$ leads to the bound $\Pr\br*{\bigcap_{k\geq 1}E^{Err}(k)}\geq  1-\frac{\delta}{6}.$

    To conclude, we proved that each of the events $\bigcap_{k\geq 1} E^{opt}(k) \bigcap_{k\geq 1} E^{\widehat{\VAR}}(k)\bigcap_{k\geq 1} E^{Err}(k)$ hold w.p. at least $1-\frac{\delta}{6}$; thus, as their intersection, the event $\G_1$ holds w.p. at least $1-\frac{\delta}{2}$.
\end{proof}

\clearpage


\subsection{Optimism Under the First Good Event}
We now prove that under the event $\G_1$, the values that AL-UCB outputs are optimistic. 
\begin{lemma}[Optimism] \label{lemma: optimism of values}
Under the first good event $\G_1$, for all $k\in[K]$, $h\in [H]$, $s\in\Scal$, $B\in[\ell_h]$ and lookahead information $I$, it holds that $\Qla^*_h(s,B;I,V^*_{h+B})\leq \Qla^*_h(s,B;I,\bar{V}^k_{h+B})$; in particular, $V^*_h(s)\leq \bar{V}^k_{h}(s)$.
\end{lemma}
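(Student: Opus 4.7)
The plan is a standard backward induction on $h$, going from $h = H+1$ down to $h = 1$. The base case is immediate: by construction $V^*_{H+1}(s) = \bar{V}^k_{H+1}(s) = 0$, and $\Qla^*_h$ is only relevant for $h \leq H$. For the inductive step, assume $V^*_{h'}(s) \leq \bar{V}^k_{h'}(s)$ for every $h' > h$ and $s \in \Scal$. The first claim is then immediate from the induction hypothesis together with monotonicity of $\Qla^*_h(s,B;I,V)$ in $V$ (recall from \cref{eq: Q-val def} that $\Qla^*$ is a maximum over $\phi$ of $\sum R + V(\sla_{h+B\vert h})$, hence nondecreasing in each coordinate of $V$); this gives $\Qla^*_h(s,B;I,V^*_{h+B}) \leq \Qla^*_h(s,B;I,\bar{V}^k_{h+B})$ for every valid $I$.

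For the second claim, since $V^*_h(s) = \max_{B\in[\ell_h]} Q^*_h(s,B)$ and $\bar{V}^k_h(s) = \max_{B\in[\ell_h]} \bar{Q}^k_h(s,B)$, it suffices to show $Q^*_h(s,B) \leq \bar{Q}^k_h(s,B)$ for every $B \in [\ell_h]$. Because $Q^*_h(s,B) \in [0, H-h+1]$ and $\bar{Q}^k_h$ is clipped from above at $H-h+1$, the task reduces to proving $Q^*_h(s,B) \leq \widehat{\E}_{h,s,B}^k\brs*{\Qla^*_h(s,B;I,\bar{V}^k_{h+B})} + b^k_h(s,B)$. I would decompose the right-hand side via the monotonicity established in the first part, writing $\widehat{\E}_{h,s,B}^k\brs*{\Qla^*_h(s,B;I,\bar{V}^k_{h+B})} \geq \widehat{\E}_{h,s,B}^k\brs*{\Qla^*_h(s,B;I,V^*_{h+B})}$, and then apply $E^{opt}(k)$ to lower-bound the latter empirical mean by $Q^*_h(s,B) - \sqrt{2\VAR^*_h(s,B) L^k_{\delta}/(n^k_h(s,B)\vee 1)} - HL^k_{\delta}/(n^k_h(s,B)\vee 1)$. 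It then remains to show that the bonus $b^k_h(s,B)$ dominates these Bernstein error terms.

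The main obstacle is that the bonus is built from $\widehat{\VAR}_h^k(s,B;\bar{V}^k_{h+B})$, whereas the Bernstein deficit involves $\VAR^*_h(s,B)$; there is no direct comparison lemma between these two quantities. I would bridge them in two steps. First, $E^{\widehat{\VAR}}(k)$ directly controls the gap between $\sqrt{\widehat{\VAR}_h^k(s,B;V^*_{h+B})}$ and $\sqrt{\VAR^*_h(s,B)}$, giving one triangle-inequality step. Second, to replace $V^*_{h+B}$ with $\bar{V}^k_{h+B}$ inside the empirical variance, I would use the $L^2$ triangle inequality applied to the samples $X_i = \Qla^*_h(s,B;I_h^i,V^*_{h+B})$ and $Y_i = \Qla^*_h(s,B;I_h^i,\bar{V}^k_{h+B})$ (noting $Y_i \geq X_i$ by the monotonicity above and $Y_i - X_i \leq H$), yielding $\sqrt{\widehat{\VAR}(X)} \leq \sqrt{\widehat{\VAR}(Y)} + \sqrt{\widehat{\E}[(Y - X)^2]}$, and then bounding $\widehat{\E}[(Y-X)^2] \leq H \cdot \widehat{\E}[Y - X]$. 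The event $E^{Err}(k)$ is exactly tailored to control $\widehat{\E}[Y - X]$ in terms of its population counterpart plus an $O(H^2 S L^k_\delta / n^k_h(s,B))$ lower-order term.

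Finally, I would collect the error terms and check that the slack in the bonus absorbs them: the coefficient $8$ inside the square root of $b^k_h$ (versus $2$ in the Bernstein deficit) provides the room to absorb the triangle-inequality losses from both bridging steps, while the $11H$ coefficient on the linear term absorbs the $H$ from $E^{opt}(k)$, the $4H\sqrt{L^k_\delta/n}$ from $E^{\widehat{\VAR}}(k)$, and the $\sqrt{H \cdot 4H^2 S L^k_\delta/n}$ term coming from the $E^{Err}(k)$ correction after pulling out an appropriate AM-GM step (since only a logarithmic factor plus lower-order terms is needed). Working through these inequalities concludes $Q^*_h(s,B) \leq \bar{Q}^k_h(s,B)$, closes the induction, and in particular yields $V^*_h(s) \leq \bar{V}^k_h(s)$.
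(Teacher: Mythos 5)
Your induction skeleton, the treatment of the first claim via monotonicity of $V\mapsto\Qla^*_h(s,B;I,V)$, and the reduction of the second claim to $Q^*_h(s,B)\leq \widehat{\E}_{h,s,B}^k\brs{\Qla^*_h(s,B;I,\bar{V}^k_{h+B})}+b^k_h(s,B)$ all match the paper. The divergence — and the gap — is in how you bridge $\widehat{\VAR}_h^k(s,B;\bar{V}^k_{h+B})$ to $\VAR^*_h(s,B)$. The paper does not use a triangle inequality or $E^{Err}(k)$ here at all: it writes $11=8+3$, lower-bounds the bonus by $\max\brc{\sqrt{8\widehat{\VAR}L^k_\delta/n},\,8HL^k_\delta/n}+3HL^k_\delta/n$, and invokes the monotone-bonus property (\Cref{lemma: bonus monotonicity}), under which the entire expression $\widehat{\E}\brs{v}+\max\brc{\sqrt{8\widehat{\VAR}(v)L^k_\delta/n},8HL^k_\delta/n}$ is nondecreasing in $v$. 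This replaces $\bar{V}^k_{h+B}$ by $V^*_{h+B}$ in the mean \emph{and} the variance simultaneously with zero loss; then $\max\brc{a,b}\geq(a+b)/2$, $E^{\widehat{\VAR}}(k)$, and $E^{opt}(k)$ close the argument with the stated constants. The floor term $8HL^k_\delta/n$ inside the max is exactly what compensates for the fact that increasing the value vector can \emph{decrease} the empirical standard deviation.

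Your route through $E^{Err}(k)$ does not close. Writing $X_i=\Qla^*_h(s,B;I^i,V^*_{h+B})$ and $Y_i=\Qla^*_h(s,B;I^i,\bar{V}^k_{h+B})$, the Minkowski step costs you a subtractive term of order $\sqrt{L^k_\delta H\,\widehat{\E}\brs{Y-X}/n}$. If you push this through $E^{Err}(k)$ as proposed, you get (i) a remainder of order $\sqrt{H^3SL^k_\delta{}^2/n^2}=H^{3/2}\sqrt{S}\,L^k_\delta/n$, which is \emph{not} dominated by the $11HL^k_\delta/n$ term in the bonus for large $H,S$, and (ii) a term proportional to $\sqrt{\E_{I}\brs{Y-X}\cdot HL^k_\delta/n}$ involving the \emph{population} expectation, which appears nowhere else in the inequality $\bar{Q}^k_h\geq Q^*_h$ and hence cannot be absorbed ($E^{Err}(k)$ only bounds $\widehat{\E}$ by $\E$ from above, not the reverse, so you cannot trade it back for the empirical $\widehat{\E}\brs{Y-X}$ that does sit inside $\widehat{\E}\brs{Y}$). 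Optimism admits no slack on the wrong side, so these leftovers are fatal. The fix is to drop $E^{Err}(k)$ entirely from this lemma: keep the cross term empirical and complete the square against the $\widehat{\E}\brs{Y-X}$ already present in $\widehat{\E}\brs{Y}=\widehat{\E}\brs{X}+\widehat{\E}\brs{Y-X}$, i.e. $\widehat{\E}\brs{Y-X}-\sqrt{cHL^k_\delta\widehat{\E}\brs{Y-X}/n}\geq -cHL^k_\delta/(4n)$. Even then you must apply the Minkowski and $E^{\widehat{\VAR}}(k)$ corrections only to a $\sqrt{2\widehat{\VAR}L^k_\delta/n}$ portion of the $\sqrt{8\widehat{\VAR}L^k_\delta/n}$ bonus (discarding the surplus), since scaling the $4H\sqrt{L^k_\delta/n}$ correction by $\sqrt{8}$ rather than $\sqrt{2}$ already exhausts the $11HL^k_\delta/n$ budget. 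With those two repairs your approach does go through and is a legitimate alternative to the monotone-bonus argument, but as written the accounting fails; $E^{Err}(k)$ is reserved in the paper for the regret analysis (bounding the bonus and term $(i)$ from above), where such lower-order terms are harmless.
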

\begin{proof}
    We prove this by backwards induction on the values $V^*_h(s)$ and $\bar{V}^k_{h}(s)$. First, by definition, $V^*_{H+1}(s)= \bar{V}^k_{H+1}(s)=0$ for all $s\in\Scal$, and so the base of the induction holds. Now, for any $k\in[K]$ and $h\in[H]$, assume by induction that $V^*_{t}(s)\leq \bar{V}^k_{t}(s)$ for all $t\in\brc*{h+1,\dots,H+1}$ and $s\in\Scal$; we will show that it implies both claims also for timestep $h$.

    For the first claim, by the induction hypothesis, for any $s\in\Scal$, $B\in[\ell_h]$ and lookahead information $I$, it holds that 
    \begin{align}
        \Qla^*_h(s,B;I,\bar{V}^k_{h+B})
        &= \max_{s'\in \Sla_{h,B}^I(s)}\brc*{\Jla_{h,B}^I(s,s') +  \bar{V}^k_{h+B}(s')} \nonumber\\
        &\geq \max_{s'\in \Sla_{h,B}^I(s)}\brc*{\Jla_{h,B}^I(s,s') +  V^*_{h+B}(s')} \tag{$\bar{V}^k_{h+B}(s')\geq V^*_{h+B}(s')$ since $B\ge 1$} \\
        & = \Qla^*_h(s,B;I,V^*_{h+B}) \label{eq: lookahead-conditioned optimism}.       
    \end{align}
    We now continue by proving that under the good event and induction hypothesis, $Q^*_h(s,B)\leq \bar{Q}^k_h(s,B)$ for all $s\in\Scal$ and $B\in[\ell_h]$. We first briefly discuss the domain of the $Q$-values. Since rewards are bounded in $[0,1]$, it holds that $Q^*_h(s,B)\in[0,H-h+1]$. As for the optimistic values, the rewards are non-negative, and therefore so are $\bar{V}^k_{h}(s)$ and $ \Qla^*_h(s,B;I,\bar{V}^k_{h+B})$ (almost surely for all potential lookahead realizations). Moreover, again by the upper bound on the rewards and the truncation of $\bar{V}^k_{h+B}$, it holds that $\Qla^*_h(s,B;I,\bar{V}^k_{h+B})\leq H-h+1$ almost surely. Thus, to summarize, we know that both $Q^*_h(s,B)$ and $\Qla^*_h(s,B;I,\bar{V}^k_{h+B})$ take values in $[0,H-h+1]$. In particular, whenever $\bar{Q}^k_h(s,B) = H-h+1$, the claim $Q^*_h(s,B)\leq \bar{Q}^k_h(s,B)$ trivially holds; therefore, in the following, we assume w.l.o.g. that $\bar{Q}^k_h(s,B) < H-h+1$. Notice that this also implies that $n^{k}_{h}(s,B)\ge1$: when $n^{k}_{h}(s,B)=0$, the bonus is large enough to ensure that $\bar{Q}^k_h(s,B) = H-h+1$. 
    
    Using the aforementioned \cref{eq: lookahead-conditioned optimism} (and the insight that $n^{k}_{h}(s,B)\ge1$) to apply \Cref{lemma: bonus monotonicity} with $\alpha=\sqrt{8}$ and a value upper bound of $H$:
    \begin{sizeddisplay}{\small}
    \begin{align*}
        \bar{Q}^k_h(s,B) 
        &=  \widehat{\E}_{h,s,B}^k\brs*{\Qla^*_h(s,B;I,\bar{V}^k_{h+B})} + \sqrt{\frac{8\widehat{\VAR}_h^{k}(s,B;\bar{V}^k_{h+B}) L^k_{\delta}}{n^{k}_h(s,B) \vee 1}} + \frac{11H L^k_{\delta}}{n^{k}_{h}(s,B) \vee 1} \\
        & \geq \br*{\widehat{\E}_{h,s,B}^k\brs*{\Qla^*_h(s,B;I,\bar{V}^k_{h+B})} + \max\brc*{\sqrt{\frac{8\widehat{\VAR}_h^{k}(s,B;\bar{V}^k_{h+B}) L^k_{\delta}}{n^{k}_h(s,B) \vee 1}},\frac{8H L^k_{\delta}}{n^{k}_{h}(s,B) \vee 1}}} + \frac{3H L^k_{\delta}}{n^{k}_{h}(s,B) \vee 1} \\
        & \geq \br*{\widehat{\E}_{h,s,B}^k\brs*{\Qla^*_h(s,B;I,V^*_{h+B})} + \max\brc*{\sqrt{\frac{8\widehat{\VAR}_h^{k}(s,B;V^*_{h+B}) L^k_{\delta}}{n^{k}_h(s,B) \vee 1}},\frac{8H L^k_{\delta}}{n^{k}_{h}(s,B) \vee 1}}} + \frac{3H L^k_{\delta}}{n^{k}_{h}(s,B) \vee 1} \tag{\Cref{lemma: bonus monotonicity}} \\
        & \geq \br*{\widehat{\E}_{h,s,B}^k\brs*{\Qla^*_h(s,B;I,V^*_{h+B})} + \sqrt{\frac{2\widehat{\VAR}_h^{k}(s,B;V^*_{h+B}) L^k_{\delta}}{n^{k}_h(s,B) \vee 1}} +\frac{4H L^k_{\delta}}{n^{k}_{h}(s,B) \vee 1}} + \frac{3H L^k_{\delta}}{n^{k}_{h}(s,B) \vee 1}\tag{$\max\brc*{a,b}\ge \frac{a+b}{2}$} \\
        & \geq \widehat{\E}_{h,s,B}^k\brs*{\Qla^*_h(s,B;I,V^*_{h+B})} + \sqrt{\frac{2\VAR^*_h(s,B) L^k_{\delta}}{n^{k}_h(s,B) \vee 1}} + \frac{H L^k_{\delta}}{n^{k}_{h}(s,B) \vee 1} \tag{Under $E^{\widehat{\VAR}}$} \\
        &\geq Q^*_h(s,B) \tag{Under $E^{opt}$}.
    \end{align*}
     \end{sizeddisplay}
    This directly implies that 
    \begin{align*}
        \bar{V}^k_{h}(s) 
        = \max_{B\in[\ell_h]}\brc*{\bar{Q}^k_h(s,B)}
        \geq \max_{B\in[\ell_h]}\brc*{ Q^*_h(s,B)}
        = V^*_{h}(s), 
    \end{align*}
    completing the proof of the induction.
\end{proof}

\clearpage


\subsection{The Second Good Event -- Cumulative Concentration}
Define the following events:
\begin{align*}
    &E^{\VAR}= \brc*{ K\geq 1:\  \sum_{k=1}^K \sum_{i=1}^H \VAR^{\pi^k}_{\ts_i^k}(s_{\ts_i^k}^k)\leq 2\sum_{k=1}^K \sum_{i=1}^H\E\brs*{\VAR^{\pi^k}_{\ts_i^k}(s_{\ts_i^k}^k) \vert s_1^k,\pi^k}  + 4H^3 \ln\frac{4K(K+1)}{\delta}},\\
    &E^{\mathrm{diff}}=\left\{\forall i\in[H], K\geq 1:\ \sum_{k=1}^K \E\brs*{\bar{V}^k_{\ts_{i+1}^k}(s_{\ts_{i+1}^k}^k) - V^{\pi^k}_{\ts_{i+1}^k}(s_{\ts_{i+1}^k}^k)\vert \ts_i^k, s_{\ts_i^k}^k,\pi^k}\right. \\
    &\hspace{14em}\leq \left.\br*{1+\frac{1}{2H}} \sum_{k=1}^K \br*{\bar{V}^k_{\ts_{i+1}^k}(s_{\ts_{i+1}^k}^k) - V^{\pi^k}_{\ts_{i+1}^k}(s_{\ts_{i+1}^k}^k)} + 18H^2 \ln\frac{4HK(K+1)}{\delta}\right\},
\end{align*}
where $\VAR^\pi_h(s)$ is defined at \Cref{eq: Q-value variance ABP}.

The second good event is the intersection of the events $\G_2 =E^{\mathrm{diff}} \cap  E^{\VAR}$.
We define the good event $\G=\G_1\cap\G_2$.
\begin{lemma}
    \label{lemma: good event}
    The good event $\G$ holds with a probability of at least $1-\delta$.
\end{lemma}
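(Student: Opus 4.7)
\begin{proofsketch}
The plan is to decompose $\G=\G_1\cap \G_2$ and handle $\G_1$ via \Cref{lemma: first good event} (which already gives $\Pr(\G_1)\ge 1-\delta/2$), so it suffices to show $\Pr(\G_2)\ge 1-\delta/2$ by proving $\Pr(E^{\VAR})\ge 1-\delta/4$ and $\Pr(E^{\mathrm{diff}})\ge 1-\delta/4$, then applying a union bound. Both events are concentration statements about sums of non-negative, bounded random variables with respect to natural filtrations, so the natural tool is Freedman's inequality applied to the martingale-difference sequences, combined with a standard AM--GM step that converts the variance-type term $\sqrt{V \log(1/\delta')}$ into a linear-plus-constant bound.

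For $E^{\VAR}$, I would set $X_k = \sum_{i=1}^H \VAR^{\pi^k}_{\ts_i^k}(s_{\ts_i^k}^k)$, which is $\mathcal{F}_k$-measurable (where $\mathcal{F}_k$ is the $\sigma$-algebra generated by the trajectories up to episode $k$ together with $s_1^k$ and $\pi^k$), and apply Freedman's inequality to $\sum_{k=1}^K(X_k - \E[X_k\mid s_1^k,\pi^k])$. Each $X_k$ lies in $[0,H^3]$ since every $\VAR^{\pi^k}_{\ts_i^k}(s)\le H^2$ and at most $H$ batches occur per episode, so the conditional variance is bounded by $H^3\cdot \E[X_k\mid s_1^k,\pi^k]$. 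Then AM--GM ($\sqrt{ab}\le a/(2H^3) + H^3 b/2$ applied to $a=\sum_k \E[X_k\mid\cdot]\cdot H^3$ and $b=2\ln(1/\delta')$ with the appropriate choice of constants) yields exactly the desired form $\sum_k X_k\le 2\sum_k \E[X_k\mid\cdot] + 4H^3\ln(4K(K+1)/\delta)$ after absorbing the $\sqrt{V}$ term. I would handle the union over $K\ge 1$ via the standard choice $\delta'=\delta/(2K(K+1))$ so that the tail sum converges to $\delta/4$.

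For $E^{\mathrm{diff}}$, fix $i\in[H]$ and set $Y_k^i = \bar V^k_{\ts_{i+1}^k}(s_{\ts_{i+1}^k}^k) - V^{\pi^k}_{\ts_{i+1}^k}(s_{\ts_{i+1}^k}^k)\in[0,H]$ (nonnegativity uses optimism from \Cref{lemma: optimism of values}). With respect to the filtration generated by $(\ts_i^k,s_{\ts_i^k}^k,\pi^k)$, the sequence is a martingale difference after centering, its range is $H$, and the conditional variance satisfies $\mathrm{Var}\le H\cdot \E[Y_k^i\mid\cdot]$. Freedman's inequality then gives
\[
 \sum_{k=1}^K\E[Y_k^i\mid\cdot] - \sum_{k=1}^K Y_k^i \le \sqrt{2H\sum_{k=1}^K\E[Y_k^i\mid\cdot]\,\ln(1/\delta')} + \tfrac{2H\ln(1/\delta')}{3}.
\]
Using $\sqrt{ab}\le a\epsilon/2+b/(2\epsilon)$ with $\epsilon=1/(2H)$ on the square-root term peels off a factor $\sum_k \E[Y_k^i\mid\cdot]/(4H)$ and a residual $O(H^2\log(1/\delta'))$; rearranging via $(1-1/(4H))^{-1}\le 1+1/(2H)$ produces exactly the claimed inequality with constant $18H^2\ln(\cdot)$, and a union bound over $i\in[H]$ and $K\ge 1$ with $\delta'=\delta/(8HK(K+1))$ gives $\Pr(E^{\mathrm{diff}})\ge 1-\delta/4$.

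The routine obstacles are (i) selecting the AM--GM parameter $\epsilon$ so the multiplicative factor comes out as $(1+1/(2H))$ rather than $(1+O(1/H))$, and (ii) verifying that the two filtrations used ($\mathcal{F}_k$ for $E^{\VAR}$ and the batch-indexed filtration for $E^{\mathrm{diff}}$) make $X_k$ and $Y_k^i$ properly measurable/bounded so Freedman applies; the batch-indexed filtration works because $\ts_{i+1}^k$ and $s_{\ts_{i+1}^k}^k$ are determined by the observed lookahead $I_{\ts_i^k,B_{\ts_i^k}^k}^k$ and the policy, whose randomness integrates out cleanly.
\end{proofsketch}
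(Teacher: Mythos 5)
Your overall architecture matches the paper's: $\G_1$ is dispatched by \Cref{lemma: first good event}, $E^{\VAR}$ is handled by Freedman's inequality applied to $\sum_{i=1}^H \VAR^{\pi^k}_{\ts_i^k}(s_{\ts_i^k}^k)\in[0,H^3]$ with the episode filtration (your raw-Freedman-plus-AM--GM derivation is exactly the content of \Cref{lemma: consequences of freedman's inequality}, which the paper cites directly), and your measurability and conditioning observations for $E^{\mathrm{diff}}$ are the right ones.

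The gap is in $E^{\mathrm{diff}}$: you apply Freedman to $Y_k^i = \bar V^k_{\ts_{i+1}^k}(s_{\ts_{i+1}^k}^k) - V^{\pi^k}_{\ts_{i+1}^k}(s_{\ts_{i+1}^k}^k)$ and justify $Y_k^i\in[0,H]$ by ``optimism from \Cref{lemma: optimism of values}.'' But optimism only holds \emph{under the good event} $\G_1$, which is itself a probability-$(1-\delta/2)$ event, not an almost-sure one; off $\G_1$ the difference can be negative, so $Y_k^i$ is only guaranteed to lie in $[-H,H]$. The multiplicative form $\br{1+\frac{1}{2H}}\sum_k Y_k^i$ requires the conditional second moment to be bounded by $H\,\E[Y_k^i\mid\cdot]$, which fails for signed variables (one only gets $\E[Y^2]\le H\,\E[\abs{Y}]$, not $H\,\E[Y]$), and you cannot condition on $\G_1$ before invoking Freedman without destroying the martingale structure. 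The paper's fix is to multiply by the $\F_{k,i}$-measurable indicator $W_k=\Ind{\forall s,h:\ \bar{V}^k_{h}(s)\ge V^{\pi^k}_{h}(s)}$, prove the concentration for $W_k Y_k^i\in[0,H]$ almost surely (an auxiliary event $\tilde E^{\mathrm{diff}}$), and then observe that $\G_1\cap E^{\mathrm{diff}}=\G_1\cap\tilde E^{\mathrm{diff}}$ since $W_k\equiv 1$ on $\G_1$. A consequence is that the paper never bounds $\Pr\br{E^{\mathrm{diff}}}$ on its own as you propose; it bounds $\Pr\brc{\overline{\G_1\cap E^{\mathrm{diff}}}}\le 3\delta/4$ and adds $\delta/4$ for $E^{\VAR}$, arriving at the same final $\delta$. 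You need this truncation device (or an equivalent one) for the argument to go through.
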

\begin{proof}
    \textbf{Event $E^{\VAR}$.} Let $\F_k$ be the natural filtration including all information on the interaction of the algorithm up to the end of episode $k$ (states, actions, batch starting times, lookahead ranges/information), as well as the initial state $s^{k+1}_1$. In particular, $\pi^k$ is $\F_{k-1}$-measurable and $\VAR^{\pi^k}_{\ts_i^k}(s_{\ts_i^k}^k)$ is $\F_k$-measurable. Moreover, since rewards are bounded a.s. in $[0,1]$, $\VAR^{\pi^k}_{\ts_i^k}(s_{\ts_i^k}^k)\in[0,H^2]$ and $\sum_{i=1}^H \VAR^{\pi^k}_{\ts_i^k}(s_{\ts_i^k}^k)$ is a.s. in $[0,H^3]$. Then, by \Cref{lemma: consequences of freedman's inequality},  for any $K\ge1$, it holds w.p. $1 - \frac{\delta}{4K(K+1)}$ that 
    \begin{align*}
        \sum_{k=1}^K \sum_{i=1}^H \VAR^{\pi^k}_{\ts_i^k}(s_{\ts_i^k}^k)\leq 2\sum_{k=1}^K \sum_{i=1}^H\E\brs*{\VAR^{\pi^k}_{\ts_i^k}(s_{\ts_i^k}^k) \vert \F_{k-1}}  + 4H^3 \ln\frac{4K(K+1)}{\delta}
    \end{align*}
    Taking the union bound for all $K\ge1$ (and recalling that $\sum_{n\ge1}\frac{1}{n(n+1)}=1$), the aforementioned event holds uniformly for all $K\ge1$ w.p. $1-\delta/4$. Finally, we note that given $\pi^k$ and $s^k_1$, the $k^{th}$ episode is independent of all events before its start, and since both $\pi^k$ and $s^k_1$ are $\F_{k-1}$-measurable,
    \begin{align*}
        \E\brs*{\VAR^{\pi^k}_{\ts_i^k}(s_{\ts_i^k}^k) \vert \F_{k-1}} = \E\brs*{\VAR^{\pi^k}_{\ts_i^k}(s_{\ts_i^k}^k) \vert s^k_1,\pi^k}.
    \end{align*}
    Substituting this, we get that $\Pr\br*{E^{\VAR}} \ge 1-\delta/4$.

    \textbf{Event $E^{\mathrm{diff}}$.} Let $i\in[H]$ and let $\F_{k,i}$ be the filtration that includes all information in $F_{k-1}$, as well as all information on the interaction in episode $k$ up to the beginning of the $i^{th}$ lookahead batch, including its starting time $\ts_i^k$ and state $s^k_{\ts_i^k}$. Specifically, $\pi^k$, $\bar{V}^k_h$ and $V^{\pi^k}_h$ are all $\F_{k,i}$-measurable and the value difference $\bar{V}^k_{\ts_{i+1}^k}(s_{\ts_{i+1}^k}^k) - V^{\pi^k}_{\ts_{i+1}^k}(s_{\ts_{i+1}^k}^k)$ is $\F_{k+1,i}$-measurable. 
    Moreover, the policy $\pi^k$ decides on the lookahead range only depending on the timestep and state, so we can directly calculate the beginning of the next batch given the starting point of the current one: $\ts_{i+1}^k$ is $\F_{k,i}$-measurable. Consequently, given $\F_{k,i}$, the only randomness in $\bar{V}^k_{\ts_{i+1}^k}(s_{\ts_{i+1}^k}^k) - V^{\pi^k}_{\ts_{i+1}^k}(s_{\ts_{i+1}^k}^k)$ is due to $s_{\ts_{i+1}^k}^k$, which in turn, only depends on the lookahead generated inside the batch and the policy. This lookahead information only depends on the starting point of the batch, and thus 
    \begin{align}
         \E\brs*{\bar{V}^k_{\ts_{i+1}^k}(s_{\ts_{i+1}^k}^k) - V^{\pi^k}_{\ts_{i+1}^k}(s_{\ts_{i+1}^k}^k)\vert \F_{k,i}}
         = \E\brs*{\bar{V}^k_{\ts_{i+1}^k}(s_{\ts_{i+1}^k}^k) - V^{\pi^k}_{\ts_{i+1}^k}(s_{\ts_{i+1}^k}^k)\vert \ts_i^k, s_{\ts_i^k}^k,\pi^k}.\label{eq: val diff condition change}
    \end{align}
    Finally, define 
    \begin{align*}
        W_k=\Ind{\forall s\in\Scal,h\in[H]: \bar{V}^k_{h}(s) - V^{\pi^k}_{h}(s)\ge0},\qquad \textrm{and,} \qquad Y_{k,i} = W_k\br*{\bar{V}^k_{\ts_{i+1}^k}(s_{\ts_{i+1}^k}^k) - V^{\pi^k}_{\ts_{i+1}^k}(s_{\ts_{i+1}^k}^k)}.
    \end{align*}
    Notice that $W_k$ is $F_{k,i}$-measurable for all $i$. Furthermore, by \Cref{lemma: optimism of values}, when $\G_1$ holds, then $ \bar{V}^k_{h}(s) \ge V^{*}_{h}(s) \ge  V^{\pi^k}_{h}(s)$, and we have that $W_k=1$. 
    
    By the clipping of $\bar{Q}^k_h$, it holds that $Y_{k,i}\in[0,H]$ (and the same trivially holds in cases where $\ts_{i+1}^k=H+1$: in this case, both values are zero). We can thus apply \Cref{lemma: consequences of freedman's inequality} and get that for any $K\ge1$ and $i\in[H]$, w.p. $1-\frac{\delta}{4HK(K+1)}$,
    \begin{align*}
        \sum_{k=1}^K \E[Y_{k,i}\vert \F_{k,i}]\leq \br*{1+\frac{1}{2H}} \sum_{k=1}^K Y_{k,i} + 18H^2 \ln\frac{4HK(K+1)}{\delta}.
    \end{align*}
    Taking the union bound, the same holds uniformly for all $K\ge1$ and $i\in[H]$ w.p. at least $1-\delta/4$. We now relate this to the event $E^{\mathrm{diff}}$, when $\G_1$ holds.
    \begin{align*}
        \G_1 \cap E^{\mathrm{diff}}
        &= \G_1\cap \left\{\forall i\in[H], K\geq 1:\ \sum_{k=1}^K \E\brs*{\bar{V}^k_{\ts_{i+1}^k}(s_{\ts_{i+1}^k}^k) - V^{\pi^k}_{\ts_{i+1}^k}(s_{\ts_{i+1}^k}^k)\vert \ts_i^k, s_{\ts_i^k}^k,\pi^k}\right. \\
        &\hspace{5em}\leq \left.\br*{1+\frac{1}{2H}} \sum_{k=1}^K \br*{\bar{V}^k_{\ts_{i+1}^k}(s_{\ts_{i+1}^k}^k) - V^{\pi^k}_{\ts_{i+1}^k}(s_{\ts_{i+1}^k}^k)} + 18H^2 \ln\frac{4HK(K+1)}{\delta}\right\} \\
        & = \G_1\cap \left\{\forall i\in[H], K\geq 1:\ \sum_{k=1}^K \E\brs*{\bar{V}^k_{\ts_{i+1}^k}(s_{\ts_{i+1}^k}^k) - V^{\pi^k}_{\ts_{i+1}^k}(s_{\ts_{i+1}^k}^k)\vert \F_{k,i}}\right. \\
        &\hspace{5em}\leq \left.\br*{1+\frac{1}{2H}} \sum_{k=1}^K \br*{\bar{V}^k_{\ts_{i+1}^k}(s_{\ts_{i+1}^k}^k) - V^{\pi^k}_{\ts_{i+1}^k}(s_{\ts_{i+1}^k}^k)} + 18H^2 \ln\frac{4HK(K+1)}{\delta}\right\} \tag{by \cref{eq: val diff condition change}}\\
        & \overset{(*)}{=} \G_1\cap \left\{\forall i\in[H], K\geq 1:\ \sum_{k=1}^K \E\brs*{W_k\br*{\bar{V}^k_{\ts_{i+1}^k}(s_{\ts_{i+1}^k}^k) - V^{\pi^k}_{\ts_{i+1}^k}(s_{\ts_{i+1}^k}^k)}\vert \F_{k,i}}\right. \\
        &\hspace{5em}\leq \left.\br*{1+\frac{1}{2H}} \sum_{k=1}^K W_k\br*{\bar{V}^k_{\ts_{i+1}^k}(s_{\ts_{i+1}^k}^k) - V^{\pi^k}_{\ts_{i+1}^k}(s_{\ts_{i+1}^k}^k)} + 18H^2 \ln\frac{4HK(K+1)}{\delta}\right\}\\
        & = \G_1\cap \underbrace{\brc*{\forall i\in[H], K\geq 1: \sum_{k=1}^K \E[Y_{k,i}\vert \F_{k,i}]\leq \br*{1+\frac{1}{2H}} \sum_{k=1}^K Y_{k,i} + 18H^2 \ln\frac{4HK(K+1)}{\delta}}}_{\tilde{E}^{\mathrm{diff}}},
    \end{align*}
    where relation $(*)$ holds since $W_k$ is $\F_{k,i}$-measurable and equals $1$ under $\G_1$. In particular, we know that $\G_1$ holds w.p. $1-\delta/2$ (by \Cref{lemma: first good event}) and $\tilde{E}^{\mathrm{diff}}$ holds w.p. $1-\delta/4$, and so $\Pr\brc*{\overline{\G_1 \cap E^{\mathrm{diff}}}}\leq 3\delta/4$.

    \textbf{Event $\G$.} Combining both parts, we have
    \begin{align*}
        \Pr\brc*{\overline{\G}} 
        = \Pr\brc*{\overline{\G_1\cap E^{\mathrm{diff}}\cap E^{\VAR}}} 
        \leq \Pr\brc*{\overline{E^{\VAR}}}  + \Pr\brc*{\overline{\G_1\cap E^{\mathrm{diff}}}}
        \leq \delta/4 +3\delta/4 = \delta.
    \end{align*}
    
\end{proof}

\clearpage


\section{Regret Analysis}
\label{appendix: regret}
\regretALUCB*
\begin{proof}
    Throughout the proof, we assume that the good event $\G$ holds; therefore, by \Cref{lemma: good event}, the resulting bounds will hold w.p. at least $1-\delta$. 
    Fix $k\in[K], h\in[H]$ and $s\in\Scal$. We start by rewriting the value $V^{\pi^k}_{h}(s)$ using the notation $\Qla^*$. To ease notations, we now shorten $B_h^{\pi^k}(s)\to B_h$ and $\E_{I\sim\Ical_{h,B_h}(s)}\brs*{\cdot}\to\E_{I}\brs*{\cdot}$.
    \begin{align*}
        V^{\pi^k}_{h}(s)
        &= \E_{I}\brs*{\sum_{t=h}^{h+B_h-1}\Rla_{t\vert h}(s,\phi^k_h,I) + V^{\pi^k}_{h+B_h}(\sla_{h+B_h\vert h}(s,\phi^k_h,I))} \tag{By \Cref{corollary: ABP value}} \\
        & = \E_{I}\brs*{\sum_{t=h}^{h+B_h-1}\Rla_{t\vert h}(s,\phi^k_h,I) + \bar{V}^k_{h+B_h}(\sla_{h+B_h\vert h}(s,\phi^k_h,I))}\\
         &\quad + \E_{I}\brs*{V^{\pi^k}_{h+B_h}(\sla_{h+B_h\vert h}(s,\phi^k_h,I)) - \bar{V}^k_{h+B_h}(\sla_{h+B_h\vert h}(s,\phi^k_h,I))} \\
         & = \E_{I}\brs*{\max_{\phi\in\Pi_{det}}\brc*{\sum_{t=h}^{h+B_h-1}\Rla_{t\vert h}(s,\phi,I) + \bar{V}^k_{h+B_h}(\sla_{h+B_h\vert h}(s,\phi,I))}}\\
         &\quad + \E_{I}\brs*{V^{\pi^k}_{h+B_h}(\sla_{h+B_h\vert h}(s,\phi^k_h,I)) - \bar{V}^k_{h+B_h}(\sla_{h+B_h\vert h}(s,\phi^k_h,I))}  \tag{Def. of $\phi^k_h$  in \Cref{eq: phi^k}}\\
         & = \E_{I}\brs*{\Qla^*_h(s,B_h;I,\bar{V}^k_{h+B_h})}
         + \E_{I}\brs*{V^{\pi^k}_{h+B_h}(\sla_{h+B_h\vert h}(s,\phi^k_h,I)) - \bar{V}^k_{h+B_h}(\sla_{h+B_h\vert h}(s,\phi^k_h,I))}.
    \end{align*}
    In particular, by optimism (\Cref{lemma: optimism of values}), we have $V^*_{h}(s) - V^{\pi^k}_{h}(s)
        \leq \bar{V}^k_{h}(s) - V^{\pi^k}_{h}(s)$: we henceforth bound the difference between the optimistic and real value of $\pi^k$.
    {\small\begin{align}
        \bar{V}^k_{h}(s) - V^{\pi^k}_{h}(s) & = \bar{Q}^k_h(s,B_h) - V^{\pi^k}_{h}(s)\nonumber\\
        & \leq \widehat{\E}_{h,s,B_h}^k\brs*{\Qla^*_h(s,B_h;I,\bar{V}^k_{h+B_h})} + b^k_{h}(s,B_h) - \E_{I}\brs*{\Qla^*_h(s,B_h;I,\bar{V}^k_{h+B_h})} \nonumber\\
        & \quad + \E_{I}\brs*{\bar{V}^k_{h+B_h}(\sla_{h+B_h\vert h}(s,\phi^k_h,I)) - V^{\pi^k}_{h+B_h}(\sla_{h+B_h\vert h}(s,\phi^k_h,I))} \nonumber\\
        & \leq \underbrace{\widehat{\E}_{h,s,B_h}^k\brs*{\Qla^*_h(s,B_h;I,\bar{V}^k_{h+B_h}) - \Qla^*_h(s,B_h;I,V^*_{h+B_h})} - \E_{I}\brs*{\Qla^*_h(s,B_h;I,\bar{V}^k_{h+B_h}) - \Qla^*_h(s,B_h;I,V^*_{h+B_h})}}_{(i)} + b^k_{h}(s,B_h) \nonumber\\ 
        &\quad + \underbrace{\widehat{\E}_{h,s,B_h}^k\brs*{\Qla^*_h(s,B_h;I,V^*_{h+B_h})} - \E_{I}\brs*{\Qla^*_h(s,B_h;I,V^*_{h+B_h})}}_{(ii)} \nonumber\\ 
        & \quad + \E_{I}\brs*{\bar{V}^k_{h+B_h}(\sla_{h+B_h\vert h}(s,\phi^k_h,I)) - V^{\pi^k}_{h+B_h}(\sla_{h+B_h\vert h}(s,\phi^k_h,I))} \label{eq: value-difference}
    \end{align} }
    Before continuing, we take a brief detour to focus on bounding the bonus.

    \paragraph{Bounding the bonus.} To bound $b^k_{h}(s,B_h)$, we have to bound the empirical variance. In the following, we aim to relate it to the variance of the value of $\pi^k$. We start with some additional notations. Recall that in \Cref{appendix: additional notations}, we denote the value of an ABP $\pi$ given lookahead $I$ by $\Vla^\pi_h(s,I)$ and its variance w.r.t. the lookahead as $\VAR^\pi_h(s)$. 
    Since both values and rewards are non-negative, it holds that $\Vla^\pi_h(s,I)\ge0$. Moreover, by the optimality of $V^*$, 
    \begin{align*}
        \Vla^\pi_h(s,I)
        &\leq\sum_{t=h}^{h+B^{\pi}_h-1}\Rla_{t\vert h}^{k}(s,\phi^h,I) +V^{*}_{h+B^{\pi}_h}(\sla_{h+B^{\pi}_h\vert h}(s,\phi^h,I))\\
        & \leq \max_{\phi}\brc*{\sum_{t=h}^{h+B^{\pi}_h-1}\Rla_{t\vert h}^{k}(s,\phi,I) +V^{*}_{h+B^{\pi}_h}(\sla_{h+B^{\pi}_h\vert h}(s,\phi,I))} \\
        & = \Qla_h^*(s,B^{\pi}_h;I,V^*_{h+B^{\pi}_h}),
    \end{align*}
    where the inequality holds for all lookahead information $I$. going back to $\pi^k$ and again abbreviate $B_h=B_h^{\pi^k}(s)$, by \Cref{lemma: optimism of values} and under the good event, 
    \begin{align}
        0\leq \underbrace{\Vla^{\pi^k}_h(s,I)}_{'V_1'}
        \leq \underbrace{\Qla_h^*(s,B_h;I,V^*_{h+B_h})}_{'V_2'}
        \leq \underbrace{\Qla^*_h(s,B_h;I,\bar{V}^k_{h+B_h})}_{'V_3'}\leq H.
        \label{eq: value ordering by optimism}
    \end{align}
    Therefore, we can apply \Cref{lemma: variance difference bound with different measures} w.r.t. both the empirical and real lookahead probability measures (that is, generate $I$ with the measures corresponding to $\widehat{\E}_{h,s,B}^k$ and $\E_{I}$). Specifically, we set $\alpha = 4\sqrt{8L^k_{\delta}}H$, $n=n^{k}_h(s,B_h)\vee 1$ and  $\beta=4H\sqrt{\frac{L^k_{\delta}}{n^{k}_h(s,B_h)\vee 1}}$, so that the variance difference assumption holds under the event $E^{\widehat{\VAR}}(k)$. For these values, \Cref{lemma: variance difference bound with different measures} claims that    
    \begin{align*}
        \frac{\sqrt{8\widehat{\VAR}_h^{k}(s,B_h;\bar{V}^k_{h+B_h})L^k_{\delta} }}{\sqrt{n^{k}_h(s,B_h)\vee 1}}
        &\leq \frac{\sqrt{8\VAR^{\pi^k}_h(s)L^k_{\delta} }}{\sqrt{n^{k}_h(s,B_h)\vee 1}} + \frac{1}{4H}\widehat{\E}_{h,s,B_h}^k\brs*{\Qla^*_h(s,B_h;I,\bar{V}^k_{h+B_h})-\Qla_h^*(s,B_h;I,V^*_{h+B_h})}  \\
         &\quad+ \frac{1}{4H}\E_{I}\brs*{\Qla_h^*(s,B_h;I,V^*_{h+B_h})-\Vla^{\pi^k}_h(s,I)} 
         + \underbrace{\frac{16H^2L^k_{\delta}}{n^{k}_h(s,B_h)\vee 1}}_{\sqrt{8L^k_{\delta}}C\alpha/(2n)} + \underbrace{\frac{12H^2L^k_{\delta}}{n^{k}_h(s,B_h)\vee 1}}_{\sqrt{8L^k_{\delta}}\beta/\sqrt{n}}.
    \end{align*}
    Next, we further rely on the good event, and specifically, the event $E^{Err}(k)$; by the optimism of $\bar{V}^k$, we can bound the empirical expectation by the real expectation and get the bound
    {\small\begin{align*}
        \frac{\sqrt{8\widehat{\VAR}_h^{k}(s,B_h;\bar{V}^k_{h+B_h})L^k_{\delta} }}{\sqrt{n^{k}_h(s,B_h)\vee 1}}
        &\leq \frac{\sqrt{8\VAR^{\pi^k}_h(s)L^k_{\delta} }}{\sqrt{n^{k}_h(s,B_h)\vee 1}}  
        + \frac{1}{4H}\br*{1+\frac{1}{2H}}\E_{I}\brs*{\Qla^*_h(s,B_h;I,\bar{V}^k_{h+B_h})-\Qla_h^*(s,B_h;I,V^*_{h+B_h})} + \frac{HSL^k_{\delta}}{n^{k}_h(s,B_h)\vee 1}  \\
         &\quad+ \frac{1}{4H}\E_{I}\brs*{\Qla_h^*(s,B_h;I,V^*_{h+B_h})-\Vla^{\pi^k}_h(s,I)} 
         + \frac{28H^2L^k_{\delta}}{n^{k}_h(s,B_h)\vee 1}\\
         &\overset{(*)}\leq \frac{\sqrt{8\VAR^{\pi^k}_h(s)L^k_{\delta} }}{\sqrt{n^{k}_h(s,B_h)\vee 1}}  
         + \frac{1}{4H}\br*{1+\frac{1}{2H}}\E_{I}\brs*{\Qla^*_h(s,B_h;I,\bar{V}^k_{h+B_h})-\Vla^{\pi^k}_h(s,I)}\\
         &\quad+ \frac{1}{4H}\E_{I}\brs*{\Qla^*_h(s,B_h;I,\bar{V}^k_{h+B_h})-\Vla^{\pi^k}_h(s,I)} 
         + \frac{29H^2SL^k_{\delta}}{n^{k}_h(s,B_h)\vee 1} \\
         & \leq \frac{\sqrt{8\VAR^{\pi^k}_h(s)L^k_{\delta} }}{\sqrt{n^{k}_h(s,B_h)\vee 1}}  
         + \frac{1}{2H}\br*{1+\frac{1}{2H}}\E_{I}\brs*{\Qla^*_h(s,B_h;I,\bar{V}^k_{h+B_h})-\Vla^{\pi^k}_h(s,I)}
         + \frac{29H^2SL^k_{\delta}}{n^{k}_h(s,B_h)\vee 1} 
    \end{align*}}
    In inequality $(*)$, we relied on \cref{eq: value ordering by optimism} to lower/upper bound the values so that the argument of both expectations will be the same, and combined the $1/n$ terms. In the last inequality, we also slightly increased the coefficient of the expectation -- this is valid since optimism ensures that the expectation is positive.

    Finally, we bound the value difference $\Qla^*_h(s,B_h;I,\bar{V}^k_{h+B_h})-\Vla^{\pi^k}_h(s,I)$. Let $\phi^{k,h}$ be the Markov policy played by $\pi^k$ inside a batch starting at step $h$ and state $s$ given lookahead information $I$. Then,
    \begin{align}
        \Qla^*_h(s,B_h;I,\bar{V}^k_{h+B_h})-\Vla^{\pi^k}_h(s,I)
        &= \br*{\max_{\phi\in\Pi_{det}}\brc*{\sum_{t=h}^{h+B_h-1}\Rla_{t\vert h}^{k}(s,\phi,I) +\bar{V}^k_{h+B_h}(\sla_{h+B_h\vert h}(s,\phi,I))}} \nonumber\\
        &- \br*{\sum_{t=h}^{h+B_h-1}\Rla_{t\vert h}^{k}(s,\phi^{k,h},I) +V^{\pi^k}_{h+B_h}(\sla_{h+B_h\vert h}(s,\phi^{k,h},I))} \nonumber\\
        & = \br*{\sum_{t=h}^{h+B_h-1}\Rla_{t\vert h}^{k}(s,\phi^{k,h},I) +\bar{V}^k_{h+B_h}(\sla_{h+B_h\vert h}(s,\phi^{k,h},I))}\nonumber \\
        &- \br*{\sum_{t=h}^{h+B_h-1}\Rla_{t\vert h}^{k}(s,\phi^{k,h},I) +V^{\pi^k}_{h+B_h}(\sla_{h+B_h\vert h}(s,\phi^{k,h},I))} \tag{Def. of $\phi^{k,h}$} \\
        &= \bar{V}^k_{h+B_h}(\sla_{h+B_h\vert h}(s,\phi^{k,h},I)) - V^{\pi^k}_{h+B_h}(\sla_{h+B_h\vert h}(s,\phi^{k,h},I)). \label{eq: Q-diff to value diff}
    \end{align}
    Plugging this into the variance bound and then into the bonus leads to the desired bound on the bonus:
    \begin{align*}
        b^k_{h}&(s,B_h) = \sqrt{\frac{8\widehat{\VAR}_h^{k}(s,B_h;\bar{V}^k_{h+B_h}) L^k_{\delta}}{n^{k}_{h}(s,B) \vee 1}} + \frac{11H L^k_{\delta}}{n^{k}_{h}(s,B) \vee 1} \\
        & \leq \frac{\sqrt{8\VAR^{\pi^k}_h(s)L^k_{\delta} }}{\sqrt{n^{k}_h(s,B_h)\vee 1}}  
             + \frac{1}{2H}\br*{1+\frac{1}{2H}}\E_{I}\brs*{\bar{V}^k_{h+B_h}(\sla_{h+B_h\vert h}(s,\phi^{k,h},I)) - V^{\pi}_{h+B_h}(\sla_{h+B_h\vert h}(s,\phi^{k,h},I))}
             + \frac{40H^2SL^k_{\delta}}{n^{k}_h(s,B_h)\vee 1}.
    \end{align*}

    \paragraph{Bounding term $(i)$.} Next, we bound term $(i)$, relying on similar concepts as the bonus bound. We again utilize the optimism and event $E^{Err}(k)$ to bound 
    {\small\begin{align*}
        (i) & = \widehat{\E}_{h,s,B_h}^k\brs*{\Qla^*_h(s,B_h;I,\bar{V}^k_{h+B_h}) - \Qla^*_h(s,B_h;I,V^*_{h+B_h})} - \E_{I}\brs*{\Qla^*_h(s,B_h;I,\bar{V}^k_{h+B_h}) - \Qla^*_h(s,B_h;I,V^*_{h+B_h})}\\
        &\leq \frac{1}{2H} \E_{I}\brs*{\Qla^*_h(s,B_h;I,\bar{V}^k_{h+B_h}) - \Qla^*_h(s,B_h;I,V^*_{h+B_h})} + \frac{4H^2SL^k_{\delta}}{n^{k}_h(s,B_h)\vee 1} \tag{Under $E^{Err}(k)$+optimism} \\
        &\leq \frac{1}{2H} \E_{I}\brs*{\Qla^*_h(s,B_h;I,\bar{V}^k_{h+B_h}) - \Vla^{\pi^k}_h(s,I)} + \frac{4H^2SL^k_{\delta}}{n^{k}_h(s,B_h)\vee 1} \tag{by \cref{eq: value ordering by optimism}} \\
        & = \frac{1}{2H} \E_{I}\brs*{\bar{V}^k_{h+B_h}(\sla_{h+B_h\vert h}(s,\phi^{k,h},I)) - V^{\pi^k}_{h+B_h}(\sla_{h+B_h\vert h}(s,\phi^{k,h},I))} + \frac{4H^2SL^k_{\delta}}{n^{k}_h(s,B_h)\vee 1} \tag{by \cref{eq: Q-diff to value diff}} \\
        &  \leq \frac{1}{2H}\br*{1+\frac{1}{2H}} \E_{I}\brs*{\bar{V}^k_{h+B_h}(\sla_{h+B_h\vert h}(s,\phi^{k,h},I)) - V^{\pi^k}_{h+B_h}(\sla_{h+B_h\vert h}(s,\phi^{k,h},I))}  + \frac{4H^2SL^k_{\delta}}{n^{k}_h(s,B_h)\vee 1}.
        \end{align*}}
        In the last inequality, we slightly increased the coefficient to simplify the final bound -- this is again valid because of the optimism, which ensures the positivity of the value difference. Adding this to the bound on the bonus concludes the proof.

    \paragraph{Bounding term $(ii)$.} Noting that $\E_{I}\brs*{\Qla^*_h(s,B_h;I,V^*_{h+B_h})} = Q^*_h(s,B_h)$, we can bound term $(ii)$ using event $E^{opt}(k)$,
    \begin{align*}
        (ii) 
        = \widehat{\E}_{h,s,B_h}^k\brs*{\Qla^*_h(s,B_h;I,V^*_{h+B_h})} - \E_{I}\brs*{\Qla^*_h(s,B_h;I,V^*_{h+B_h})}
        \leq \sqrt{\frac{2\VAR^*_h(s,B_h) L^k_{\delta}}{n^{k}_h(s,B_h) \vee 1}} + \frac{HL^k_{\delta}}{n^{k}_h(s,B_h) \vee 1}.
    \end{align*}
    By \Cref{lemma: variance difference bound} w.r.t. $\Qla^*_h(s,B_h;I,V^*_{h+B_h})$ and $\Vla^{\pi^k}_h(s,I)$, we can replace the variance of the optimal Q-value with the variance of the policy $\pi^k$. Specifically, choosing $\alpha =2\sqrt{2L^k_{\delta}}H$ and $n=n^{k}_h(s,B_h) \vee 1$, we get
    {\small\begin{align*}
        (ii) 
        &\leq \frac{\sqrt{2\VAR^{\pi^k}_h(s)L^k_{\delta} }}{\sqrt{n^{k}_h(s,B_h)\vee 1}}   
        + \frac{1}{2H}\E_{I}\brs*{\Qla^*_h(s,B_h;I,\bar{V}^k_{h+B_h})-\Vla^{\pi^k}_h(s,I)}
        + \frac{H^2L^k_{\delta}}{n^{k}_h(s,B_h)\vee 1} \\
        & \leq \frac{\sqrt{2\VAR^{\pi^k}_h(s)L^k_{\delta} }}{\sqrt{n^{k}_h(s,B_h)\vee 1}}   
        + \frac{1}{2H} \E_{I}\brs*{\Qla^*_h(s,B_h;I,\bar{V}^k_{h+B_h}) - \Vla^{\pi^k}_h(s,I)} + \frac{H^2L^k_{\delta}}{n^{k}_h(s,B_h)\vee 1}  \tag{by \cref{eq: value ordering by optimism}} \\
        & = \frac{\sqrt{2\VAR^{\pi^k}_h(s)L^k_{\delta} }}{\sqrt{n^{k}_h(s,B_h)\vee 1}}  + \frac{1}{2H} \E_{I}\brs*{\bar{V}^k_{h+B_h}(\sla_{h+B_h\vert h}(s,\phi^{k,h},I)) - V^{\pi^k}_{h+B_h}(\sla_{h+B_h\vert h}(s,\phi^{k,h},I))} + \frac{H^2L^k_{\delta}}{n^{k}_h(s,B_h)\vee 1}  \tag{by \cref{eq: Q-diff to value diff}} \\
        &  \leq \frac{\sqrt{2\VAR^{\pi^k}_h(s)L^k_{\delta} }}{\sqrt{n^{k}_h(s,B_h)\vee 1}}  + \frac{1}{2H}\br*{1+\frac{1}{2H}} \E_{I}\brs*{\bar{V}^k_{h+B_h}(\sla_{h+B_h\vert h}(s,\phi^{k,h},I)) - V^{\pi^k}_{h+B_h}(\sla_{h+B_h\vert h}(s,\phi^{k,h},I))}  + \frac{H^2L^k_{\delta}}{n^{k}_h(s,B_h)\vee 1} .
    \end{align*}}
    Substituting $(i),(ii)$ and $b^k_{h}(s,B_h)$ back into \cref{eq: value-difference} (and slightly increasing some constants to merge expressions), we get the bound 
    {\small\begin{align}
        \bar{V}^k_{h}(s) - V^{\pi^k}_{h}(s)  &\leq \frac{3\sqrt{2\VAR^{\pi^k}_h(s)L^k_{\delta}}}{\sqrt{n^{k}_h(s,B_h)\vee 1}}   + \br*{1+\frac{3}{2H}}\br*{1+\frac{1}{2H}} \E_{I}\brs*{\bar{V}^k_{h+B_h}(\sla_{h+B_h\vert h}(s,\phi^{k,h},I)) - V^{\pi^k}_{h+B_h}(\sla_{h+B_h\vert h}(s,\phi^{k,h},I))} \nonumber\\ 
         &\quad + \frac{45H^2SL^k_{\delta}}{n^{k}_h(s,B_h)\vee 1}.
        \label{eq: value-difference after calculation}
    \end{align} }
    The next step is to recursively apply this value difference formula between timesteps and episodes. Specifically, for $i\in[H]$, we apply the recursion with $h\gets\ts_i^k$; we remind that this is the step in which the $i^{th}$ batch started on episode $k$ (with $\ts_1^k=1$).  In particular, if $\ts_i^k\le H$, then $\ts_{i+1}^k = \ts_i^k+B_{\ts_i^k}^k$, and we use the convention that $\ts_i^k=H+1$ if there were fewer than $i$ batches. Notably, since we look at batch starts, a new lookahead information is independently sampled from $\Ical$, so the expectation w.r.t. $\Ical$ coincides with the sampling of the lookahead during the interaction (given the batch starting time, state and policy). Then, it holds that
    \begin{sizeddisplay}{\small}
    \begin{align}
        \sum_{k=1}^K &\br*{\bar{V}^k_{\ts_i^k}(s_{\ts_i^k}^k) - V^{\pi^k}_{\ts_i^k}(s_{\ts_i^k}^k)} \nonumber\\
        & \leq \sum_{k=1}^K \frac{3\sqrt{2\VAR^{\pi^k}_{\ts_i^k}(s_{\ts_i^k}^k)L^k_{\delta}}}{\sqrt{n^{k}_{\ts_i^k}(s_{\ts_i^k}^k,B_{\ts_i^k}^k)\vee 1}}
        + \br*{1+\frac{3}{2H}}\br*{1+\frac{1}{2H}} \sum_{k=1}^K \E\brs*{\bar{V}^k_{\ts_{i+1}^k}(\sla_{\ts_{i+1}^k\vert \ts_{i}^k}(s_{\ts_i^k}^k,\phi^{k,\ts_i^k},I_{\ts_i^k}^k)) - V^{\pi^k}_{\ts_{i+1}^k}(\sla_{\ts_{i+1}^k\vert \ts_{i}^k}(s_{\ts_i^k}^k,\phi^{k,\ts_i^k},I_{\ts_i^k}^k))\vert \ts_i^k, s_{\ts_i^k}^k,\pi^k} \nonumber\\ 
         &\quad + \sum_{k=1}^K \frac{45H^2SL^k_{\delta}}{n^{k}_{\ts_i^k}(s_{\ts_i^k}^k,B_{\ts_i^k}^k)\vee 1} \nonumber\\
         & \overset{(1)}{=} \sum_{k=1}^K \frac{3\sqrt{2\VAR^{\pi^k}_{\ts_i^k}(s_{\ts_i^k}^k)L^k_{\delta}}}{\sqrt{n^{k}_{\ts_i^k}(s_{\ts_i^k}^k,B_{\ts_i^k}^k)\vee 1}}
         + \br*{1+\frac{3}{2H}}\br*{1+\frac{1}{2H}} \sum_{k=1}^K \E\brs*{\bar{V}^k_{\ts_{i+1}^k}(s_{\ts_{i+1}^k}^k) - V^{\pi^k}_{\ts_{i+1}^k}(s_{\ts_{i+1}^k}^k)\vert \ts_i^k, s_{\ts_i^k}^k,\pi^k} 
         + \sum_{k=1}^K \frac{45H^2SL^k_{\delta}}{n^{k}_{\ts_i^k}(s_{\ts_i^k}^k,B_{\ts_i^k}^k)\vee 1} \nonumber\\
         & \overset{(2)}{\leq} \sum_{k=1}^K \frac{3\sqrt{2\VAR^{\pi^k}_{\ts_i^k}(s_{\ts_i^k}^k)L^k_{\delta}}}{\sqrt{n^{k}_{\ts_i^k}(s_{\ts_i^k}^k,B_{\ts_i^k}^k)\vee 1}}
         + \br*{1+\frac{3}{2H}}\br*{1+\frac{1}{2H}}^2 \sum_{k=1}^K \br*{\bar{V}^k_{\ts_{i+1}^k}(s_{\ts_{i+1}^k}^k) - V^{\pi^k}_{\ts_{i+1}^k}(s_{\ts_{i+1}^k}^k)} + 90H^2 L^k_{\delta} + \sum_{k=1}^K \frac{45H^2SL^k_{\delta}}{n^{k}_{\ts_i^k}(s_{\ts_i^k}^k,B_{\ts_i^k}^k)\vee 1} ,
        \label{eq: cumulative value-difference}
    \end{align}
    \end{sizeddisplay}
    where $(1)$ is since $\sla_{\ts_{i+1}^k\vert \ts_{i}^k}(s_{\ts_i^k}^k,\phi^{k,\ts_i^k},I_{\ts_i^k}^k)$ points to the exact state the agent will reach in step $\ts_{i+1}^k$, and $(2)$ is under $E^{\mathrm{diff}}$ (with a slight increase to the log-term). 
    We now apply this recursion $H-1$ times, from $\ts_1^k=1$ up to $\ts_H^k$ (which is trivially $H+1$ if there is no such batch), and under the convention that $n_{H+1}^k(s,B)=\infty$. At the end of the recursion, both values equal zero, and so we get under $\G$ that
    \begin{sizeddisplay}{\small}
    \begin{align*}
        \sum_{k=1}^K &\br*{\bar{V}^k_{1}(s_{1}^k) - V^{\pi^k}_{1}(s_{1}^k)} \\
        & = \sum_{k=1}^K \br*{\bar{V}^k_{\ts_1^k}(s_{\ts_1^k}^k) - V^{\pi^k}_{\ts_1^k}(s_{\ts_1^k}^k)}\\
        & \leq \br*{1+\frac{3}{2H}}^H\br*{1+\frac{1}{2H}}^{2H} \sum_{k=1}^K\sum_{i=1}^H \frac{3\sqrt{2\VAR^{\pi^k}_{\ts_i^k}(s_{\ts_i^k}^k)L^k_{\delta}}}{\sqrt{n^{k}_{\ts_i^k}(s_{\ts_i^k}^k,B_{\ts_i^k}^k)\vee 1}}
        + \br*{1+\frac{3}{2H}}^H\br*{1+\frac{1}{2H}}^{2H}\sum_{k=1}^K\sum_{i=1}^H \frac{135H^2SL^k_{\delta}}{n^{k}_{\ts_i^k}(s_{\ts_i^k}^k,B_{\ts_i^k}^k)\vee 1} \\
        & \leq e^3 \sum_{k=1}^K\sum_{i=1}^H \frac{3\sqrt{2\VAR^{\pi^k}_{\ts_i^k}(s_{\ts_i^k}^k)L^k_{\delta}}}{\sqrt{n^{k}_{\ts_i^k}(s_{\ts_i^k}^k,B_{\ts_i^k}^k)\vee 1}}
        + e^3\sum_{k=1}^K\sum_{i=1}^H \frac{135H^2SL^k_{\delta}}{n^{k}_{\ts_i^k}(s_{\ts_i^k}^k,B_{\ts_i^k}^k)\vee 1} \\
        &\leq 3e^3\br*{2\sqrt{2SH^3K\ell L^K_{\delta}} + 4\sqrt{S\ell}H^2 L^K_{\delta}} 
        + e^3\br*{SH\ell \br*{2 + \ln(K)}}\br*{135H^2S}L^k_{\delta}\tag{by \Cref{lemma: sum value variance bound,lemma: count sum bounds}}\\
        & = \Ocal\br*{\sqrt{SH^3K\ell L^K_{\delta}} + H^3S^2\ell \br*{L^K_{\delta}}^2}.
    \end{align*}
    \end{sizeddisplay}
    Recalling that the good event holds w.p. $1-\delta$ (by \Cref{lemma: good event}) and using the optimistic bound $V^*_{h}(s) - V^{\pi^k}_{h}(s)\leq \bar{V}^k_{h}(s) - V^{\pi^k}_{h}(s)$ (by \Cref{lemma: optimism of values}) yields that w.p. $1-\delta$, for all $K\ge1$,
        \begin{align*}
            \Regret(K) 
            = \sum_{k=1}^K \br*{V^*_{1}(s_{1}^k) - V^{\pi^k}_{1}(s_{1}^k)}
            \leq \sum_{k=1}^K \br*{\bar{V}^k_{1}(s_{1}^k) - V^{\pi^k}_{1}(s_{1}^k)} 
            =\Ocal\br*{\sqrt{SH^3K\ell L^K_{\delta}} + H^3S^2\ell \br*{L^K_{\delta}}^2}.
        \end{align*}
        This concludes the proof.
\end{proof}

\clearpage

\subsection{Lemmas for Bounding Variance Terms}

First, we prove a variant of the law of total variance (LTV) for ABPs. 
\begin{lemma}
    \label{lemma: ltv} 
    Let $\pi$ be an ABP, and let $\ts^{\pi}_i$ be the (random) step in which $\pi$ starts its $i^{th}$ batch. Finally, denote the variance of $\pi$ w.r.t. the immediate lookahead information by
\begin{align*}
    \VAR^\pi_h(s) = \E\brs*{\br*{\sum_{t=h}^{h+B_h-1}R_t + V_{h+B_h}^{\pi}(s_{h+B_h})-V_h^{\pi}(s)}^2\vert \pi,s_h=s,E^n_h} .
\end{align*}
    Then, the following holds:
\begin{align*}
    \E\brs*{\sum_{i=1}^H \VAR^\pi_{\ts^{\pi}_i}(s_{\ts^{\pi}_i})\vert\pi,s_1=s} 
    = \E\brs*{\br*{\sum_{t=1}^H R_t - V_1^{\pi}(s_1) }^2\vert\pi,s_1=s}
    \leq H^2.
\end{align*}
\end{lemma}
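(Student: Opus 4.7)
My approach is to view the cumulative residual $\sum_{t=1}^H R_t - V_1^{\pi}(s_1)$ as a sum of martingale differences, one per batch, and exploit their orthogonality. For each $i \in [H]$, define
\begin{align*}
    M_i = \sum_{t=\ts^{\pi}_i}^{\ts^{\pi}_i + B_{\ts^{\pi}_i} - 1} R_t + V_{\ts^{\pi}_{i+1}}^{\pi}(s_{\ts^{\pi}_{i+1}}) - V_{\ts^{\pi}_i}^{\pi}(s_{\ts^{\pi}_i}),
\end{align*}
under the conventions $V_{H+1}^{\pi} \equiv 0$ and, whenever the episode produces fewer than $i$ batches, $\ts^{\pi}_i = H+1$, $M_i = 0$ and $\VAR^{\pi}_{H+1}(\cdot) = 0$. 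Let $\F_i$ denote the sigma-algebra generated by the trajectory up to and including the start of batch $i$ (including $\ts^{\pi}_i$ and $s_{\ts^{\pi}_i}$).

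First I would apply \Cref{corollary: ABP value} conditionally on $\F_i$: it immediately yields both $\E\brs*{M_i \vert \F_i}=0$ and $\E\brs*{M_i^2 \vert \F_i} = \VAR^{\pi}_{\ts^{\pi}_i}(s_{\ts^{\pi}_i})$, where the second identity is simply the definition of $\VAR^\pi$ from the lemma statement. Since every batch has length at least one, at most $H$ batches can occur per episode, and the sum telescopes exactly:
\begin{align*}
    \sum_{i=1}^H M_i = \sum_{t=1}^H R_t + V_{H+1}^{\pi}(s_{H+1}) - V_1^{\pi}(s_1) = \sum_{t=1}^H R_t - V_1^{\pi}(s_1).
\end{align*}

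Next I would use the martingale property to kill all cross terms: for $i < j$, $M_i$ is $\F_j$-measurable, so $\E\brs*{M_i M_j} = \E\brs*{M_i\,\E\brs*{M_j \vert \F_j}} = 0$. Squaring the telescoped sum and taking expectation conditional on $\pi, s_1 = s$ therefore yields
\begin{align*}
    \E\brs*{\br*{\sum_{t=1}^H R_t - V_1^{\pi}(s_1)}^2 \vert \pi, s_1 = s} = \sum_{i=1}^H \E\brs*{M_i^2 \vert \pi, s_1=s} = \sum_{i=1}^H \E\brs*{\VAR^{\pi}_{\ts^{\pi}_i}(s_{\ts^{\pi}_i}) \vert \pi, s_1 = s},
\end{align*}
which is the claimed equality. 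The bound $\leq H^2$ is then immediate: both $\sum_{t=1}^H R_t$ and $V_1^{\pi}(s_1)$ lie in $[0,H]$ by the reward bounds, so their difference lies in $[-H,H]$ and its square is at most $H^2$.

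The only delicate step is the measurability bookkeeping at the random batch endpoints $\ts^{\pi}_i$, but since each $\ts^{\pi}_i$ is a stopping time determined by the ABP and past realizations, batch lengths are uniformly bounded by $\ell_h \le H$, and there are at most $H$ batches, this is routine rather than a genuine obstacle.
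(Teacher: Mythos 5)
Your proposal is correct and is essentially the paper's own argument in a different packaging: the paper proves the single-batch identity (its Eq.~\eqref{eq: ltv middle point}) by expanding the square and killing the cross term via the tower rule at the batch start, then unrolls the recursion over all $H$ batches, which is exactly your martingale-difference orthogonality computation with the $M_i$'s. The one point you flag as "routine bookkeeping" --- that conditioning on the full history $\F_i$ at a batch start gives the same conditional law as conditioning only on $(\ts^{\pi}_i, s_{\ts^{\pi}_i}, E^n_{\ts^{\pi}_i})$ --- is precisely the batching-decorrelation property the paper invokes, so it is justified, but it is the substantive reason the lemma holds for ABPs and deserves the explicit sentence the paper gives it rather than a wave.
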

\begin{proof}
    We extend and adapt the proof of \citep{zanette2019tighter}. 
    We start by proving that for any $h\in[H]$, it holds that
    \begin{align}
        \E\brs*{\br*{\sum_{t=h}^H R_t - V_h^{\pi}(s_h) }^2\vert\pi,s_h, E_h^n}
        = \VAR^{\pi}_h(s_h) + \E\brs*{\br*{\sum_{t=h+B_h}^H R_t - V_{h+B_h}^{\pi}(s_{h+B_h}) }^2\vert\pi,s_h, E_h^n}. \label{eq: ltv middle point}
    \end{align}
    Starting from the l.h.s., we have 
    \begin{sizeddisplay}{\small}
    \begin{align*}
        \E&\brs*{\br*{\sum_{t=h}^H R_t - V_h^{\pi}(s_h) }^2\vert\pi,s_h, E_h^n}\\
        &\qquad = \E\brs*{\br*{\br*{\sum_{t=h}^{h+B_h-1} R_t + V_{h+B_h}^\pi(s_{h+B_h}) - V_h^{\pi}(s_h)} + \br*{\sum_{t=h+B_h}^H R_t - V_{h+B_h}^\pi(s_{h+B_h})}}^2\vert\pi,s_h, E_h^n} \\
        &\qquad = \E\brs*{\br*{\sum_{t=h}^{h+B_h-1} R_t + V_{h+B_h}^\pi(s_{h+B_h}) - V_h^{\pi}(s_h)}^2\vert\pi,s_h, E_h^n}  
        + \E\brs*{\br*{\sum_{t=h+B_h}^H R_t - V_{h+B_h}^\pi(s_{h+B_h})}^2\vert\pi,s_h, E_h^n}\\ 
        &\qquad\quad+ 2\E\brs*{\br*{\sum_{t=h}^{h+B_h-1} R_t + V_{h+B_h}^\pi(s_{h+B_h}) - V_h^{\pi}(s_h)}\br*{\sum_{t=h+B_h}^H R_t - V_{h+B_h}^\pi(s_{h+B_h})}\vert\pi,s_h, E_h^n} \\
        &\qquad = \VAR^{\pi}_h(s_h)
        + \E\brs*{\br*{\sum_{t=h+B_h}^H R_t - V_{h+B_h}^\pi(s_{h+B_h})}^2\vert\pi,s_h, E_h^n}\\ 
        &\qquad\quad+ 2\underbrace{\E\brs*{\br*{\sum_{t=h}^{h+B_h-1} R_t + V_{h+B_h}^\pi(s_{h+B_h}) - V_h^{\pi}(s_h)}\br*{\sum_{t=h+B_h}^H R_t - V_{h+B_h}^\pi(s_{h+B_h})}\vert\pi,s_h, E_h^n}}_{(*)} 
    \end{align*}
    \end{sizeddisplay}
    Therefore, to prove \cref{eq: ltv middle point}, we only need to prove that the cross-term $(*)$ equals zero. By the batching structure, given $E_h^n$ and a policy $\pi$, the starting point of the next point is deterministically determined by $s_h$ to be $h+B_h$. Therefore,
    \begin{align*}
        (*)
        &= \E\brs*{\br*{\sum_{t=h}^{h+B_h-1} R_t + V_{h+B_h}^\pi(s_{h+B_h}) - V_h^{\pi}(s_h)}\br*{\sum_{t=h+B_h}^H R_t - V_{h+B_h}^\pi(s_{h+B_h})}\vert\pi,s_h, E_h^n,E_{h+B_h}^n} \\
        &\overset{(1)}{=} \E\biggl\{\br*{\sum_{t=h}^{h+B_h-1} R_t + V_{h+B_h}^\pi(s_{h+B_h}) - V_h^{\pi}(s_h)}\biggr. \\
        & \hspace{4em}\cdot \biggl.\underbrace{\E\brs*{\sum_{t=h+B_h}^H R_t - V_{h+B_h}^\pi(s_{h+B_h})\vert\pi,s_h, E_h^n,E_{h+B_h}^n,s_{h+B_h},R_h,\dots,R_{h+B_h-1}}}_{=0}\vert\pi,s_h, E_h^n,E_{h+B_h}^n\biggr\} \\
        & \overset{(2)}= 0
    \end{align*}
    Relation $(1)$ is by the tower rule, and $(2)$ is by the definition of the value as the future cumulative reward given that a new batch starts. In particular, notice that given $B_h$ (indicating the time $h+B_h$), $E_{h+B_h}^n$ and $s_{h+B_h}$, the rewards collected by an ABP are independent of all past events due to the batching of the lookahead. This concludes the proof of \cref{eq: ltv middle point}.

    We now utilize this equality to prove the main result. In particular, if $h$ is the timestep in which the $i^{th}$ batch has started at ($\ts^{\pi}_i=h$), we have that $\ts^{\pi}_{i+1}=h+B_h$, and the following recursion
    \begin{align*}
        \E&\brs*{\br*{\sum_{t=\ts^{\pi}_i}^H R_t - V_{\ts^{\pi}_i}^{\pi}(s_{\ts^{\pi}_i}) }^2\vert\pi,\ts^{\pi}_i, s_{\ts^{\pi}_i}} \\
        &\qquad\qquad= \VAR^{\pi}_{\ts^{\pi}_i}(s_{\ts^{\pi}_i}) + \E\brs*{\br*{\sum_{t=\ts^{\pi}_{i+1}}^H R_t - V_{\ts^{\pi}_{i+1}}^{\pi}(s_{\ts^{\pi}_{i+1}}) }^2\vert\pi, \ts^{\pi}_{i}, s_{\ts^{\pi}_{i}}}\tag{By \cref{eq: ltv middle point}, $\ts^{\pi}_{i+1}=\ts^{\pi}_{i}+B_{\ts^{\pi}_{i+1}}$}\\
        &\qquad\qquad=  \VAR^{\pi}_{\ts^{\pi}_i}(s_{\ts^{\pi}_i}) + \E\brs*{\E\brs*{\br*{\sum_{t=\ts^{\pi}_{i+1}}^H R_t - V_{\ts^{\pi}_{i+1}}^{\pi}(s_{\ts^{\pi}_{i+1}}) }^2\vert \pi, \ts^{\pi}_{i}, s_{\ts^{\pi}_{i}},\ts^{\pi}_{i+1}, s_{\ts^{\pi}_{i+1}}}\vert\pi,\ts^{\pi}_{i}, s_{\ts^{\pi}_{i}}}\tag{tower rule}\\
        &\qquad\qquad \overset{(1)}= \VAR^{\pi}_{\ts^{\pi}_i}(s_{\ts^{\pi}_i}) + \E\brs*{\E\brs*{\br*{\sum_{t=\ts^{\pi}_{i+1}}^H R_t - V_{\ts^{\pi}_{i+1}}^{\pi}(s_{\ts^{\pi}_{i+1}}) }^2\vert \pi,\ts^{\pi}_{i+1}, s_{\ts^{\pi}_{i+1}}}\vert\pi,\ts^{\pi}_{i}, s_{\ts^{\pi}_{i}}} \\
        &\qquad\qquad \overset{(2)}=\VAR^{\pi}_{\ts^{\pi}_i}(s_{\ts^{\pi}_i}) + \E\brs*{\sum_{j=i+1}^H \VAR^{\pi}_{\ts^{\pi}_j}(s_{\ts^{\pi}_j})\vert\pi,\ts^{\pi}_{i}, s_{\ts^{\pi}_{i}}}, 
    \end{align*}
    where $(1)$ is since all events at $t\ge \ts^{\pi}_{i+1}$ are independent of events at previous times given $\ts^{\pi}_{i+1},s_{\ts^{\pi}_{i+1}}$ (due to the start of a new batch), and $(2)$ recursively applies the same relation up to the potentially $H^{th}$ batch (noting that when $\ts^{\pi}_{j}=H+1$, all terms are zero). Finally, recalling that $\ts^{\pi}_1=1$, we get the desired identity
    \begin{align*}
        \E\brs*{\sum_{i=1}^H \VAR^\pi_{\ts^{\pi}_i}(s_{\ts^{\pi}_i})\vert\pi,s_1=s} 
    = \E\brs*{\br*{\sum_{t=1}^H R_t - V_1^{\pi}(s_1) }^2\vert\pi,s_1=s}.
    \end{align*}
    The r.h.s. is trivially bounded by $H^2$, because $R_h\in[0,1]$ for all $h$ and $V_1^{\pi}(s_1)\in[0,H]$.
\end{proof}

\clearpage

\begin{lemma}
\label{lemma: sum value variance bound}
    Under the event $E^{\VAR}$ it holds that
    \begin{align*}
        \sum_{k=1}^K\sum_{i=1}^H \frac{\sqrt{\VAR^{\pi^k}_{\ts_i^k}(s_{\ts_i^k}^k)}}{\sqrt{n^{k}_{\ts_i^k}(s_{\ts_i^k}^k,B_{\ts_i^k}^k)\vee 1}}
        \leq 2\sqrt{SH^3K\ell L^K_{\delta}} + \sqrt{8S\ell}H^2 L^K_{\delta}.
    \end{align*}
\end{lemma}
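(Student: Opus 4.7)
The natural first move is Cauchy--Schwarz on the double sum, splitting the variance factor from the inverse-count factor:
\begin{align*}
\sum_{k=1}^K\sum_{i=1}^H \frac{\sqrt{\VAR^{\pi^k}_{\ts_i^k}(s_{\ts_i^k}^k)}}{\sqrt{n^{k}_{\ts_i^k}(s_{\ts_i^k}^k,B_{\ts_i^k}^k)\vee 1}}
\leq \sqrt{\sum_{k,i} \VAR^{\pi^k}_{\ts_i^k}(s_{\ts_i^k}^k)}\cdot\sqrt{\sum_{k,i}\frac{1}{n^{k}_{\ts_i^k}(s_{\ts_i^k}^k,B_{\ts_i^k}^k)\vee 1}}.
\end{align*}
I would then bound each factor separately and finish by applying $\sqrt{a+b}\le \sqrt{a}+\sqrt{b}$.

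\textbf{Variance factor.} Under $E^{\VAR}$, the empirical sum of per-batch variances is bounded (up to a constant and a lower-order additive term) by the sum of their conditional expectations given $(s_1^k,\pi^k)$. For each episode, the law of total variance for ABPs (the preceding \Cref{lemma: ltv}) states that $\sum_{i=1}^H \E[\VAR^{\pi^k}_{\ts_i^k}(s_{\ts_i^k}^k)\mid s_1^k,\pi^k]\le H^2$, because this quantity equals the variance of the cumulative reward along the trajectory, which is bounded by $H^2$ thanks to $R_t\in[0,1]$. Summing over $k$ and using $L^K_\delta\ge \ln\frac{4K(K+1)}{\delta}$ (which is immediate from the definition of $L^K_\delta$) gives $\sum_{k,i}\VAR^{\pi^k}_{\ts_i^k}(s_{\ts_i^k}^k)\le 2KH^2 + 4H^3 L^K_\delta$.

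\textbf{Count factor.} This is a standard pigeonhole argument. For any fixed triple $(h,s,B)$, as $k$ ranges over the episodes in which a new batch starts at step $h$ in state $s$ with chosen range $B$, the counter $n^{k}_{h}(s,B)$ assumes the values $0,1,\dots,N_{h,s,B}-1$ (with $N_{h,s,B}\le K$), so the sum contributes at most $1+\sum_{n=1}^{N_{h,s,B}-1}\frac{1}{n}\le 2+\ln K$. Summing over the at most $SH\ell$ such triples yields $\sum_{k,i}\frac{1}{n^{k}_{\ts_i^k}(s_{\ts_i^k}^k,B_{\ts_i^k}^k)\vee 1}\le SH\ell(2+\ln K)\le 2 SH\ell L^K_\delta$, where the last inequality uses $L^K_\delta\ge 3\ln K$ and $L^K_\delta\ge 1$.

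\textbf{Combining.} Multiplying the two bounds and applying subadditivity of $\sqrt{\cdot}$:
\begin{align*}
\sqrt{(2KH^2 + 4H^3 L^K_\delta)\cdot 2SH\ell L^K_\delta}
&\le \sqrt{4KH^2\cdot SH\ell L^K_\delta} + \sqrt{8H^3 L^K_\delta\cdot SH\ell L^K_\delta}\\
&= 2\sqrt{SH^3 K\ell L^K_\delta} + \sqrt{8S\ell}\,H^2 L^K_\delta,
\end{align*}
which is precisely the claimed inequality. The only mildly delicate point is the variance factor, where the LTV identity for ABPs must be invoked correctly to absorb the per-episode sum of local variances into $H^2$; everything else is routine Cauchy--Schwarz and pigeonhole.
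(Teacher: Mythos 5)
Your proof is correct and follows essentially the same route as the paper's: Cauchy--Schwarz to separate the two factors, the pigeonhole/harmonic-sum bound $SH\ell(2+\ln K)$ for the inverse counts, the event $E^{\VAR}$ plus the ABP law-of-total-variance lemma to bound the variance sum by $2KH^2+4H^3 L^K_\delta$, and subadditivity of the square root to reach the stated constants. No gaps.
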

\begin{proof}
    \begin{align*}
       \sum_{k=1}^K\sum_{i=1}^H \frac{\sqrt{\VAR^{\pi^k}_{\ts_i^k}(s_{\ts_i^k}^k)}}{\sqrt{n^{k}_{\ts_i^k}(s_{\ts_i^k}^k,B_{\ts_i^k}^k)\vee 1}}
        &\leq  \sqrt{\sum_{k=1}^K\sum_{i=1}^H \frac{1}{n^{k}_{\ts_i^k}(s_{\ts_i^k}^k,B_{\ts_i^k}^k)\vee 1}}
        \sqrt{\sum_{k=1}^K\sum_{i=1}^H \VAR^{\pi^k}_{\ts_i^k}(s_{\ts_i^k}^k)}\tag{Cauchy Schwarz}\\
        &\leq \sqrt{SH\ell \br*{2 + \ln(K)}}\sqrt{\sum_{k=1}^K\sum_{i=1}^H \VAR^{\pi^k}_{\ts_i^k}(s_{\ts_i^k}^k)}\tag{by \cref{lemma: count sum bounds}} \\
        & \leq \sqrt{2SH\ell L^K_{\delta}}\sqrt{2\sum_{k=1}^K \sum_{i=1}^H\E\brs*{\VAR^{\pi^k}_{\ts_i^k}(s_{\ts_i^k}^k) \vert s_1^k,\pi^k}  + 4H^3 \ln\frac{4K(K+1)}{\delta}}\tag{under $E^{\VAR}$} \\
        & \leq \sqrt{2SH\ell L^K_{\delta}}\sqrt{2KH^2  + 4H^3 \ln\frac{4K(K+1)}{\delta}} \tag{by \Cref{lemma: ltv} } \\
        & \leq 2\sqrt{SH^3K\ell L^K_{\delta}} + \sqrt{8S\ell}H^2 L^K_{\delta}.
    \end{align*}
\end{proof}
\clearpage

\section{Concentration Results}

\begin{lemma} \label{lemma: uniform max concentration}
    Fix $K,n\in\N,C>0$ and let $X\in[0,C]^K, Y\subseteq[K]$ be two not necessarily independent random variables (with $Y\ne\emptyset$ w.p. 1). Also, for any $V,V_0\in\R^K$, define the functions
    \begin{align*}
        &f(X,Y;V) = \max_{k\in Y}\brc*{X(k)+V(k)}, 
        & \Delta f(X,Y;V,V_0) = f(X,Y;V) - f(X,Y;V_0).
    \end{align*}
    Finally, let $\brc*{X_i,Y_i}_{i\in[n]}$ be $n$ i.i.d. samples from the joint distribution of $(X,Y)$. Then, for any $\delta\in(0,1)$, $V_0\in\R^K$ and $\alpha\in\left(0,1\right]$, it holds w.p. at least $1-\delta$ that
    \begin{align}   
        \label{eq: uniform concentration multiple lookahead}
        \forall V\in\R^K\ s.t.\ V-V_0\in[0,C]^K:\quad
        \frac{1}{n}\sum_{i=1}^n\Delta f(X_i,Y_i;V,V_0) \leq \br*{1+\alpha}\E\brs*{\Delta f(X,Y;V,V_0)} + \frac{2CK\ln\frac{3n}{\delta}}{\alpha n}.
    \end{align}
\end{lemma}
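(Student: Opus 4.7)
The plan is to fix $V$ and apply Bernstein's inequality, then extend uniformly to all valid $V$ via a covering argument together with the Lipschitz continuity of $f$. The central observation that powers the multiplicative ``$(1+\alpha)$'' factor rather than the naive $\sqrt{\mathrm{Var}/n}$ form is that $\Delta f$ is both nonnegative and bounded, so its variance can be controlled by its expectation.

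\textbf{Step 1: pointwise control.} Fix $V$ with $V-V_0\in[0,C]^K$. Since $k \mapsto X(k)+V(k)$ dominates $k\mapsto X(k)+V_0(k)$ coordinatewise, we have $\Delta f(X,Y;V,V_0)\in[0,C]$ almost surely. Hence $\mathrm{Var}(\Delta f) \le \E[(\Delta f)^2]\le C\,\E[\Delta f]$. Bernstein's inequality applied to the i.i.d.\ samples yields, with probability $1-\delta'$,
\begin{align*}
    \frac{1}{n}\sum_{i=1}^n \Delta f(X_i,Y_i;V,V_0) \le \E[\Delta f(X,Y;V,V_0)] + \sqrt{\frac{2C\E[\Delta f]\ln(1/\delta')}{n}} + \frac{C\ln(1/\delta')}{3n}.
\end{align*}
Applying AM-GM ($\sqrt{2Cx\ln(1/\delta')/n}\le \alpha x + C\ln(1/\delta')/(2\alpha n)$) absorbs the $\sqrt{\cdot}$ term into a $(1+\alpha)\E[\Delta f]$ factor plus an additive $O(C\ln(1/\delta')/(\alpha n))$ term.

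\textbf{Step 2: covering.} To make the bound uniform, discretize the box $V-V_0\in[0,C]^K$ on a coordinate-wise grid of spacing $\epsilon$, yielding a grid $\mathcal{V}$ with $|\mathcal{V}| \le (C/\epsilon + 1)^K$. Apply Step~1 simultaneously to every $V'\in\mathcal{V}$ via a union bound with $\delta' = \delta/|\mathcal{V}|$; so $\ln(1/\delta') = O\!\br*{K\ln(C/\epsilon) + \ln(1/\delta)}$. For an arbitrary admissible $V$, round to the nearest grid point $V'\in\mathcal{V}$, which satisfies $\|V-V'\|_\infty\le\epsilon$. Because $f(X,Y;\cdot)$ is $1$-Lipschitz in $\|\cdot\|_\infty$ (as a max of affine functions in $V$), we get $|\Delta f(X,Y;V,V_0)-\Delta f(X,Y;V',V_0)|\le\epsilon$ pointwise, so the rounding costs at most $\epsilon$ on the empirical side and at most $\epsilon$ inside the expectation (which then contributes $(1+\alpha)\epsilon$ to the bound).

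\textbf{Step 3: choosing $\epsilon$ and finishing.} Picking $\epsilon = C/n$ gives $\ln|\mathcal V| \le K\ln(n+1)$, so combining the pointwise bound with the rounding slack and the fact that $\alpha\le 1$, $K\ge 1$ yields
\begin{align*}
    \frac{1}{n}\sum_{i=1}^n \Delta f(X_i,Y_i;V,V_0) \le (1+\alpha)\E[\Delta f(X,Y;V,V_0)] + \frac{O(CK\ln(n/\delta))}{\alpha n} + 2\epsilon,
\end{align*}
and the $2\epsilon = 2C/n$ residual is absorbed into the log-term. Tuning constants in the union bound (e.g.\ replacing $(C/\epsilon+1)^K$ by a slightly larger expression so the $\ln$ becomes $\ln(3n/\delta)$) produces the stated constant $2CK\ln(3n/\delta)/(\alpha n)$.

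The main obstacle is the uniformity in $V$: a direct Bernstein bound only handles a single $V$, but the set $V-V_0\in[0,C]^K$ is a continuum of dimension $K$. The combination of (i) the $\|\cdot\|_\infty$-Lipschitzness of $f$ (which keeps the covering error linear in $\epsilon$, independent of $K$) and (ii) the $\ln(1/\delta')$ dependence in Bernstein (which only adds a $K\ln(n)$ cost after the union bound) is what lets the argument go through and keeps the final dependence on $K$ linear, matching the statement.
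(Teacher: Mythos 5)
Your proposal is correct and follows essentially the same route as the paper: a pointwise bound in which the variance of the nonnegative, $[0,C]$-bounded increment $\Delta f$ is controlled by $C$ times its expectation (the paper uses Freedman's inequality with $\eta=\alpha/C$ where you use Bernstein plus AM--GM, which is equivalent), followed by a union bound over an $\epsilon$-grid of the box $V_0+[0,C]^K$ exploiting the $\ell_\infty$-Lipschitzness of the max. The only cosmetic differences are that the paper rounds each coordinate down and uses monotonicity of $\Delta f$ in $V$ while you round to the nearest grid point with two-sided Lipschitzness, and that your choice $\epsilon=C/n$ requires slightly more careful constant bookkeeping than the paper's $\epsilon=CK\ln(3/\delta)/(\alpha n)$; both close the argument.
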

\begin{proof}
    As a preliminary step, given a fixed $V\in\R^K$ s.t. $V-V_0\in[0,C]^K$, we define the random variables $Z_i^V=\Delta f(X_i,Y_i;V,V_0)$ and analyze their concentration properties. For ease of notation, we denote $m^V=\E\brs*{Z_i^V}$ and 
    $$V_0+[0,C]^K\triangleq\brc*{V\in\R^K: \forall k\in[K],V(k)\in[V_0(k),V_0(k)+C]}.$$
    Then, \cref{eq: uniform concentration multiple lookahead} can be rewritten as 
    \begin{align*}
        \forall V\in V_0+[0,C]^K,\qquad
        \frac{1}{n}\sum_{i=1}^n Z_i^V\leq \br*{1+\alpha}m^V + \frac{2CK\ln\frac{3n}{\delta}}{\alpha n}.
    \end{align*}
    \textbf{The support of $Z_i^V$.} Since $V(k)\ge V_0(k)$ for all $k\in[K]$, then for any $X\in\R^K,Y\subseteq[K]$
    \begin{align*}
        f(X,Y;V) = \max_{k\in Y}\brc*{X(k)+V(k)} \geq \max_{k\in Y}\brc*{X(k)+V_0(k)} =  f(X,Y;V_0)
    \end{align*}
    and so $Z_i^V\geq0$. Moreover, it holds that 
    \begin{align*}
        Z_i^V 
        &= \max_{k\in Y_i}\brc*{X_i(k)+V(k)} - \max_{k\in Y_i}\brc*{X_i(k)+V_0(k)}\\
        &\leq \max_{k\in Y_i}\brc*{\br*{X_i(k)+V(k)} - \br*{X_i(k)+V_0(k)}} \\
        & \leq \max_{k\in [K]}\brc*{V(k) - V_0(k)}\\
        &\leq C.
    \end{align*}
    In other words, it holds that $Z_i^V\in[0,C]$, and so does $m^V\in[0,C]$.

    Next, by \Cref{lemma: freedmans inequality} applied on $Z_i^V-m^V\in[-C,C]$, for any $\eta\in\left(0,\frac{1}{C}\right]$ and $\delta'>0$, it holds w.p. at least $1-\delta'$ that
    \begin{align*}
        \sum_{i=1}^n Z_i^V &\leq n\cdot m^V + \eta \sum_{i=1}^n \E\brs*{\br*{Z_i^V-m^V}^2} + \frac{\ln\frac{1}{\delta'}}{\eta} \\
        & \leq n\cdot m^V + \eta \sum_{i=1}^n \E\brs*{\br*{Z_i^V}^2} + \frac{\ln\frac{1}{\delta'}}{\eta} \\
        & \leq n\cdot m^V + C\eta \sum_{i=1}^n \E\brs*{Z_i^V} + \frac{\ln\frac{1}{\delta'}}{\eta}  \tag{$Z_i^V\in[0,C]$}\\
        & = n(1+C\eta)m^V + \frac{\ln\frac{1}{\delta'}}{\eta}. 
    \end{align*}
    Alternatively, denoting $\alpha=C\eta$, for any $\alpha\in(0,1]$, it holds w.p. $1-\delta'$ that
    \begin{align}
        \frac{1}{n}\sum_{i=1}^n Z_i^V 
        &\leq (1+\alpha)m^V + \frac{C\ln\frac{1}{\delta'}}{\alpha n}. \label{eq: uniform concentration before covering}
    \end{align}
    Next, for some $\epsilon>0$ let $V_1,V_2\in V_0+[0,C]^K$ s.t. $V_1(k)\leq V_2(k)\leq V_1(k)+ \epsilon$ for all $k\in[K]$. In particular, for all $i\in[n]$, it holds that 
    \begin{align*}
        Z_i^{V_1}&=\max_{k\in Y_i}\brc*{X_i(k)+V_1(k)} - \max_{k\in Y_i}\brc*{X_i(k)+V_0(k)}\\
        &\leq \max_{k\in Y_i}\brc*{X_i(k)+V_2(k)} - \max_{k\in Y_i}\brc*{X_i(k)+V_0(k)} \tag{$V_1(k)\leq V_2(k)$ for all $k$}\\
        & = Z_i^{V_2} \\
        &\leq \max_{k\in Y_i}\brc*{X_i(k)+V_1(k)+\epsilon} - \max_{k\in Y_i}\brc*{X_i(k)+V_0(k)} \tag{$V_2(k)\leq V_1(k)+ \epsilon$ for all $k$}\\
        & = \max_{k\in Y_i}\brc*{X_i(k)+V_1(k)} +\epsilon - \max_{k\in Y_i}\brc*{X_i(k)+V_0(k)}\\
        & = Z_i^{V_1}+\epsilon,
    \end{align*}
    i.e., $Z_i^{V_1} \leq Z_i^{V_2} \leq Z_i^{V_1}+\epsilon.$ Taking the expectation, this also implies that $ m^{V_1} \leq m^{V_2} \leq m^{V_1}+\epsilon$, and thus 
    \begin{align}
        \frac{1}{n}\sum_{i=1}^n Z_i^{V_2} - (1+\alpha)m^{V_2} 
        \leq \frac{1}{n}\sum_{i=1}^n Z_i^{V_1}+\epsilon - (1+\alpha)m^{V_1} ,
        \label{eq: lipschitz for concentration}
    \end{align}
    We now utilize this Lipschitzness to apply a covering argument. For any $V\in V_0+[0,C]^K$, let $V_\epsilon$ be the rounded version so that each component is rounded down to an element in $\brc*{V_0(k),V_0(k)+\epsilon,\dots, V_0(k)+\br*{\ceil*{C/\epsilon}-1}\epsilon}$. In particular, each component of $V_\epsilon$ lower bounds and is at most $\epsilon$ away from the respective component of $V$. Then, by \cref{eq: lipschitz for concentration} (with $V\to V_2$ and $V_1\to V_{\epsilon})$, for any $\delta'\in(0,1]$,
    \begin{align*}
        \Pr&\brc*{\forall V\in V_0+[0,C]^K: \frac{1}{n}\sum_{i=1}^n Z_i^{V} - (1+\alpha)m^{V} \geq \frac{C\ln\frac{1}{\delta'}}{\alpha n}+\epsilon} \\
        &\leq \Pr\brc*{\forall V\in V_0+[0,C]^K: \frac{1}{n}\sum_{i=1}^n Z_i^{V_\epsilon}+\epsilon - (1+\alpha)m^{V_\epsilon} \geq \frac{C\ln\frac{1}{\delta'}}{\alpha n}+\epsilon} \\
        & = \Pr\brc*{\forall V_\epsilon\in\R^K \ s.t. \ V_\epsilon-V_0\in \brc*{0,\epsilon,2\epsilon,\dots, \br*{\ceil*{C/\epsilon}-1}\epsilon}^K: \frac{1}{n}\sum_{i=1}^n Z_i^{V_\epsilon}- (1+\alpha)m^{V_\epsilon} \geq \frac{C\ln\frac{1}{\delta'}}{\alpha n}} \\
        & \leq \br*{\frac{C}{\epsilon}+1}^K\delta',
    \end{align*}
    where the last inequality takes the union bound over all possible $V_\epsilon$ and applies \cref{eq: uniform concentration before covering}. Finally, for any $\delta\in(0,1]$ and $\alpha\in(0,1]$, we fix $\epsilon=\frac{CK\ln\frac{3}{\delta}}{\alpha n}$ and $\delta' = \frac{\delta}{\br*{\frac{C}{\epsilon}+1}^K}$. Then, w.p., at least $1-\delta$, for all $V\in V_0+[0,C]^K$, it holds that 
    \begin{align*}
        \sum_{i=1}^n Z_i^{V} - (1+\alpha)m^{V} 
        &\leq \frac{C\ln\frac{\br*{\frac{C}{\epsilon}+1}^K}{\delta}}{\alpha n}+\epsilon \\
        &= \frac{C\ln\frac{\br*{\frac{\alpha n }{K\ln\frac{3}{\delta}}+1}^K}{\delta}}{\alpha n}+\frac{CK\ln\frac{3}{\delta}}{\alpha n}\\
        &\leq \frac{C\ln\frac{(2n)^K}{\delta}}{\alpha n}+\frac{CK\ln\frac{3}{\delta}}{\alpha n}\\
        &\leq \frac{CK\ln\frac{2n}{\delta}}{\alpha n}+\frac{CK\ln\frac{3}{\delta}}{\alpha n}\\
        & \leq \frac{2CK\ln\frac{3n}{\delta}}{\alpha n}.
    \end{align*}
\end{proof}
\clearpage

\section{Existing Auxiliary Lemmas}

\begin{lemma}[Freedman's inequality, \citealt{beygelzimer2011contextual}, Theorem 1]\label{lemma: freedmans inequality}
Let $\brc{X_t}_{t\geq 1}$ be a real-valued martingale difference sequence adapted to a filtration $\brc*{F_t}_{t\geq 0}$. If $|X_t|\leq R$ a.s. then for any $\eta\in (0,1/R], T\in \mathbb{N}$ it holds with probability greater than $1-\delta$,
\begin{align*}
    \sum_{t=1}^T X_t \leq \eta \sum_{t=1}^T \E[X_t^2| F_{t-1}] +\frac{\log(1/\delta)}{\eta}.
\end{align*}
\end{lemma}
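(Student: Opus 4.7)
The plan is to prove the lemma via the classical exponential supermartingale (Chernoff) method. I would introduce the process
\begin{align*}
M_t = \exp\br*{\eta \sum_{s=1}^t X_s - \eta^2 \sum_{s=1}^t \E[X_s^2 \mid F_{s-1}]}, \qquad M_0 = 1,
\end{align*}
and show that $\brc*{M_t}_{t\ge 0}$ is a nonnegative supermartingale adapted to $\brc*{F_t}$. Once this is established, the statement follows immediately from Markov's inequality: since $\E[M_T] \le \E[M_0] = 1$, we have $\Pr[M_T \ge 1/\delta] \le \delta$, and on the complementary event, taking logarithms and rearranging yields exactly the claimed inequality $\sum_{t=1}^T X_t \le \eta \sum_{t=1}^T \E[X_t^2 \mid F_{t-1}] + \log(1/\delta)/\eta$.

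To verify the supermartingale property, I would compute
\begin{align*}
\E\brs*{M_t/M_{t-1} \mid F_{t-1}} = \exp\br*{-\eta^2 \E[X_t^2 \mid F_{t-1}]} \cdot \E\brs*{\exp(\eta X_t) \mid F_{t-1}},
\end{align*}
and bound the conditional moment generating function. The key ingredient is the elementary numerical inequality $e^x \le 1 + x + x^2$, valid for all $x \le 1$, which applies here because $|\eta X_t| \le \eta R \le 1$. Substituting and using that $\brc*{X_t}$ is a martingale difference sequence gives $\E[\exp(\eta X_t) \mid F_{t-1}] \le 1 + \eta^2 \E[X_t^2 \mid F_{t-1}] \le \exp(\eta^2 \E[X_t^2 \mid F_{t-1}])$, where the last step uses $1 + y \le e^y$. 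This upper bound exactly cancels the negative exponent in the expression for $\E\brs*{M_t/M_{t-1} \mid F_{t-1}}$, yielding $\E[M_t \mid F_{t-1}] \le M_{t-1}$ as required.

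The only genuine technical point is the polynomial-versus-exponential bound $e^x \le 1 + x + x^2$ on the range $x \le 1$, which is what allows a clean quadratic approximation of the MGF and produces the coefficient $1$ (rather than $1/2$, as in a Hoeffding-type argument) in front of the conditional second-moment sum in the final bound; this is precisely the reason the constraint $\eta \le 1/R$ appears in the statement. Everything else—measurability of $M_t$ with respect to $F_t$, pulling the $F_{t-1}$-measurable factor $M_{t-1}$ out of the conditional expectation, and the final logarithmic rearrangement—is routine and follows the usual Chernoff template.
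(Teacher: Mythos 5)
Your proof is correct: the exponential supermartingale $M_t$ with the bound $e^x\le 1+x+x^2$ for $x\le 1$ (applicable since $\eta X_t\le \eta R\le 1$), followed by Markov's inequality, is exactly the standard argument behind this form of Freedman's inequality. The paper itself gives no proof — it imports the lemma verbatim from \citet{beygelzimer2011contextual} — and your argument is essentially the proof given in that cited reference, so there is nothing to reconcile.
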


We now present a variant of three lemmas from \citep{merlis2024reinforcement}. We remark that in the original statement in \citep{merlis2024reinforcement}, all variables were assumed to have finite support. We verified that the proofs, in fact, never rely on this assumption and remain completely identical for continuous random variables, which correspond to the following statements:
\begin{lemma}[\citealt{merlis2024reinforcement}, Lemma 16, event $E^{pv2}$]
    \label{lemma: variance concentration}
    Let $X,X_1,\dots,X_n$ be i.i.d. random variables supported in $[0,C]$ for some $C>0$. Also define 
    \begin{align*}
        \widehat{\VAR}_n(X) = \frac{1}{n}\sum_{i=1}^n X_i^2 - \br*{\frac{1}{n}\sum_{i=1}^n X_i}^2,\quad \textrm{and,} \quad 
        \VAR(X) = \E\brs{X^2} - \br*{\E\brs*{X}}^2.
    \end{align*}
    Then, for any $\delta\in(0,1)$, w.p. at least $1-\delta$, it holds that :
    \begin{align*}
        \abs*{\sqrt{\widehat{\VAR}_n(X)} - \sqrt{\VAR(X)}} \leq 4C\sqrt{\frac{\ln\frac{2}{\delta}}{n\vee 1}}.
    \end{align*}
\end{lemma}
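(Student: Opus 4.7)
The plan is to follow the Maurer--Pontil style of argument, which exploits the convexity and low Euclidean-Lipschitz constant of the sample standard deviation to bypass the naive $n^{-1/4}$ rate that an elementary square-root inequality would yield. Simply bounding $|\widehat{\VAR}_n(X)-\VAR(X)|$ by a union of Hoeffding inequalities on the empirical first and second moments and then invoking $|\sqrt{a}-\sqrt{b}|\le\sqrt{|a-b|}$ only gives $n^{-1/4}$, so this is where the plan must go beyond routine concentration.

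First I would put the sample standard deviation in a convenient geometric form. Write $P = I - \frac{1}{n}\mathbf{1}\mathbf{1}^\top$ for the orthogonal projection onto $\mathbf{1}^\perp \subset \mathbb{R}^n$; then $\widehat{\VAR}_n(X) = \frac{1}{n}\|PX\|_2^2$, so
\[
f(X_1,\dots,X_n) \;:=\; \sqrt{\widehat{\VAR}_n(X)} \;=\; \frac{1}{\sqrt{n}}\,\|PX\|_2 .
\]
Since $\|P\|_{\mathrm{op}}=1$, $f$ is a seminorm of a linear function of $X$, hence $\frac{1}{\sqrt{n}}$-Lipschitz in the Euclidean metric on $\mathbb{R}^n$ and convex.

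Next I would invoke Talagrand's concentration inequality for convex Lipschitz functions of independent bounded random variables. Since each $X_i\in[0,C]$ and $f$ is convex and $\frac{1}{\sqrt{n}}$-Lipschitz, this yields the sub-Gaussian tail $\Pr\{|f(X)-\E f(X)|\ge t\} \le 2\exp(-nt^2/(2C^2))$, hence $|f-\E f|\le C\sqrt{2\ln(2/\delta)/n}$ with probability at least $1-\delta$. Convexity is what unlocks the $n^{-1/2}$ rate: plain McDiarmid applied to $f$ has $\sum_i c_i^2 = n\cdot(C/\sqrt{n})^2 = C^2$ and only delivers a trivial constant-order bound.

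I would then sandwich $\E f$ around $\sqrt{\VAR(X)}$ using Jensen twice. Upward, $(\E f)^2 \le \E f^2 = \E \widehat{\VAR}_n = \frac{n-1}{n}\VAR(X) \le \VAR(X)$, so $\E f \le \sqrt{\VAR(X)}$. Downward, from $\E f^2 = (\E f)^2 + \VAR(f)$ and the sub-Gaussian tail above (integrate it to get $\VAR(f) \le 2C^2/n$), we have $(\E f)^2 \ge \VAR(X) - O(C^2/n)$, hence $\E f \ge \sqrt{\VAR(X)} - O(C/\sqrt{n})$ by subadditivity of $\sqrt{\cdot}$. Combining both inequalities with the previous concentration of $f$ around $\E f$ and consolidating constants ($\sqrt{2\ln(2/\delta)}+O(1) \le 4\sqrt{\ln(2/\delta)}$ for $\delta\in(0,1)$) yields $|\sqrt{\widehat{\VAR}_n(X)}-\sqrt{\VAR(X)}|\le 4C\sqrt{\ln(2/\delta)/n}$. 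The $n\vee 1$ in the denominator accommodates the $n=0$ edge case, for which $|\sqrt{\widehat{\VAR}_n}-\sqrt{\VAR}|\le C\le 4C\sqrt{\ln 2}$ trivially.

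The main obstacle, and the only non-elementary ingredient, is the convex-Lipschitz concentration step. One could equivalently appeal to Samson's inequality or to an entropy-method argument exploiting the self-bounded nature of $\widehat{\VAR}_n$; either route replaces the loose $\sqrt{\sum c_i^2}=C$ scale of McDiarmid by the correct $\sqrt{n}\cdot$Lipschitz-constant $=C/\sqrt{n}$ scale dictated by the geometry of the projection $P$.
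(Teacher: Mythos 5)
The paper does not actually prove this lemma: it is imported verbatim (including the constant $4$) from the cited reference, so there is no in-paper argument to compare against. Your route --- writing $\sqrt{\smash[b]{\widehat{\VAR}_n(X)}}=\tfrac{1}{\sqrt n}\|PX\|_2$ with $P=I-\tfrac1n\mathbf 1\mathbf 1^\top$, invoking concentration of convex Euclidean-Lipschitz functions, and sandwiching $\E f$ around $\sqrt{\VAR(X)}$ via Jensen and the bias identity $\E\,\widehat{\VAR}_n=\tfrac{n-1}{n}\VAR(X)$ --- is a genuinely different and legitimate architecture (essentially the Maurer--Pontil viewpoint), and you correctly diagnose why the naive Hoeffding-plus-$\sqrt{|a-b|}$ argument only yields $n^{-1/4}$. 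The standard proof that produces exactly $4C\sqrt{\ln(2/\delta)/n}$ instead applies Bernstein's inequality to $(X_i-\E X)^2$, whose variance is at most $C^2\VAR(X)$, completes the square to pass to standard deviations via $|\sqrt a-\sqrt b|=|a-b|/(\sqrt a+\sqrt b)$, and separately controls the $(\bar X-\E X)^2$ correction; no Talagrand-type machinery is required.

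The gap in your version is the concentration step and the resulting constants. The two-sided mean-centered bound $\Pr\{|f-\E f|\ge t\}\le 2\exp(-nt^2/(2C^2))$ is not an off-the-shelf statement: the entropy-method inequality for separately convex Lipschitz functions controls only the upper tail around the mean, while Talagrand's convex-distance inequality is two-sided but centered at the \emph{median}, with constants $4\exp(-nt^2/(4C^2))$, so you must additionally pay an $O(C/\sqrt n)$ median-to-mean conversion. Tracking the numbers, your consolidation step has slack $(4-\sqrt 2)\sqrt{\ln 2}\approx 2.15$ in units of $C/\sqrt n$ in the worst case $\delta\to 1$, whereas the deviation term $2C\sqrt{\ln(4/\delta)/n}$ from the median-centered inequality, the median-to-mean gap, and the downward Jensen bias $\sqrt{\VAR(X)/n+\VAR(f)}$ (note that integrating your own claimed tail gives $\VAR(f)\le 4C^2/n$, not $2C^2/n$) together exceed this slack. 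Hence your sketch establishes the statement with \emph{some} universal constant, but does not recover the constant $4$, which is hard-wired into the bonus in \cref{eq: bonus} and the event $E^{\widehat{\VAR}}$. To repair it, either cite a mean-centered two-sided convex-Lipschitz concentration inequality with explicit constants and redo the bookkeeping, or switch to the Bernstein/self-bounding route, which delivers the exact constant.
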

\begin{lemma}[\citealt{merlis2024reinforcement}, Lemma 21]
\label{lemma: variance difference bound}
    Let $P$ be a distribution over $\X$. Also, let $V_1,V_2:\X\mapsto[0,C]$ be measurable functions for some $C>0$ such that $V_1(x)\leq V_2(x)$ for all $x\in\X$. Then, for any $\alpha,n>0$, it holds that 
    \begin{align*}
        \frac{\sqrt{\VAR_{P}(V_2(X)) }}{\sqrt{n}}
        \leq \frac{\sqrt{\VAR_{P}(V_1(X)) }}{\sqrt{n}} + \frac{1}{\alpha}\E_{X\sim P}\brs*{V_2(X)-V_1(X)} + \frac{C\alpha}{4n},
    \end{align*}
    where $\VAR_{P}(V(X)) = \E_{X\sim P}\brs*{V(X)^2} - \br*{\E_{X\sim P}\brs*{V(X)}}^2$
\end{lemma}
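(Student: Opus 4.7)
The plan is to prove the inequality via three elementary steps: a triangle inequality in $L^2(P)$, the $[0,C]$ range constraint on $V_2-V_1$, and an AM--GM split.

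First, I would write $\sqrt{\VAR_P(V_i(X))} = \|V_i(X) - \E_P V_i(X)\|_{L^2(P)}$. Since the centering operation is linear and the $L^2$ norm satisfies the triangle (Minkowski) inequality, applying it to the decomposition $V_2 = V_1 + (V_2 - V_1)$ gives
\begin{align*}
\sqrt{\VAR_P(V_2(X))} \leq \sqrt{\VAR_P(V_1(X))} + \sqrt{\VAR_P(V_2(X)-V_1(X))}.
\end{align*}
Dividing through by $\sqrt{n}$ immediately reduces the problem to bounding the residual term $\tfrac{1}{\sqrt{n}}\sqrt{\VAR_P(V_2(X)-V_1(X))}$ by $\tfrac{1}{\alpha}\E_P[V_2(X)-V_1(X)] + \tfrac{C\alpha}{4n}$.

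Second, I would exploit the fact that $V_2(x) - V_1(x) \in [0,C]$ for all $x$: this yields $\VAR_P(V_2-V_1) \leq \E_P[(V_2-V_1)^2] \leq C\cdot \E_P[V_2-V_1]$, since $(V_2-V_1)^2 \leq C(V_2-V_1)$ pointwise. Thus $\sqrt{\VAR_P(V_2-V_1)}/\sqrt{n} \leq \sqrt{(C/n)\cdot \E_P[V_2-V_1]}$.

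Third, I would apply the AM--GM inequality in the form $\sqrt{ab} \leq \tfrac{\alpha'b}{2} + \tfrac{a}{2\alpha'}$ with $a = \E_P[V_2-V_1]$, $b = C/n$, and the choice $\alpha' = \alpha/2$. This produces exactly $\tfrac{1}{\alpha}\E_P[V_2-V_1] + \tfrac{C\alpha}{4n}$, closing the bound. No step is really an obstacle: the only minor care needed is to pick the correct AM--GM weight so the constants line up as $1/\alpha$ and $C\alpha/4n$ rather than a symmetric split; this dictates the choice $\alpha' = \alpha/2$.
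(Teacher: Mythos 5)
Your proof is correct: the Minkowski inequality for the centered $L^2$ norm gives $\sqrt{\VAR_P(V_2)}\leq\sqrt{\VAR_P(V_1)}+\sqrt{\VAR_P(V_2-V_1)}$, the pointwise bound $(V_2-V_1)^2\leq C(V_2-V_1)$ (valid since $0\leq V_2-V_1\leq C$ under the stated hypotheses) controls the residual variance by $C\,\E_P[V_2-V_1]$, and your AM--GM split with weight $\alpha/2$ yields exactly the claimed constants. The paper does not reprove this lemma---it imports it verbatim as Lemma~21 of the cited reference---but your argument is the standard one used there, so it is essentially the same approach.
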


\begin{lemma}[\citealt{merlis2024reinforcement}, Lemma 22]
\label{lemma: variance difference bound with different measures}
    Let $P,P'$ be two distributions over $\X$ (w.r.t. the same $\sigma$-algebra). Also, let $V_1,V_2,V_3:\X\mapsto[0,C]$ be measurable functions for some $C>0$ such that $V_1(x)\leq V_2(x)\leq V_3(x)$ for all $x\in\X$. Finally, assume that 
    \begin{align*}
        \abs*{\sqrt{\VAR_{P}(V_2(X))} - \sqrt{\VAR_{P'}(V_2(X)) }} \leq \beta
    \end{align*}
    for some $\beta>0$, where $\VAR_{P}(V(X)) = \E_{X\sim P}\brs*{V(X)^2} - \br*{\E_{X\sim P}\brs*{V(X)}}^2$. Then, for any $\alpha,n>0$, it holds that 
    \begin{align*}
        \frac{\sqrt{\VAR_{P'}(V_3(X)) }}{\sqrt{n}}
        &\leq \frac{\sqrt{\VAR_{P}(V_1(X)) }}{\sqrt{n}} + \frac{1}{\alpha}\E_{X \sim P'}\brs*{V_3(X)-V_2(X)}  
         + \frac{1}{\alpha}\E_{X \sim P}\brs*{V_2(X)-V_1(X)} 
         + \frac{C\alpha}{2n} + \frac{\beta}{\sqrt{n}}.
    \end{align*}
    \end{lemma}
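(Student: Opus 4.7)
\textbf{Proof proposal for \Cref{lemma: variance difference bound with different measures}.} The plan is to reduce this two-distribution, three-function statement to two applications of the simpler one-distribution result \Cref{lemma: variance difference bound}, glued together through the intermediate function $V_2$ and the hypothesis that $\sqrt{\VAR_P(V_2)}$ and $\sqrt{\VAR_{P'}(V_2)}$ are within $\beta$. Concretely, I will chain three inequalities: (i) move from $V_3$ down to $V_2$ while staying under $P'$; (ii) transfer from $P'$ to $P$ at the level of $V_2$ using the assumed deviation bound; (iii) move from $V_2$ down to $V_1$ under $P$. All three are divided by $\sqrt{n}$ at the outset so the $1/\sqrt{n}$ factor is carried along automatically.

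First I would invoke \Cref{lemma: variance difference bound} with distribution $P'$, lower function $V_2$, and upper function $V_3$ (this is valid because $V_2\le V_3$ pointwise and both are bounded in $[0,C]$), using free parameter $\alpha$ and sample count $n$. This gives
\begin{align*}
\frac{\sqrt{\VAR_{P'}(V_3(X))}}{\sqrt{n}}
\le \frac{\sqrt{\VAR_{P'}(V_2(X))}}{\sqrt{n}} + \frac{1}{\alpha}\E_{X\sim P'}\brs*{V_3(X)-V_2(X)} + \frac{C\alpha}{4n}.
\end{align*}
Next I would use the hypothesis $\abs*{\sqrt{\VAR_P(V_2(X))}-\sqrt{\VAR_{P'}(V_2(X))}}\le \beta$ to replace $\sqrt{\VAR_{P'}(V_2)}$ by $\sqrt{\VAR_P(V_2)}+\beta$, picking up an additive $\beta/\sqrt{n}$. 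Then I would apply \Cref{lemma: variance difference bound} a second time, now under $P$, with lower function $V_1$ and upper function $V_2$ (valid since $V_1\le V_2$ and both lie in $[0,C]$), again with the same $\alpha$ and $n$, to obtain
\begin{align*}
\frac{\sqrt{\VAR_{P}(V_2(X))}}{\sqrt{n}}
\le \frac{\sqrt{\VAR_{P}(V_1(X))}}{\sqrt{n}} + \frac{1}{\alpha}\E_{X\sim P}\brs*{V_2(X)-V_1(X)} + \frac{C\alpha}{4n}.
\end{align*}
Summing all three bounds then yields the claimed inequality, where the two $C\alpha/(4n)$ remainders combine into $C\alpha/(2n)$, the two expectation terms keep their respective measures ($P'$ for $V_3-V_2$ and $P$ for $V_2-V_1$), and the cross-measure term contributes only $\beta/\sqrt{n}$.

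There is no real obstacle here, since all the work is already done inside \Cref{lemma: variance difference bound}; the only thing to verify is that using the same $\alpha$ in both applications reproduces the exact constant $C\alpha/(2n)$ stated in the conclusion, and that both pairs $(V_1,V_2)$ and $(V_2,V_3)$ genuinely satisfy the pointwise ordering and $[0,C]$ boundedness required to invoke \Cref{lemma: variance difference bound}. Both are immediate from the hypothesis $V_1\le V_2\le V_3$ and $V_i:\X\to[0,C]$. The absolute value in the assumption only provides the direction $\sqrt{\VAR_{P'}(V_2)}\le\sqrt{\VAR_P(V_2)}+\beta$, which is the one I actually need; the other direction is not used.
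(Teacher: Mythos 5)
Your proof is correct. The paper itself does not reprove this statement---it imports it verbatim as Lemma~22 of \citet{merlis2024reinforcement} (noting only that the finite-support assumption there can be dropped)---but your derivation, chaining \Cref{lemma: variance difference bound} once under $P'$ for the pair $(V_2,V_3)$ and once under $P$ for $(V_1,V_2)$, bridging the two measures at $V_2$ via the assumed $\beta$-closeness of the standard deviations, is exactly the intended argument, and the constants ($2\cdot C\alpha/(4n)=C\alpha/(2n)$ and the single $\beta/\sqrt{n}$) work out as you state.
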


\begin{lemma}[\citealt{zhang2024settling}, Appendix C.1]
\label{lemma: bonus monotonicity specific constants}
For any $d\in\N$, $p\in\Delta^d$, $v\in\R_+^d$ s.t. $\norm*{v}_\infty\le H$, $\delta'\in(0,1)$ and positive integer $n$, define the function
\begin{align*}
    f(p,v,n) = p^Tv + \max\brc*{\frac{20}{3}\sqrt{\frac{\VAR_p(v)\ln\frac{1}{\delta'}}{n}},\frac{400}{9}\frac{H\ln\frac{1}{\delta'}}{n}},
\end{align*}
    where $\VAR_p(v)= \sum_{i=1}^d p_iv_i^2 - \br*{\sum_{i=1}^d p_iv_i}^2$. 
    Then, the function $f(p,v,n)$ is non-decreasing in each entry of $v$.
\end{lemma}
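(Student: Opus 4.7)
The plan is to fix all coordinates of $v$ except the $i$-th and show that the single-variable map $v_i \mapsto f(p,v,n)$ is non-decreasing on $[0,H]$. This map is continuous on $[0,H]$ and is piecewise differentiable, with breakpoints only at the (finitely many) points where $\VAR_p(v)=0$ or where the two arguments of the $\max$ coincide, so it suffices to verify $\partial f/\partial v_i \ge 0$ on each open piece and then paste together by continuity.

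On the piece where the $\max$ is attained by the constant term $\tfrac{400}{9}\tfrac{H\ln(1/\delta')}{n}$, the bonus does not depend on $v_i$, so $\partial f/\partial v_i = p_i \ge 0$ is immediate. On the other piece, using the identity $\partial \VAR_p(v)/\partial v_i = 2p_i(v_i - p^T v)$, the derivative splits as
\[
\frac{\partial f}{\partial v_i} = p_i + \frac{20}{3}\sqrt{\frac{\ln(1/\delta')}{n}}\cdot\frac{p_i(v_i - p^T v)}{\sqrt{\VAR_p(v)}},
\]
and the only nontrivial case is $v_i < p^T v$, where the second summand is negative.

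The main obstacle — and what makes the result nontrivial — is that $\VAR_p(v)$ really can decrease as $v_i$ grows towards $p^T v$, so without further structure a Bernstein-style bonus could overpower the linear $p_i$ gain. The $\max$ is what saves the argument: being in the variance branch is exactly equivalent to $\sqrt{\VAR_p(v)} \ge \tfrac{20}{3} H\sqrt{\ln(1/\delta')/n}$, which, combined with the crude bound $|v_i - p^T v| \le H$ (since both lie in $[0,H]$), forces the magnitude of the negative term above to be at most $p_i$. Hence $\partial f/\partial v_i \ge 0$ on this piece as well.

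Gluing the two branches via continuity of $f$ across the finitely many transition points then yields global monotonicity in $v_i$, and since $i$ was arbitrary, $f$ is non-decreasing in every coordinate. The only careful step in the final write-up is the sign/absolute-value bookkeeping in the derivative bound; the matching between the constants $\tfrac{20}{3}$ and $\tfrac{400}{9}$ is a routine threshold computation.
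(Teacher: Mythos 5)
Your argument is correct. Note that the paper does not actually prove this lemma --- it is imported verbatim from \citet{zhang2024settling} (Appendix C.1) and only the corollary following it (the reparametrization via $\delta'=\exp\{-\tfrac{9}{400}\alpha^2 n\}$) is proved in the paper --- so there is no in-paper proof to compare against; your write-up supplies the argument the paper delegates to the citation. The calculus is sound: on the constant branch $\partial f/\partial v_i=p_i\ge 0$; on the variance branch the branch condition is equivalent to $\sqrt{\VAR_p(v)}\ge\tfrac{20}{3}H\sqrt{\ln(1/\delta')/n}$, and combined with $\partial\VAR_p(v)/\partial v_i=2p_i(v_i-p^Tv)$ and $|v_i-p^Tv|\le H$ this caps the negative contribution at exactly $p_i$, which is precisely why the constants satisfy $\tfrac{400}{9}=(\tfrac{20}{3})^2$. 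Two minor points worth stating explicitly in a full write-up: (i) the degenerate cases $p_i\in\{0,1\}$ (where $\VAR_p$ is constant or identically zero in $v_i$) are trivial, and (ii) the non-differentiability of $\sqrt{\cdot}$ at $\VAR_p(v)=0$ is harmless because in a neighborhood of any such point the constant term strictly dominates the $\max$, so those points never lie in the interior of the variance branch.
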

\begin{corollary}
    \label{lemma: bonus monotonicity}
    For any $d\in\N$, $p\in\Delta^d$, $v\in\R_+^d$ s.t. $\norm*{v}_\infty\le H$ and $\alpha\ge0$, define the function
\begin{align*}
    f(p,v,\alpha) = p^Tv + \max\brc*{\alpha\sqrt{\VAR_p(v)},\alpha^2 H},
\end{align*}
    where $\VAR_p(v)= \sum_{i=1}^d p_iv_i^2 - \br*{\sum_{i=1}^d p_iv_i}^2$. Then, the function $f(p,v,\alpha)$ is non-decreasing in each entry of $v$.
\end{corollary}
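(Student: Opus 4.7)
The plan is to reduce the statement to the Zhang 2024 lemma cited immediately above (\Cref{lemma: bonus monotonicity specific constants}), since the two expressions describe the same functional family under a suitable reparameterization. Observe that $400/9 = (20/3)^2$, so if we set $\tilde{\alpha}\eqdef \frac{20}{3}\sqrt{\ln(1/\delta')/n}$, Zhang's function becomes $p^T v + \max\brc*{\tilde\alpha \sqrt{\VAR_p(v)}, \tilde\alpha^2 H}$, which matches $f(p,v,\tilde\alpha)$ in our statement. Monotonicity in each entry of $v$ therefore follows as soon as we can realize every $\alpha \geq 0$ as $\tilde\alpha$ for some admissible $(\delta', n)$.

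For $\alpha > 0$, I would pick $n=1$ and $\delta' = \exp(-9\alpha^2/400)$, which lies in $(0,1)$ and yields $\frac{20}{3}\sqrt{\ln(1/\delta')/n} = \alpha$ exactly, so the two max terms coincide and the two functions are equal pointwise in $v$ and $p$. Invoking \Cref{lemma: bonus monotonicity specific constants} with these choices gives the desired monotonicity. For the boundary case $\alpha = 0$, the max vanishes and $f(p,v,0) = p^T v$ is trivially coordinate-wise non-decreasing because $p \in \Delta^d$ has nonnegative entries. Combining the two cases completes the proof.

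There is essentially no obstacle: the only thing to verify carefully is that the reparameterization $(\delta',n) \mapsto \tilde\alpha$ is surjective onto $(0,\infty)$, which the explicit choice $n=1,\ \delta'=\exp(-9\alpha^2/400)$ makes immediate. Should one prefer a self-contained argument rather than appealing to the cited lemma, the same conclusion follows from a short direct calculation: differentiating $\alpha\sqrt{\VAR_p(v)}$ in $v_i$ gives $\alpha p_i(v_i - p^T v)/\sqrt{\VAR_p(v)}$, so on the first branch of the max one needs $\sqrt{\VAR_p(v)} \geq \alpha(p^T v - v_i)$, which holds because $p^T v - v_i \leq p^T v \leq H$ and the branch condition $\alpha\sqrt{\VAR_p(v)} \geq \alpha^2 H$ gives $\sqrt{\VAR_p(v)} \geq \alpha H$; on the second branch the max is constant in $v$, leaving only the trivial $p_i \geq 0$ contribution from $p^T v$.
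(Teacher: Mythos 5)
Your proof is correct and follows essentially the same route as the paper: both reduce the claim to the cited lemma by choosing $\delta'=\exp\br*{-\tfrac{9}{400}\alpha^2 n}$ (you additionally fix $n=1$, which changes nothing) so that $\tfrac{20}{3}\sqrt{\ln(1/\delta')/n}=\alpha$ and $\tfrac{400}{9}\tfrac{H\ln(1/\delta')}{n}=\alpha^2 H$, and both dispose of $\alpha=0$ trivially. The supplementary direct-differentiation sketch is a nice bonus but is not needed.
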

\begin{proof}
    The claim trivially holds for $\alpha=0$. For any $\alpha>0$, define $\delta' = \exp\brc*{-\frac{9}{400}\alpha^2 n}$; since $\alpha,n>0$, this is a valid choice for \Cref{lemma: bonus monotonicity specific constants}, as it yields $\delta'\in(0,1)$. Then, directly substituting this value leads to the $f(p,v,n)$ defined in \Cref{lemma: bonus monotonicity specific constants}: since it is non-decreasing in $v$, so is $f(p,v,\alpha)$ defined in this corollary.
\end{proof}

\begin{lemma}[\citealt{efroni2021confidence}, Lemma 27]\label{lemma: consequences of freedman's inequality}
Let $\brc{Y_t}_{t\geq 1}$ be a real-valued sequence of random variables adapted to a filtration $\brc*{F_t}_{t\geq 0}$. Assume that for all $t\geq 1$ it holds that $0\leq Y_{t}\leq C$ a.s., and let $T\in \mathbb{N}$. Then each of the following inequalities holds with probability greater than $1-\delta$.
\begin{align*}
   &\sum_{t=1}^T \E[Y_t|F_{t-1}]\leq \br*{1+\frac{1}{2C}} \sum_{t=1}^T Y_t + 2(2C+1)^2 \ln\frac{1}{\delta},\\
   &\sum_{t=1}^T Y_t \leq 2\sum_{t=1}^T \E[Y_t|F_{t-1}] + 4C\ln\frac{1}{\delta}.
\end{align*}
\end{lemma}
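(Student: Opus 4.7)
The plan is to apply Freedman's inequality (Lemma~\ref{lemma: freedmans inequality}) twice, once to a centered martingale difference sequence built from $(Y_t)$ and once to its negation, each time combined with the standard second-moment bound that exploits nonnegativity and boundedness of $Y_t$. The key observation is that, since $0 \le Y_t \le C$ a.s., for either choice of sign the martingale difference $X_t \in \{Y_t - \E[Y_t|F_{t-1}], \E[Y_t|F_{t-1}] - Y_t\}$ satisfies $|X_t|\le C$ and
\begin{align*}
    \E[X_t^2 \mid F_{t-1}] = \VAR(Y_t\mid F_{t-1}) \le \E[Y_t^2 \mid F_{t-1}] \le C\,\E[Y_t \mid F_{t-1}],
\end{align*}
which converts the conditional variance in Freedman's bound into a term that can be absorbed into the conditional-mean sum that appears in both inequalities.

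For the second inequality, I would take $X_t = Y_t - \E[Y_t\mid F_{t-1}]$ and apply Lemma~\ref{lemma: freedmans inequality} with $\eta = 1/C \in (0,1/C]$. This yields, w.p.~$\ge 1-\delta$,
\begin{align*}
    \sum_{t=1}^T Y_t - \sum_{t=1}^T \E[Y_t\mid F_{t-1}] \le \eta C \sum_{t=1}^T \E[Y_t\mid F_{t-1}] + \frac{\ln(1/\delta)}{\eta} = \sum_{t=1}^T \E[Y_t\mid F_{t-1}] + C\ln(1/\delta),
\end{align*}
which rearranges to the stated bound (with room to spare: $C \le 4C$).

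For the first inequality, I would instead take $X_t = \E[Y_t\mid F_{t-1}] - Y_t$ and use Freedman with $\eta = 1/(C(2C+1)) \in (0,1/C]$. The same variance bound gives
\begin{align*}
    (1 - \eta C)\sum_{t=1}^T \E[Y_t\mid F_{t-1}] \le \sum_{t=1}^T Y_t + \frac{\ln(1/\delta)}{\eta},
\end{align*}
and since $1-\eta C = 2C/(2C+1)$, dividing through produces the multiplicative factor $1 + 1/(2C)$ on $\sum_t Y_t$ and the additive factor $1/(\eta(1-\eta C)) = (2C+1)^2/2$ in front of $\ln(1/\delta)$, which again fits under the stated $2(2C+1)^2\ln(1/\delta)$.

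There is no real obstacle here: the only subtle step is picking $\eta$ in the first inequality so that $1/(1-\eta C)$ exactly equals $1 + 1/(2C)$ while keeping $\eta \le 1/C$ so Freedman applies; the rest is algebra, and the looser constants in the statement leave plenty of slack. I would present both parts as one unified proof, deriving the shared second-moment bound once and then specializing the sign of $X_t$ and the value of $\eta$ for each of the two claims.
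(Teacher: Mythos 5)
Your proposal is correct. The paper imports this lemma verbatim from \citet{efroni2021confidence} without reproving it, so there is no in-paper argument to compare against; your derivation from Freedman's inequality (Lemma~\ref{lemma: freedmans inequality}) via the bound $\E[X_t^2\mid F_{t-1}]\le \E[Y_t^2\mid F_{t-1}]\le C\,\E[Y_t\mid F_{t-1}]$, with the two sign choices and the respective tunings $\eta=1/C$ and $\eta=1/(C(2C+1))$, is exactly the standard route and all the constants check out (your additive terms $C\ln(1/\delta)$ and $(2C+1)^2\ln(1/\delta)/2$ sit comfortably below the stated $4C\ln(1/\delta)$ and $2(2C+1)^2\ln(1/\delta)$).
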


\begin{lemma}[e.g., \citealt{merlis2024reinforcement}, Lemma 20]
\label{lemma: count sum bounds}
    Under the convention that $n_h^k(s,B)=\infty$ if $h=H+1$, it holds that 
    The following bounds hold:
    \begin{align*}
        & \sum_{k=1}^K\sum_{i=1}^H \frac{1}{n^{k}_{\ts_i^k}(s_{\ts_i^k}^k,B_{\ts_i^k}^k)\vee 1} \leq SH\ell \br*{2 + \ln(K)}.
    \end{align*}
\end{lemma}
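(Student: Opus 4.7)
The plan is to swap the order of summation: instead of summing over $(k,i)$, group by the triple $(h,s,B) \in [H] \times \Scal \times [\ell_h]$ that each batch corresponds to. Introduce the indicator $X_i^k(h,s,B) = \Ind{\ts_i^k = h,\ s_{\ts_i^k}^k = s,\ B_{\ts_i^k}^k = B}$ and rewrite
\begin{align*}
\sum_{k=1}^K\sum_{i=1}^H \frac{1}{n^{k}_{\ts_i^k}(s_{\ts_i^k}^k,B_{\ts_i^k}^k)\vee 1}
= \sum_{h,s,B}\sum_{k=1}^K\sum_{i=1}^H \frac{X_i^k(h,s,B)}{n^{k}_h(s,B)\vee 1}.
\end{align*}
Note that the convention $n_{H+1}^k(s,B)=\infty$ cleanly zeros out the summands where $\ts_i^k=H+1$ (i.e., the $i^{th}$ batch does not exist), so only genuine batch starts contribute.

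The key structural fact to exploit is that, within a single episode $k$, the batches induced by an ABP partition $[H]$ into consecutive intervals of length $\geq 1$. Consequently, for each fixed step $h$ there is at most one index $i$ with $\ts_i^k = h$, so $\sum_{i=1}^H X_i^k(h,s,B) \in \{0,1\}$ for every $(h,s,B)$. Let $\Kcal(h,s,B) = \{k : \sum_i X_i^k(h,s,B) = 1\}$ and enumerate its elements chronologically as $k_1 < k_2 < \dots < k_N$ with $N = |\Kcal(h,s,B)| \leq K$. By the very definition of $n^k_h(s,B)$ (episodes strictly before $k$ that started a batch at $(h,s,B)$), we have $n^{k_j}_h(s,B) = j-1$.

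Plugging this in, the inner sum collapses to a harmonic-type expression
\begin{align*}
\sum_{k=1}^K \frac{\sum_i X_i^k(h,s,B)}{n^{k}_h(s,B)\vee 1} = \sum_{j=1}^N \frac{1}{(j-1)\vee 1} \leq 1 + \sum_{j=1}^{N-1}\frac{1}{j} \leq 2 + \ln K.
\end{align*}
Finally, summing over all triples and bounding $|\{(h,s,B) : h\in[H], s\in\Scal, B\in[\ell_h]\}| \leq SH\ell$ yields the claimed $SH\ell(2+\ln K)$.

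I do not expect a real obstacle here: the argument is essentially a telescoping/harmonic-sum accounting, and analogous one-step counting bounds are standard in tabular RL. The only point requiring minor care is the uniqueness-per-episode claim ($\sum_i X_i^k(h,s,B) \in \{0,1\}$), which follows directly from the ABP protocol in \Cref{alg:adaptive batching}: since each $B_h \geq 1$, the sequence $\ts_1^k < \ts_2^k < \dots$ is strictly increasing until it exceeds $H$, so no step $h$ can be the starting point of two batches in the same episode.
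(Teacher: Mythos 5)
Your proof is correct and follows essentially the same route as the paper's: both regroup the sum by the triple $(h,s,B)$, use the fact that at most one batch can start at a given step per episode so the counter increments by exactly one on each contributing episode, and then bound the resulting harmonic sum by $2+\ln K$ before multiplying by the $SH\ell$ possible triples. No gaps.
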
    
\begin{proof}
    The claim is standard and is provided due to its simplicity and for completeness. By definition, we can omit from the summation all terms where $\ts_i^k=H+1$. Since for all $h\in[H]$, there cannot be two batches starting at this step during the same episode, we can write
    \begin{align*}
        \sum_{k=1}^K\sum_{i=1}^H \frac{1}{n^{k}_{\ts_i^k}(s_{\ts_i^k}^k,B_{\ts_i^k}^k)\vee 1} 
        &= \sum_{k=1}^K\sum_{h=1}^H\sum_{s\in\Scal}\sum_{B=1}^{\ell_h} \frac{\Ind{\exists i\in[H]:\ts_i^k=h}\Ind{s_h^k=s,B_h^k=B}}{n^{k}_{h}(s,B)\vee 1} \\
        & \overset{(1)}{=}\sum_{h=1}^H\sum_{s\in\Scal}\sum_{B=1}^{\ell_h}\sum_{m=0}^{n_h^{K}(s,B)} \frac{1}{m\vee 1} \\
        & \overset{(2)}{\leq}\sum_{h=1}^H\sum_{s\in\Scal}\sum_{B=1}^{\ell_h}\br*{2 + \ln\br*{n_h^{K}(s,B) \vee 1}}\\
        & \overset{(3)}{\leq} \sum_{h=1}^H\sum_{s\in\Scal}\sum_{B=1}^{\ell_h}\br*{2 + \ln\br*{K}}\\
        & \leq SH\ell \br*{2 + \ln(K)}.
    \end{align*}
    Relation $(1)$ holds since the counts start at zero and increase by $1$ every time that a batch of size $B$ starts at step $h$ and state $s$. Relation $(2)$ bounds the harmonic sum. Finally, $(3)$ is since any tuple $(h,s,B)$ can be visited at most once per episode
\end{proof}
\clearpage

\section{Failure of Existing Approaches}
\subsection{Fixed Batching Policies}
\label{appendix: failure fixed batching}
\failureFixedBatch*
\begin{proof}
    Let $s_r$ be the root of a complete tree of depth $\ell$ with $A$ descendants at each node, where $\ell=1$ corresponds to a single root node. We fix the transitions to deterministically navigate agents through the tree and then lead to a terminal non-rewarding state. The only rewards are located at the $A^{\ell-1}$ leaves (for each of the $A$ actions) and are independent Bernoulli random variables of mean $A^{-\ell}$. Notably, a lookahead policy at $s_r$ can see whether a unit reward was realized at some leaf and execute a policy that collects it. This strategy obtains a reward of $1$ iff such a reward was realized in at least one of the leaves: w.p. at least $1-(1-A^{-\ell})^{A^{\ell}}\geq 1-1/e \ge 1/2$. Finally, add an initial state $s_0$ to be a non-rewarding state that deterministically leads to $s_r$; starting at this state clearly does not affect the optimal lookahead value. We now show that regardless of the batching horizon $B$, any fixed batching policy $\pi^B$ fails.
    \begin{itemize}
        \item If $B\le \ell/2$, then $\pi^B$ observes the rewards at a distance of at most $B$ steps, thus limiting to only $A^B$ reachable rewards to which the policy can actively navigate. By the union bound, the probability of a reward being realized in at least one of these leaves is at most $A^BA^{-\ell}\leq A^{-\ell/2}$, so this is the maximal reward that may be collected by such a policy.
        \item If $B> \ell/2$, then $\pi^B$ does not observe any information at the first batch and starts the second batch at some node of depth $B$: from which it can reach at most $A^{\ell-B+1}$ rewards. Thus, the value from this state is upper bounded by the probability of having a reward in at least one of these leaves: at most $A^{\ell-B+1}A^{-\ell}\leq A^{-\ell/2+1}$. 
    \end{itemize}
\end{proof}

\subsection{Model Predictive Control}
\label{appendix: failure MPC}
An $\ell$-step model predictive control is defined by the set of values $V_h(s)$ for all $h\in[H]$ and $s\in\Scal$. For any step $h$, being in a state $s$ and upon observing the $\ell$-step lookahead $I_{h,\ell_h}(s)$, the scheme solves for the deterministic policy
\begin{align}
    \phi\in\argmax_{\phi\in\Pi_{det}}\brc*{\sum_{t=h}^{h+\ell_h-1}\Rla_{t\vert h}^{k}(s,\phi,I) +V(\sla_{h+\ell_h\vert h}(s,\phi,I))},
\end{align}
and executes the first action in the plan $\phi_h(s)$. Then, after the environment transitions to $s_{h+1} = \sla_{h+1\vert h}(s,\phi,I)$ and the agent observes an additional step of lookahead, the agent recalculates the plan $\phi$ and again executes only $\phi_{h+1}(s_{h+1})$. This process repeats until the end of the interaction: planning using the lookahead for $\ell$-steps and executing the first action. 

We now claim that any backwards-induced value calculation scheme must be exponentially suboptimal. A reasonable definition of a backwards-induced scheme is to assume that the value $V_h(s)$ can only depend on events starting from step $h$ and state $s$. In fact, for the proof, we will only require a much milder assumption: that if a state $s$ in step $h$ deterministically leads to a non-rewarding terminal state, its value only depends on $h$ and on the reward distribution of the actions. Formally, we say that $s_T$ is a terminal state if for all $h,a$, $\Pr\brc*{s_{h+1}=s\vert s_h=s_T,a}=1$ and $R_h(s_T,a)=0$. Then, we say that a state $s$ is a semi-terminal state in step $h$ if $\Pr\brc*{s_{h+1}=s_T\vert s_h=s,a}=1$: it deterministically leads to a terminal state (with potentially random rewards). In particular, we say w.l.o.g. that episodes always reach a terminal state at step $h=H+1$; therefore, all states are semi-terminal in step $h=H$. 
We formally assume that the value calculation scheme obeys the following assumption:
\begin{assumption}
    \label{assumption:mpc}
    Let $\D$ be the set of all (Borel-measurable) distributions over $[0,1]^A$. For any $h\in[H]$, there exists a function $f_h:\D\mapsto \R_+$ such that if $s$ is a semi-terminal state in step $h$ with a joint reward distribution $\Rcal_h(s)\in\D$ for its actions, it holds that $V_h(s) = f_h(\Rcal_h(s))$.
\end{assumption}
\begin{claim}
    Fix $\ell\ge2$, $A\ge2,H\ge \ell+1$ and let $\pi$ be an MPC agent whose values are computed by a procedure that follows \Cref{assumption:mpc}. Then, there exists an environment with $S\le A^{\ell}+\ell+1$  and deterministic transitions for which $\frac{V^\pi_1(s_1)}{V^*_1(s_1)}\leq A^{1-\ell/2}$, where $V^*(s_1)$ is the value of the optimal $\ell$-step lookahead policies. In particular, for the constructed environments, there exists an MPC agent that achieves the optimal value $V_1^*(s_1)$.
\end{claim}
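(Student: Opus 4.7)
The plan is to realize the tree-and-line sketch and exploit Assumption~\ref{assumption:mpc} to collapse the MPC's view of two reward types regardless of where in the environment they appear. From $s_1$, one action begins a deterministic chain whose unique final state (the ``line end'') sits at step $H$; the other enters the root of a complete $A$-ary tree whose $A^{\ell-1}$ leaves also sit at step $H$. All pre-$H$ transitions are deterministic and non-rewarding, and each step-$H$ state (line end and tree leaves alike) is semi-terminal with $A$ actions going to a common absorbing terminal. I let each semi-terminal slot be of one of two types: $s^D$ (every action deterministically gives $A^{-\ell/2}$) or $s^B$ (every action independently gives $\mathrm{Ber}(A^{-\ell})$). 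A direct count gives a state space of size at most $A^{\ell}+\ell+1$; when $H>\ell+1$, the terminal self-loops with zero reward to pad the remaining steps.

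By Assumption~\ref{assumption:mpc}, any step-$H$ semi-terminal state is assigned exactly one of two values $v_B:=f_H(s^B\text{-distribution})$ or $v_D:=f_H(s^D\text{-distribution})$, independent of its location in the environment. Because all rewards at steps $1$ through $\ell$ are zero and transitions are deterministic, the MPC's step-$1$ plan (which extends exactly to step $H$) reduces to maximizing $V_H(s_H)$ over reachable endpoints, so the critical decision is a comparison between the single line-end value and the uniform leaf value. Intermediate-state values, which Assumption~\ref{assumption:mpc} leaves unconstrained, never enter this comparison. I then choose the adversarial environment by case analysis: if $v_B\ge v_D$, place $s^B$ at the line-end and $s^D$ at every leaf; if $v_B<v_D$, swap them. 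In both cases the MPC commits to the line, and ties are broken against the agent (or resolved by a vanishing perturbation of one type's mean, which preserves both scalar assignments and the ratio bound).

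The suboptimality ratio is then a short calculation. In the first case, the MPC reaches $s^B$ at step $H$ and, via its $1$-step replanning there, obtains at most $1-(1-A^{-\ell})^A\le A^{-\ell+1}$, while the tree branch would yield the deterministic $A^{-\ell/2}$, giving ratio $A^{-\ell+1}/A^{-\ell/2}=A^{1-\ell/2}$. In the second case the MPC gets $A^{-\ell/2}$ on the line, whereas an adaptive batching policy with $B_1=1$ followed by $B_2=\ell$ enters the tree and, with $\ell$-step lookahead from the root seeing all $A^{\ell}$ independent leaf-action Bernoullis, navigates to a realized unit reward with probability $\ge 1-(1-A^{-\ell})^{A^{\ell}}\ge 1-1/e\ge 1/2$; the ratio is at most $2A^{-\ell/2}\le A^{1-\ell/2}$ since $A\ge 2$. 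For the optimal-MPC existence claim, in each constructed environment I exhibit a valid $f_H$ that makes the MPC commit to the better branch: in the first env, set $v_D=A^{-\ell/2}$ and $v_B=A^{-\ell+1}$ so MPC picks the tree and reaches a deterministic $s^D$ leaf; in the second env, set $v_B$ to any value strictly above $A^{-\ell/2}$ (for instance $1/2$) and $v_D=A^{-\ell/2}$ so MPC picks the tree and then attains optimality by replanning at the root. Both choices depend only on the reward distribution, hence satisfy Assumption~\ref{assumption:mpc}.

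The main obstacle will be ruling out that Assumption~\ref{assumption:mpc}'s silence on non-semi-terminal values could be exploited by some backward-induced scheme to alter the step-$1$ decision. This is resolved by the observation that the MPC's step-$1$ plan has length exactly $\ell$ (using the $\ell$-step lookahead), every intermediate reward is revealed to be zero, and only the endpoint value $V_H(s_H)$ at a semi-terminal state is invoked; the free intermediate values cannot rescue the MPC. A secondary matter is verifying the state count and handling the $H>\ell+1$ padding, both of which are cosmetic as described.
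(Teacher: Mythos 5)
Your construction and case analysis follow the paper's proof closely (tree versus line, two semi-terminal reward types $s^B,s^D$, and a case split on the scalar values the backward-induction scheme assigns them), but there are two genuine gaps. First, the placement of the semi-terminal states at step $H$ breaks the argument whenever $H>\ell+1$. The MPC's plan from $s_1$ uses $\ell_1=\ell$ steps of lookahead, so it terminates at step $\ell+1$, not at step $H$; if the line end and the leaves sit at step $H>\ell+1$, the step-$1$ comparison is between $V_{\ell+1}(\cdot)$ evaluated at \emph{interior} tree and line states, which are not semi-terminal and hence completely unconstrained by \Cref{assumption:mpc} --- the scheme could assign the tree's interior an enormous value and behave optimally. (Your tree also could not have only $A^{\ell-1}$ leaves if they sat at step $H$.) The fix is what the paper does and what your padding remark gestures at: put the semi-terminal states at step $\ell+1$ (so the two values are $f_{\ell+1}(\cdot)$, not $f_H(\cdot)$), and let a zero-reward terminal state absorb steps $\ell+2,\dots,H$. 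Then the $\ell$-step plan from $s_1$ ends exactly on the semi-terminal states and only their values enter the decision.

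Second, the tie case $v_B=v_D$ is not handled. The claim quantifies over \emph{all} MPC agents, so the agent's tie-breaking rule is fixed before the adversary chooses the environment; an agent that breaks ties toward the tree defeats both of your environments, and you provide no third one. Your proposed fix --- a vanishing perturbation of one type's reward mean --- does not work: $f_{\ell+1}$ is an arbitrary (not necessarily continuous) function of the reward distribution, so perturbing the distribution of $s^B$ or $s^D$ changes the input to $f_{\ell+1}$ and the adversary loses all control over the assigned value. The paper's resolution is to perturb something $f$ never sees: it adds a small deterministic reward $R^L=0.25A^{-\ell/2}$ on the transition toward the line (a non-semi-terminal step inside the lookahead window), which strictly tips the comparison $R^L+V^D>0+V^B$ while leaving both semi-terminal distributions, and hence both assigned values, untouched; the extra $0.25A^{-\ell/2}$ collected on the line is absorbed into the same $2A^{-\ell/2}\le A^{1-\ell/2}$ ratio bound. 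The rest of your argument --- the per-branch value calculations, the ABP witness for $V^*_1(s_1)$, the state count, and the existence of an optimal MPC agent --- matches the paper and is fine.
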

\begin{proof}
    We create two 'state types' $s^B,s^D$ that will be semi-terminal states in step $\ell+1$. In states of type $s^D$, all actions yield a deterministic reward of $A^{-\ell/2}$ and transition to a terminal state $s_T$, while in states of type $s^B$, the rewards are independent Bernoulli $Ber\br*{A^{-\ell}}$.  By \Cref{assumption:mpc}, there exist two values $V^B,V^D$, such that for all states $s$ of type $s^B$, it holds that $V_{\ell+1}(s)=V^B$, and for all states $s$ of type $s^D$, it holds that $V_{\ell+1}(s)=V^D$. 
    
    We now describe the environments in which we will analyze their value. The initial state is $s_1$, and it deterministically leads (via two arbitrary actions) to states $s^{Tr},s^L$ with respective deterministic rewards $R^{Tr},R^{L}$. From $s^{Tr}$, the environment is a complete $A$-ary tree of depth $\ell$; in particular, it requires less than $A^\ell-1$ states and has $A^{\ell-1}$ leaves (with $A$ actions each). There are no rewards in the middle of the tree, and the leaves will be either of type $s^B$ or $s^D$. On the other hand, from $s^L$, the environment forms a line of length $\ell$ where the only rewarding state is the last one (of type $s^B$ or $s^D$). After traversing either the tree or line, the environment leads to a terminal state $s^T$: this environment can be represented using at most $A^\ell+\ell+1$ states. We now fix the rewards configuration depending on the following cases:
    \begin{itemize}
        \item If $V^B>V^D$: Set $R^{Tr}=R^L=0$, and put the state $s^B$ in the end of the line and $s^D$ in the leaves of the tree. Since there are no rewards until step $\ell+1$, the MPC agent will go to the line, which has a higher value $V_{\ell+1}(s^B)=V^B$, and will collect a reward of at most $V^{\pi}_1(s_1)=A\cdot A^{-\ell}=A^{-\ell+1}$. However, by going to the tree, any policy will earn a reward of $A^{-\ell/2}$: therefore, the optimal value is $V^*_1(s_1)=A^{-\ell/2}$ and $\frac{V^\pi_1(s_1)}{V^*_1(s_1)}\leq A^{1-\ell/2}$. 
        \item  If $V^D>V^B$: Set $R^{Tr}=R^L=0$, and put the state $s^D$ in the end of the line and $s^B$ in the leaves of the tree. The agent $\pi$ will thus navigate to the line and collect a reward of $A^{-\ell/2}$. However, upon reaching the root of the tree, all the rewards in all of the leaves and their actions are revealed. In particular, it is optimal to navigate to any leaf that has a realized unit reward, and the probability it happens is $1-(1-A^{-\ell})^{A^\ell}\ge 1-1/e$. In particular, this value is higher than $A^{-\ell/2}$ for $A,\ell\ge2$, which makes this choice optimal. 
        Therefore, it holds that $\frac{V^\pi_1(s_1)}{V^*_1(s_1)}\leq 2A^{-\ell/2}\leq A^{1-\ell/2}$. 
        \item If $V^D=V^B$: we now set $R^{Tr}=0$ and $R^L=0.25A^{-\ell/2}$, and again put the state $s^D$ in the end of the line and $s^B$ in the leaves of the tree.  Since the values are equal, the reward at the beginning will make the MPC agent choose the line, leading to a reward of $1.25A^{-\ell/2}$. Yet, traversing the tree will still lead to a reward of at least $1-1/e$, and so it is still optimal (for $A,\ell\ge2$) and $\frac{V^\pi_1(s_1)}{V^*_1(s_1)}\leq 2A^{-\ell/2}\leq A^{1-\ell/2}$. 
    \end{itemize}
    Combined, we get that regardless of the values $V^B,V^D$, there exists an environment for which $\frac{V^\pi_1(s_1)}{V^*_1(s_1)}\leq A^{1-\ell/2}$. Moreover, in all three cases, the optimal policy decides to traverse the tree and not the line: this behavior can be achieved using an MPC agent by putting a unit value at the leaves of the tree and a zero value at the end of the line: there exists an optimal MPC agent that collects the maximal value.
\end{proof}

\end{document}